\def\eqref#1{equation~\ref{#1}}
\def\1{\bm{1}}
\def\ra{{\textnormal{a}}}
\def\ro{{\textnormal{o}}}
\def\rr{{\textnormal{r}}}
\def\rs{{\textnormal{s}}}
\def\rva{{\mathbf{a}}}
\def\vtheta{{\bm{\theta}}}
\def\va{{\bm{a}}}
\def\vg{{\bm{g}}}
\def\vq{{\bm{q}}}
\def\vx{{\bm{x}}}
\def\mH{{\bm{H}}}
\DeclareMathAlphabet{\mathsfit}{\encodingdefault}{\sfdefault}{m}{sl}
\SetMathAlphabet{\mathsfit}{bold}{\encodingdefault}{\sfdefault}{bx}{n}
\newcommand{\E}{\mathbb{E}}
\DeclareMathOperator*{\argmax}{arg\,max}
\DeclareMathAlphabet\mathbfcal{OMS}{cmsy}{b}{n}
\newenvironment{smalleralign}[1][\small]
 {\par\nopagebreak\leavevmode\vspace*{-\baselineskip}%
  \skip0=\abovedisplayskip
  #1%
  \def\maketag@@@##1{\hbox{\m@th\normalfont\normalsize##1}}%
  \abovedisplayskip=\skip0
  \align}
 {\endalign\ignorespacesafterend}
\title{
Trust Region Policy Optimisation in \\Multi-Agent Reinforcement Learning
}
\begin{document}

\author{Jakub Grudzien Kuba$^{1,2}$\thanks{First two authors contribute equally. Code is available at \url{https://github.com/cyanrain7/Trust-Region-Policy-Optimisation-in-Multi-Agent-Reinforcement-Learning}.} , Ruiqing Chen$^{3,*}$,   Muning Wen$^4$, Ying Wen$^4$,\\ 
\textbf{Fanglei Sun$^3$, Jun Wang$^{5}$,  Yaodong Yang$^{6,\dag}$} \\
$^1$University of Oxford, $^2$Huawei  R\&D UK, $^3$ShanghaiTech University,\\ $^4$Shanghai Jiao Tong University
$^5$University College London \\ $^6$Institute for AI, Peking University \& BIGAI \\
$^\dag$Corresponding to: \texttt{yaodong.yang@pku.edu.cn}
}

%

\newcommand{\fix}{\marginpar{FIX}}
\newcommand{\new}{\marginpar{NEW}}


\maketitle

\begin{abstract}
\vspace{-5pt}
Trust region methods rigorously enabled reinforcement learning (RL) agents to learn monotonically improving policies, leading to superior performance on a variety of tasks. 
Unfortunately, when it comes to multi-agent reinforcement learning (MARL),   the  property of monotonic improvement may not simply apply; this is because agents, even in cooperative games, could have conflicting directions of policy updates. 
As a result, achieving a guaranteed improvement  on the joint policy where each agent acts   individually remains an open challenge. 
In this paper, we extend the theory of trust region learning to cooperative MARL. 
Central to our findings are the \textsl{multi-agent advantage decomposition lemma} and the \textsl{sequential policy update scheme}. 
Based on these, we develop \textsl{Heterogeneous-Agent Trust Region Policy Optimisation} (HATPRO) and \textsl{Heterogeneous-Agent Proximal Policy Optimisation} (HAPPO) algorithms.  
Unlike many existing MARL algorithms, HATRPO/HAPPO do not need agents to share parameters, nor do they need any restrictive assumptions on decomposibility of  the joint value function. 
Most importantly,  we  justify  in theory the monotonic improvement property  of HATRPO/HAPPO. 
We evaluate the proposed methods on a series of Multi-Agent MuJoCo and StarCraftII tasks. Results show that HATRPO and HAPPO  significantly outperform strong baselines such as IPPO, MAPPO and MADDPG on all tested tasks, thereby establishing a new state of the art. 

\end{abstract}

\section{Introduction}
\vspace{-5pt} 
Policy gradient (PG) methods have played a major role in recent developments of reinforcement learning (RL) algorithms \citep{silver2014deterministic, trpo, sac}. 
Among the many PG variants, \textsl{trust region learning} \citep{kakade2002approximately}, with two typical embodiments of \textsl{Trust Region Policy Optimisation}  (TRPO) \citep{trpo} and \textsl{Proximal Policy Optimisation}  (PPO) \citep{ppo} algorithms, offer supreme empirical performance in both discrete and continuous RL problems \citep{duan2016benchmarking,mahmood2018benchmarking}.   
The effectiveness of  trust region methods largely stems from their theoretically-justified policy iteration procedure.   
By optimising the policy within a \emph{trustable} neighbourhood of the current policy, thus  avoiding making aggressive  updates towards risky directions,  
trust region learning enjoys the guarantee of monotonic performance improvement  at every iteration. 

In multi-agent reinforcement learning (MARL) settings \citep{yang2020overview},  naively applying policy gradient methods by considering other agents as a part of the environment   can lose its effectiveness.  
This is intuitively clear: once a learning agent updates its policy, so do its opponents; this however  changes the  loss landscape of the learning agent, thus harming the improvement effect from the PG update.  
As a result, applying independent PG updates in MARL offers poor convergence property  \citep{claus1998dynamics}. 
To address this, a learning paradigm named \textsl{centralised training with decentralised execution}  (CTDE) \citep{maddpg,foerster2018counterfactual,zhou2021malib} was developed. In CTDE, each agent is equipped with a joint value function which, during training, has access to  the global state and  opponents' actions. With the help of the centralised value function that   accounts for the non-stationarity caused by others, each agent  adapts its policy parameters accordingly.  
As such, the CTDE paradigm allows a straightforward   extension of single-agent PG theorems \citep{sutton:nips12, silver2014deterministic} to multi-agent scenarios \citep{maddpg, kuba2021settling,spg-david}. Consequently, fruitful multi-agent policy gradient algorithms have been developed  \citep{foerster2018counterfactual, peng1703multiagent,zhang2020bi,wen2018probabilistic,gr2,yang2018mean}.

Unfortunately, existing CTDE methods offer no solution of how to perform trust region learning in  MARL. 
Lack of such an extension impedes  agents from learning monotonically improving policies in a stable manner. 
Recent attempts such as IPPO \citep{de2020independent} and  MAPPO \citep{mappo}  have been proposed to fill such a  gap; however, these methods  are designed for agents that are  \emph{homogeneous}  (i.e., sharing the same action space and policy parameters),  which largely limits their applicability and potentially harm the performance. 
As we  show in Proposition \ref{prop:suboptimal} later, parameter sharing could suffer from an exponentially-worse suboptimal outcome.   
On the other hand, although  IPPO/MAPPO can be practically applied in a non-parameter sharing way, it still lacks the essential theoretical property of trust region learning, which is the monotonic improvement guarantee.  

In this paper, we propose the first theoretically-justified  trust region learning framework in MARL. 
The key  to our findings are the \textsl{multi-agent advantage decomposition lemma} and the \textsl{sequential policy update scheme}. 
With the advantage decomposition lemma, we introduce a multi-agent policy iteration procedure that enjoys the monotonic improvement guarantee. 
To implement such a procedure,  we propose two practical  algorithms: \textsl{Heterogeneous-Agent Trust Region Policy Optimisation} (HATRPO) and \textsl{Heterogeneous-Agent Proximal Policy Optimisation} (HAPPO). 
HATRPO/HAPPO adopts the sequential update scheme,   which saves the cost of maintaining a centralised critic for each agent in CTDE. 
Importantly, 
HATRPO/HAPPO does not require homogeneity of  agents, nor any other restrictive  assumptions on the decomposibility of the joint Q-function \citep{rashid2018qmix}.   
We evaluate HATRPO and HAPPO on benchmarks of StarCraftII  and Multi-Agent MuJoCo  against strong baselines such as MADDPG \citep{lowe2017multi}, IPPO \citep{de2020deep} and MAPPO \citep{mappo};  results clearly demonstrate its state-of-the-art  performance across all tested tasks.
\vspace{-5pt}
\section{Preliminaries}
\vspace{-5pt}
In this section, we first introduce problem formulation and notations for MARL, and then briefly review trust region learning in RL and discuss the difficulty of extending it to MARL. We end  by surveying existing MARL work that relates to trust region methods and show their limitations. 
\vspace{-5pt}
\subsection{Cooperative MARL Problem Formulation and Notations}
\vspace{-5pt}
\label{subsec:marl}
We consider a  Markov game \citep{littman1994markov}, which is defined by a tuple $\langle \mathcal{N}, \mathcal{S}, \mathbfcal{A}, P, r, \gamma \rangle$. Here, $\mathcal{N}= \{1, \dots, n\}$ denotes the set of agents, $\mathcal{S}$ is the finite state space, $\mathbfcal{A}= \prod_{i=1}^{n}\mathcal{A}^{i}$ is the product of finite action spaces of all agents, known as the joint action space, $P:\mathcal{S}\times\mathbfcal{A}\times\mathcal{S}\to[0, 1]$ is the transition probability function, $r:\mathcal{S}\times\mathbfcal{A}\to\mathbb{R}$ is the reward function, and $\gamma\in[0, 1)$ is the discount factor. The agents interact with the environment according to the following protocol: at time step $t\in\mathbb{N}$, the agents are at state $\rs_{t}\in\mathcal{S}$; every agent $i$ takes an action $\ra^{i}_{t} \in\mathcal{A}^{i}$, drawn from its policy $\pi^{i}(\cdot|\rs_{t})$, which together with other agents' actions gives a joint action $\rva_{t} = (\ra^{1}_{t}, \dots, \ra^{n}_{t}) \in \mathbfcal{A}$, drawn from the joint policy $\boldsymbol{\pi}(\cdot|\rs_{t})= \prod_{i=1}^{n}\pi^{i}(\cdot^i|\rs_{t})$; the agents receive a joint reward $\rr_{t}=r(\rs_{t}, \rva_{t})\in\mathbb{R}$, and move to a state $\rs_{t+1}$ with probability $P(\rs_{t+1}|\rs_{t}, \rva_{t})$. The joint policy $\boldsymbol{\pi}$, the transition probabililty function $P$, and the initial state distribution $\rho^{0}$, induce a marginal state distribution at time $t$, denoted by $\rho^{t}_{\boldsymbol{\pi}}$. We define an (improper) marginal state distribution $\rho_{\boldsymbol{\pi}} \triangleq \sum_{t=0}^{\infty}\gamma^{t}\rho^{t}_{\boldsymbol{\pi}}$. The state value function and the state-action value function are defined: {$V_{\boldsymbol{\pi}}(s) \triangleq         \E_{\rva_{0:\infty}\sim\boldsymbol{\pi}, \rs_{1:\infty}\sim P}\big[ \sum_{t=0}^{\infty}\gamma^{t}\rr_{t}
\big| \ \rs_{0} = s \big]$} and 
{$Q_{\boldsymbol{\pi}}(s, \va)  \triangleq \E_{\rs_{1:\infty}\sim P, \rva_{1:\infty}\sim\boldsymbol{\pi}}\big[ \sum_{t=0}^{\infty}\gamma^{t}\rr_{t} 
    \big| \ \rs_{0} = s,  \ \rva_{0} = \va \big]$}.
The advantage function is written as $A_{\boldsymbol{\pi}}(s, \va) \triangleq Q_{\boldsymbol{\pi}}(s, \va) - V_{\boldsymbol{\pi}}(s)$. In this paper, we consider a \emph{fully-cooperative} setting where all agents share the same reward function, aiming to maximise the expected total reward:  
\vspace{-2pt}
\begin{smalleralign}
    J(\boldsymbol{\pi}) \triangleq \E_{\rs_{0:\infty}\sim \rho^{0:\infty}_{\boldsymbol{\pi}}, \rva_{0:\infty}\sim\boldsymbol{\pi}}\left[ \sum_{t=0}^{\infty}\gamma^{t}\rr_{t} \right].\nonumber
\end{smalleralign}
Throughout this paper, we pay close attention to the contribution to performance from  different subsets of agents. 
Before proceeding to our methods,  we
introduce following novel definitions. 
\begin{restatable}{definition}{multiagentfunctions}
    Let $i_{1:m}$  denote an ordered subset $\{i_{1}, \dots, i_{m}\}$ of $\mathcal{N}$, and let $-i_{1:m}$ refer to its complement. We write $i_k$ when we refer to the $k^{\text{th}}$ agent in the ordered subset. Correspondingly, the \textsl{multi-agent state-action value function} is defined as 
    \begin{smalleralign}
        Q_{\boldsymbol{\pi}}^{i_{1:m}}\left(s, \va^{i_{1:m}}\right) \triangleq \E_{\rva^{-i_{1:m}}\sim\boldsymbol{\pi}^{-i_{1:m}}}\left[ Q_{\boldsymbol{\pi}}\left(s, \va^{i_{1:m}}, \rva^{-i_{1:m}}\right) \right],\nonumber
    \end{smalleralign}
    and for disjoint sets $j_{1:k}$ and $i_{1:m}$, the \textsl{multi-agent advantage function} is 
    \begin{smalleralign}
            \label{eq:single-agent-advantage}
         A_{\boldsymbol{\pi}}^{i_{1:m}}\left(s, \va^{j_{1:k}}, \va^{i_{1:m}} \right) \triangleq  Q_{\boldsymbol{\pi}}^{j_{1:k}, i_{1:m}}\left( s, \va^{j_{1:k}}, \va^{i_{1:m}}\right) 
        - Q_{\boldsymbol{\pi}}^{j_{1:k}}\left( s, \va^{j_{1:k}}\right).
         	\vspace{-15pt}
    \end{smalleralign}
\end{restatable}
Hereafter, 
the joint policies $\boldsymbol{\pi} = (\pi^1, \dots, \pi^n)$ and $\bar{\boldsymbol{\pi}} = (\bar{\pi}^1, \dots, \bar{\pi}^n)$ shall be thought of as the ``current", and the ``new" joint policy that agents update towards, respectively. 
\vspace{-5pt}
\subsection{Trust Region Algorithms in Reinforcement Learning}
\vspace{-5pt}
Trust region methods such as TRPO \citep{trpo} were proposed in single-agent RL with an aim of achieving a monotonic improvement of $J(\pi)$ at each iteration.  Formally, it can be described by the following theorem.   
\begin{restatable}{theorem}{trpoinequality}\citep[Theorem 1]{trpo}
    \label{theorem:trpo-ineq}
    Let $\pi$  be the current policy and $\bar{\pi}$ be the next candidate policy. We define
        $ L_{\pi}(\bar{\pi}) = J(\pi) + \E_{\rs\sim\rho_{\pi}, \ra\sim\bar{\pi}}\left[ A_{\pi}(s, a) \right], 
         \text{{\normalfont D}}_{\text{KL}}^{\text{max}}(\pi, \bar{\pi}) = \max_{s}\text{{\normalfont D}}_{\text{KL}}\left( \pi(\cdot|s), \bar{\pi}(\cdot|s) \right).$
   Then the  inequality of 
    \begin{align}
        J(\bar{\pi}) \geq L_{\pi}(\bar{\pi}) - C\text{{\normalfont D}}_{\text{KL}}^{\text{max}}\big(\pi, \bar{\pi}\big)
        \vspace{-5pt}
    \end{align}
   holds, where $C = \frac{4\gamma\max_{s, a}|A_{\pi}(s, a)|}{(1-\gamma)^{2}}$.
\end{restatable}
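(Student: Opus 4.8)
The plan is to reduce the statement to two ingredients: an exact rewriting of $\mathcal{J}(\bar{\pi})$ in terms of the old advantage $A_{\pi}$, and a perturbation bound that controls the error of replacing the inaccessible state distribution $\rho_{\bar\pi}$ with the available $\rho_{\pi}$. First I would establish the Kakade--Langford advantage identity
\begin{align}
\mathcal{J}(\bar{\pi}) = \mathcal{J}(\pi) + \E_{\rs\sim\rho_{\bar\pi},\, \ra\sim\bar{\pi}}\left[ A_{\pi}(s,a) \right],\nonumber
\end{align}
which follows by writing $A_{\pi}(s,a) = \E_{\rs'\sim P}\!\left[ r(s,a) + \gamma V_{\pi}(s') - V_{\pi}(s) \right]$, inserting it along a trajectory sampled from $\bar{\pi}$, and telescoping the discounted $\gamma^{t}V_{\pi}(s_t)$ terms so that only $\mathcal{J}(\bar{\pi}) - \E_{\rs_0}[V_{\pi}(s_0)] = \mathcal{J}(\bar{\pi}) - \mathcal{J}(\pi)$ survives. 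Comparing this with the definition of $L_{\pi}(\bar{\pi})$, the two quantities differ only in the state distribution against which $\bar{A}(s) \triangleq \E_{\ra\sim\bar\pi}[A_{\pi}(s,a)]$ is averaged, giving
\begin{align}
\mathcal{J}(\bar{\pi}) - L_{\pi}(\bar{\pi}) = \sum_{t=0}^{\infty} \gamma^{t}\left( \E_{\rs\sim\rho^{t}_{\bar\pi}}[\bar{A}(s)] - \E_{\rs\sim\rho^{t}_{\pi}}[\bar{A}(s)] \right).\nonumber
\end{align}

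Next I would bound each summand by a coupling argument. Let $\alpha \triangleq \max_{s}D_{\mathrm{TV}}\big(\pi(\cdot|s),\bar{\pi}(\cdot|s)\big)$ and construct an $\alpha$-coupled pair of policies, i.e. a joint law over action pairs $(a,\bar{a})$ under which $a\sim\pi$, $\bar{a}\sim\bar{\pi}$, and $\Pr(a\neq\bar{a}\mid s)\leq\alpha$ for every $s$. Two observations drive the estimate. Since $\E_{\ra\sim\pi}[A_{\pi}(s,a)]=0$, I can write $\bar{A}(s)$ as the coupled expectation of $A_{\pi}(s,\bar{a}) - A_{\pi}(s,a)$, which vanishes on agreement and is at most $2\max_{s,a}|A_{\pi}(s,a)|$ in magnitude on disagreement, whence $|\bar{A}(s)| \leq 2\alpha\,\epsilon$ with $\epsilon\triangleq\max_{s,a}|A_{\pi}(s,a)|$. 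Running both chains with shared coupling randomness from a common start, their states at time $t$ coincide unless the policies have disagreed at least once, an event of probability at most $1-(1-\alpha)^{t}$; on the complementary event the two inner expectations cancel. Combining the two observations yields the per-timestep bound $\big|\E_{\rho^{t}_{\bar\pi}}[\bar{A}] - \E_{\rho^{t}_{\pi}}[\bar{A}]\big| \leq 4\alpha\,\epsilon\,(1-(1-\alpha)^{t})$.

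Finally I would sum the series and convert to the KL divergence. Using $\sum_{t=0}^{\infty}\gamma^{t}(1-(1-\alpha)^{t}) = \frac{1}{1-\gamma} - \frac{1}{1-\gamma(1-\alpha)} = \frac{\gamma\alpha}{(1-\gamma)\big(1-\gamma(1-\alpha)\big)} \leq \frac{\gamma\alpha}{(1-\gamma)^{2}}$, I obtain $|\mathcal{J}(\bar{\pi}) - L_{\pi}(\bar{\pi})| \leq \frac{4\gamma\epsilon}{(1-\gamma)^{2}}\,\alpha^{2}$. Pinsker's inequality gives $\alpha \leq \max_{s}\sqrt{\tfrac{1}{2}D_{\mathrm{KL}}\big(\pi(\cdot|s),\bar{\pi}(\cdot|s)\big)} = \sqrt{\tfrac{1}{2}D_{\text{KL}}^{\text{max}}(\pi,\bar{\pi})}$, hence $\alpha^{2} \leq \tfrac{1}{2}D_{\text{KL}}^{\text{max}}(\pi,\bar{\pi}) \leq D_{\text{KL}}^{\text{max}}(\pi,\bar{\pi})$. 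Substituting and keeping only the favourable side of the absolute value yields $\mathcal{J}(\bar{\pi}) \geq L_{\pi}(\bar{\pi}) - C\,D_{\text{KL}}^{\text{max}}(\pi,\bar{\pi})$ with $C = \frac{4\gamma\epsilon}{(1-\gamma)^{2}}$, as claimed.

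I expect the coupling step to be the main obstacle: the delicate part is formalising the shared-randomness construction so that the states genuinely agree on the no-disagreement event, and carefully controlling the divergence probability by $1-(1-\alpha)^{t}$ uniformly in the start state. The advantage identity and the geometric summation are routine once this mismatch lemma is in place, and the Pinsker step is purely a change of divergence.
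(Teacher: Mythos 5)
Your proposal is correct and follows essentially the same route as the paper: the paper does not spell out a proof of this theorem at all but defers to \cite{trpo} (Appendix A and Equation (9)), and your reconstruction --- the Kakade--Langford performance-difference identity, the $\alpha$-coupled-policy bound on the state-distribution mismatch, the geometric summation yielding the $\frac{4\gamma\epsilon\alpha^{2}}{(1-\gamma)^{2}}$ error term, and the conversion from total variation to KL --- is precisely that cited argument. The only cosmetic difference is that you invoke Pinsker's inequality with the factor $\tfrac{1}{2}$ and then relax it, whereas \cite{trpo} quotes the bound $D_{\mathrm{TV}}^{2}\leq D_{\mathrm{KL}}$ directly; the resulting constant $C$ is the same.
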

The above theorem states that as the distance between the current policy $\pi$ and a candidate policy $\bar{\pi}$ decreases, the surrogate $L_{\pi}(\bar{\pi})$, which involves only the current policy's state distribution, becomes an increasingly accurate estimate of the actual performance metric $J(\bar{\pi})$. Based on this theorem, an iterative trust region algorithm is derived; at iteration $k+1$, the agent updates its policy by 
\begin{align}
    \pi_{k+1} = \argmax_{\pi}\big( L_{\pi_{k}}(\pi) - C\text{{\normalfont D}}_{\text{KL}}^{\text{max}}(\pi_{k}, \pi) \big).\nonumber
\end{align}
Such an update guarantees a monotonic improvement of the policy, i.e., $J(\pi_{k+1}) \geq J(\pi_{k})$.  To implement this procedure in practical settings with parameterised policies $\pi_\theta$, \cite{trpo} approximated the KL-penalty with a KL-constraint, which
gave rise to the TRPO update of 
\begin{align}
\label{eq:trpoupdate}
    \theta_{k+1} = \argmax_{\theta} L_{\pi_{\theta_{k}}}(\pi_{\theta}), \ \ \ \ \text{subject to   }\E_{\rs\sim\rho_{\pi_{\theta_{k}}}}\left[ 
    \text{{\normalfont D}}_{\text{KL}}(\pi_{\theta_{k}}, \pi_{\theta})\right] \leq \delta.
\end{align}
At each iteration $k+1$, TRPO constructs a KL-ball  $\mathfrak{B}_{\delta}(\pi_{\theta_{k}})$  around the policy $\pi_{\theta_{k}}$, and optimises    $\pi_{\theta_{k+1}}\in\mathfrak{B}_{\delta}(\pi_{\theta_{k}})$ to maximise $L_{\pi_{\theta_{k}}}(\pi_{\theta})$. By Theorem \ref{theorem:trpo-ineq},  we know that the surrogate objective $L_{\pi_{\theta_{k}}}(\pi_{\theta})$ is close to the true reward $J(\pi_{\theta})$ within $\mathfrak{B}_{\delta}(\pi_{\theta_{k}})$;  therefore,  $\pi_{\theta_{k}}$ leads to  improvement. 
Furthermore, to save the cost on $\E_{\rs\sim\rho_{\pi_{\theta_{k}}}}\left[ 
    \text{{\normalfont D}}_{\text{KL}}(\pi_{\theta_{k}}, \pi_{\theta})\right]$ when computing Equation (\ref{eq:trpoupdate}), 
\cite{ppo} proposed an approximation solution to TRPO that uses only first order derivatives,  known as PPO. PPO optimises the policy parameter $\theta_{k+1}$ by maximising  the \textsl{PPO-clip} objective of 
\begin{smalleralign}
    L_{\pi_{\theta_{k}}}^{\text{PPO}}(\pi_{\theta})
    =\E_{\rs\sim\rho_{\pi_{\theta_{k}}}, \ra\sim\pi_{\theta_{k}}}
    \left[ \min\left( \frac{\pi_{\theta}(\ra|\rs)}{\pi_{\theta_{k}}(\ra|\rs)}A_{\pi_{\theta_{k}}}(\rs, \ra), \text{clip}\left(\frac{\pi_{\theta}(\ra|\rs)}{\pi_{\theta_{k}}(\ra|\rs)}, 1\pm \epsilon\right)A_{\pi_{\theta_{k}}}(\rs, \ra)
    \right)\right].
    \label{eq:ppoclip}
\end{smalleralign}
The \textsl{clip} operator replaces the ratio {\small $\frac{\pi_{\theta}(\ra|\rs)}{\pi_{\theta_{k}}(\ra|\rs)}$} with $1+\epsilon$ or  $1-\epsilon$, depending on whether or not the ratio is beyond  the threshold interval. This effectively enables PPO to control the size of policy updates. 
\vspace{-5pt}
\subsection{Limitations of Existing  Trust Region Methods in MARL}
\vspace{-5pt}
Extending trust region methods to MARL is highly non-trivial. 
One naive approach  is to equip all agents with one shared set of  parameters  and use agents' aggregated trajectories to conduct policy optimisation at every iteration. 
This approach was adopted by MAPPO \citep{mappo} in which the policy parameter $\theta$ is optimised by maximising the objective of 
\begin{smalleralign}
    \label{eq:mappo-objective}
    L^{\text{MAPPO}}_{\pi_{\theta_k}}(\pi_\theta) = \sum_{i=1}^{n} \E_{\rs\sim\rho_{\boldsymbol{\pi}_{\theta_{k}}}, \rva\sim\boldsymbol{\pi}_{\theta_{k}}}
    \left[ \min\left( \frac{\pi_{\theta}(\ra^i|\rs)}{\pi_{\theta_{k}}(\ra^i|\rs)}A_{\boldsymbol{\pi}_{\theta_{k}}}(\rs, \rva), \text{clip}\left(\frac{\pi_{\theta}(\ra^i|\rs)}{\pi_{\theta_{k}}(\ra^i|\rs)}, 1\pm \epsilon\right)A_{\boldsymbol{\pi}_{\theta_{k}}}(\rs, \rva)
    \right)\right].
\end{smalleralign}

Unfortunately, this simple approach has significant drawbacks. An obvious demerit is that parameter sharing requires that all agents have identical action spaces,  \emph{i.e.}, $\mathcal{A}^i = \mathcal{A}^j, \forall i, j\in\mathcal{N}$, which limits the class of MARL problems to solve. Importantly, enforcing parameter sharing is equivalent to putting a constraint $\theta^i=\theta^j, \forall i, j\in\mathcal{N}$ on the joint policy space.  
In principle,  this  can lead to a suboptimal solution. To elaborate, we  demonstrate through an example in the following proposition. 
\vspace{-10pt}
\begin{restatable}{proposition}{suboptimal}
\label{prop:suboptimal}
Let's consider a fully-cooperative game with an even number of agents $n$, one state, and the joint action space $\{0, 1\}^{n}$, where the reward is given by $r(\boldsymbol{0}^{n/2}, \boldsymbol{1}^{n/2}) 
= r(\boldsymbol{1}^{n/2}, \boldsymbol{0}^{n/2}) = 1$, and $r(\va^{1:n}) =0$ for all other joint actions. Let $J^{*}$ be the optimal joint reward, and $J^*_{\text{share}}$ be the optimal joint reward under the shared policy constraint. Then
\vspace{-5pt}
\begin{smalleralign}
    \frac{J^*_{\text{share}}}{J^{*}} = \frac{2}{2^n}.\nonumber
\end{smalleralign}
\vspace{-15pt}
\end{restatable}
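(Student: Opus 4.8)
The plan is to exploit the fact that the game has a single state, so that a joint policy reduces to a collection of Bernoulli distributions and $\mathcal{J}$ collapses to a one-shot expected reward. Since there is one state $s$, each agent policy $\pi^i$ is determined by the single number $p_i \triangleq \pi^i(1|s) \in [0,1]$, and because $\boldsymbol{\pi}(\cdot|s) = \prod_{i=1}^n \pi^i(\cdot^i|s)$ the agents act independently. As the per-step return is identical at every time step, $\mathcal{J}(\boldsymbol{\pi}) = \frac{1}{1-\gamma}\, \E_{\rva\sim\boldsymbol{\pi}}\!\left[ r(\rva) \right]$, so the factor $\frac{1}{1-\gamma}$ is common to both $\mathcal{J}^*$ and $\mathcal{J}^*_{\text{share}}$ and cancels in the ratio. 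It therefore suffices to maximise $g(\boldsymbol{\pi}) \triangleq \E_{\rva\sim\boldsymbol{\pi}}[r(\rva)]$ over (i) all product policies and (ii) product policies subject to $p_1 = \cdots = p_n$.

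For the unconstrained optimum, I would exhibit a heterogeneous deterministic policy that realises a rewarded configuration with probability one: let the first $n/2$ agents play $0$ and the remaining $n/2$ agents play $1$ (i.e.\ $p_i = 0$ for $i \le n/2$ and $p_i = 1$ otherwise). This puts all mass on $(\boldsymbol{0}^{n/2}, \boldsymbol{1}^{n/2})$, giving $g = 1$; since $r \le 1$ pointwise, this is the maximum, so $\mathcal{J}^* = \frac{1}{1-\gamma}$.

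For the constrained optimum, impose $p_i = p$ for all $i$. By independence, the probability of $(\boldsymbol{0}^{n/2}, \boldsymbol{1}^{n/2})$ is $(1-p)^{n/2} p^{n/2}$ and, by symmetry, the probability of $(\boldsymbol{1}^{n/2}, \boldsymbol{0}^{n/2})$ is the same; all other joint actions earn zero reward. Hence
\begin{smalleralign}
g_{\text{share}}(p) = 2\big[p(1-p)\big]^{n/2}. \nonumber
\end{smalleralign}
Maximising over $p \in [0,1]$ uses $p(1-p) \le \tfrac14$ (equality at $p = \tfrac12$), yielding $\max_p g_{\text{share}}(p) = 2\cdot 4^{-n/2} = 2/2^n$, so $\mathcal{J}^*_{\text{share}} = \frac{1}{1-\gamma}\cdot \frac{2}{2^n}$. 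Dividing the two quantities gives $\mathcal{J}^*_{\text{share}}/\mathcal{J}^* = 2/2^n$, as claimed.

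The computation itself is elementary; the only points requiring care are conceptual rather than technical. First, one must note that the shared-policy optimum is necessarily \emph{stochastic}: any deterministic shared policy makes all agents play the same bit and therefore earns reward $0$, so the entire loss stems from the fact that a single Bernoulli parameter cannot separate the two halves of the agents. Second, one should record that the discounting and infinite-horizon structure factor out cleanly in the single-state case, which legitimises reducing $\mathcal{J}$ to the one-shot expected reward $g$. Neither step is a genuine obstacle, so I expect the substantive content of the argument to be the clean one-line maximisation $2[p(1-p)]^{n/2} \le 2/2^n$.
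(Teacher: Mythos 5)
Your proof is correct and follows essentially the same route as the paper's: exhibit a deterministic heterogeneous policy achieving the rewarded configuration with probability one, parametrise the shared policy by a single Bernoulli parameter, compute the success probability $2\left[p(1-p)\right]^{n/2}$, and maximise it at $p=\tfrac{1}{2}$ (the paper invokes the AM--GM inequality where you use $p(1-p)\leq\tfrac14$, which is the same bound). The only difference is cosmetic: you explicitly track the $\tfrac{1}{1-\gamma}$ factor from the infinite horizon and note that it cancels in the ratio, whereas the paper works directly with the one-shot expected reward.
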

For proof see Appendix \ref{appendix:suboptimal}. 
In the above example, we show that parameter sharing can lead to a suboptimal outcome that is exponentially worse  with the increasing number of agents.  We also provide an empirical verification of this proposition in Appendix \ref{appendix:ablations}.

Apart from parameter sharing, a more general approach to extend trust region methods in MARL  is to endow all agents with their own parameters, and at each iteration $k+1$, agents construct trust regions of {\small$\{\mathfrak{B}_\delta(\pi^i_{\theta^i_k}) \}_{i\in\mathcal{N}}$}, and optimise  their objectives {\small $\{L_{\boldsymbol{\pi}_{\vtheta_k}}(\pi^i_{\theta^i}\boldsymbol{\pi}^{-i}_{\vtheta^{-i}_k}) \}_{i\in\mathcal{N}}$}. 
\begin{wrapfigure}{l}{0.35\textwidth}
\vspace{-15pt}
  \begin{center}
    \includegraphics[width=0.34\textwidth]{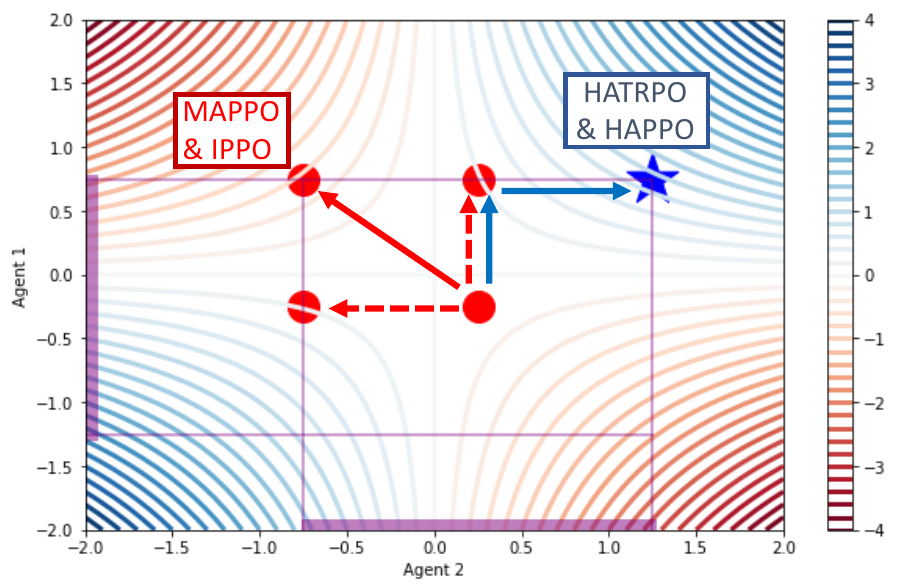}
  \end{center}
  \vspace{-10pt}
  \caption{Example of a two-player differentiable game with $r(a^1, a^2)=a^1a^2$. We initialise two Gaussian policies with $\mu^1 = -0.25$, $\mu^2=0.25$. The purple intervals represent the KL-ball of $\delta=0.5$. Individual trust region updates (red) decrease the joint return, whereas our sequential update (blue) leads to improvement.} 
  \label{fig:differential_game}
  \vspace{-10pt}
\end{wrapfigure}
Admittedly, this approach can still be supported  by the current MAPPO implementation \citep{mappo}  if one turns off  parameter sharing, thus distributing the summation in Equation (\ref{eq:mappo-objective}) to all agents.  
However, such an approach  cannot offer a rigorous guarantee of monotonic improvement during training. 
In fact, 
agents' local improvements in performance can jointly lead to a worse outcome. For example, in Figure \ref{fig:differential_game}, we design a single-state differential game where two agents  draw their actions from Gaussian distributions with learnable means $\mu^1$, $\mu^2$ and unit variance, and the reward function is $r(a^1, a^2) = a^1a^2$. The failure of MAPPO-style approach comes from the fact that, although the reward function increases in each of the agents' (one-dimensional) update directions, it decreases in the joint (two-dimensional) update direction.

Having seen the limitations of existing trust region methods in MARL, 
in the following sections, we first introduce  a multi-agent policy iteration procedure  that  enjoys theoretically-justified  monotonic improvement guarantee. To implement this procedure,  we propose \textsl{HATRPO} and \textsl{HAPPO} algorithms,  which offer practical solutions  to apply trust region learning in MARL without the necessity of assuming homogeneous agents while still maintaining the monotonic improvement property.  
\vspace{-5pt}
\section{Multi-Agent Trust Region Learning}
\vspace{-5pt}
The purpose of this section is to develop a theoretically-justified trust region learning procedure in the context of multi-agent learning. 
In Subsection \ref{subsec:derivation-matrpo}, we present the  policy iteration procedure with monotonic improvement guarantee,  and in Subsection \ref{subsec:analysis-matrpo}, we analyse its properties during training and at convergence. Throughout the work, we make the following regularity assumptions. 
\begin{restatable}{assumption}{etasoft}
\label{assumption:eta-soft}
There exists $\eta\in\mathbb{R}$, such that $0<\eta \ll 1$, and for every agent $i\in\mathcal{N}$, the policy space $\Pi^i$ is $\eta$-soft; that means that for every $\pi^i\in\Pi^i$, $s\in\mathcal{S}$, and $a^i\in\mathcal{A}^i$, we have $\pi^i(a^i|s)\geq\eta$.
\end{restatable}
\vspace{-5pt}
\subsection{Trust Region Learning in MARL with Monotonic Improvement Guarantee}
\label{subsec:derivation-matrpo}
\vspace{-5pt}
We start by introducing a pivotal lemma which shows that the  joint advantage function can be decomposed into a summation of  each agent's local advantages. Importantly, this lemma  offers a critical intuition behind the sequential policy-update scheme that  our algorithms later apply. 
\begin{restatable}[Multi-Agent Advantage Decomposition]{lemma}{maadlemma}
    \label{lemma:maadlemma}
In any cooperative Markov games, given a joint policy $\boldsymbol{\pi}$, for any state $s$, and any agent subset $i_{1:m}$, the below equations holds. 
   \vspace{-5pt}
    \begin{align}
        A^{i_{1:m}}_{\boldsymbol{\pi}}\left(s, \va^{i_{1:m}}\right)
        = \sum_{j=1}^{m}A^{i_{j}}_{\boldsymbol{\pi}}\left(s, \va^{i_{1:j-1}}, a^{i_{j}}\right). \nonumber
    \end{align}
    \vspace{-10pt} 
\end{restatable}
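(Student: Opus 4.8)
The plan is to recognise the claimed identity as a \emph{telescoping sum} that emerges immediately once the definitions are unpacked; no game-theoretic input beyond the preliminaries is needed. First I would fix the reading of the left-hand side $A^{i_{1:m}}_{\boldsymbol{\pi}}(s, \va^{i_{1:m}})$ as the instance of the multi-agent advantage function in which the conditioning set $j_{1:k}$ is empty. With that convention, and using $Q^{\emptyset}_{\boldsymbol{\pi}}(s) = \E_{\rva\sim\boldsymbol{\pi}}[Q_{\boldsymbol{\pi}}(s, \rva)] = V_{\boldsymbol{\pi}}(s)$, the definition of the advantage function gives $A^{i_{1:m}}_{\boldsymbol{\pi}}(s, \va^{i_{1:m}}) = Q^{i_{1:m}}_{\boldsymbol{\pi}}(s, \va^{i_{1:m}}) - V_{\boldsymbol{\pi}}(s)$.

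Next I would expand each summand on the right. Taking the conditioning set to be the prefix $j_{1:k} = i_{1:j-1}$ and the newly added singleton to be $\{i_j\}$, the definition yields $A^{i_j}_{\boldsymbol{\pi}}(s, \va^{i_{1:j-1}}, a^{i_j}) = Q^{i_{1:j}}_{\boldsymbol{\pi}}(s, \va^{i_{1:j}}) - Q^{i_{1:j-1}}_{\boldsymbol{\pi}}(s, \va^{i_{1:j-1}})$, where I use that concatenating the ordered set $i_{1:j-1}$ with $i_j$ produces exactly $i_{1:j}$. Summing over $j = 1, \dots, m$, each $Q^{i_{1:j}}_{\boldsymbol{\pi}}$ appearing as a ``top'' term at step $j$ cancels the ``bottom'' term at step $j+1$, so the sum collapses to $Q^{i_{1:m}}_{\boldsymbol{\pi}}(s, \va^{i_{1:m}}) - Q^{i_{1:0}}_{\boldsymbol{\pi}}(s)$, where $i_{1:0}$ denotes the empty index set. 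Since $Q^{i_{1:0}}_{\boldsymbol{\pi}}(s) = Q^{\emptyset}_{\boldsymbol{\pi}}(s) = V_{\boldsymbol{\pi}}(s)$, the telescoped right-hand side equals $Q^{i_{1:m}}_{\boldsymbol{\pi}}(s, \va^{i_{1:m}}) - V_{\boldsymbol{\pi}}(s)$, which matches the left-hand side computed in the first step, completing the argument.

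Because there is no inequality or limiting step, I do not expect any genuine analytic difficulty; the main obstacle is purely the notational bookkeeping. Specifically, the care points are: confirming that the implicit empty conditioning set is the correct reading of the left-hand side, verifying the boundary identity $Q^{\emptyset}_{\boldsymbol{\pi}} = V_{\boldsymbol{\pi}}$, and checking that the ordered-set concatenation of $i_{1:j-1}$ with $i_j$ is precisely $i_{1:j}$ so that consecutive summands chain together cleanly. It is also worth remarking that, although each individual summand on the right depends on the chosen ordering of $i_1, \dots, i_m$ through its conditioning prefix, the telescoped total is order-independent, as it always reduces to the same two endpoint terms — an observation that directly motivates the sequential update scheme used later.
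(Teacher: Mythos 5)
Your proposal is correct and matches the paper's own proof: both unpack the definition so that the left-hand side becomes $Q^{i_{1:m}}_{\boldsymbol{\pi}}(s,\va^{i_{1:m}}) - V_{\boldsymbol{\pi}}(s)$ and then recognise the right-hand side as the telescoping sum $\sum_{j=1}^{m}\bigl[Q^{i_{1:j}}_{\boldsymbol{\pi}}(s,\va^{i_{1:j}}) - Q^{i_{1:j-1}}_{\boldsymbol{\pi}}(s,\va^{i_{1:j-1}})\bigr]$ with the empty-prefix convention $Q^{\emptyset}_{\boldsymbol{\pi}} = V_{\boldsymbol{\pi}}$. The only cosmetic difference is direction (the paper telescopes the left-hand side outward, you collapse the right-hand side inward), which is immaterial.
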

For proof see Appendix \ref{appendix:analysis-training-the-matrpo}.  
Notably, Lemma \ref{lemma:maadlemma} holds in general for  cooperative Markov games, with no need for  any assumptions on the decomposibility of the joint value function such as  those in VDN \citep{sunehag2018value}, QMIX \citep{rashid2018qmix} or Q-DPP \citep{yang2020multi}.  

Lemma \ref{lemma:maadlemma} indicates an effective approach to search for the direction of performance improvement (i.e., joint actions with positive advantage values) in multi-agent learning.   
Specifically, 
let agents  take  actions sequentially by following an arbitrary order  $i_{1:n}$, assuming agent $i_{1}$ takes an action $\bar{a}^{i_{1}}$ such that {\small $A^{i_{1}}(s, \bar{a}^{i_{1}}) > 0$}, and then, for the rest  $m=2, \dots, n$, the agent $i_{m}$ takes an action $\bar{a}^{i_{m}}$ such that {\small $A^{i_{m}}(s, \bar{\va}^{i_{1:m-1}}, \bar{a}^{i_{m}}) > 0$}. 
For the induced joint action $\bar{\va}$, Lemma \ref{lemma:maadlemma} assures that {\small $A_{\boldsymbol{\pi}_{\vtheta}}(s, \bar{\va})$} is positive, thus the performance is guaranteed to improve.     
To formally extend the above process into a policy iteration procedure with monotonic improvement guarantee, we need the following  definitions.  

\begin{restatable}{definition}{localsurrogate}
    \label{definition:localsurrogate}
     Let $\boldsymbol{\pi}$ be a joint policy, $\boldsymbol{\bar{\pi}}^{i_{1:m-1}}=\prod_{j=1}^{m-1}\bar{\pi}^{i_j}$ be some \textbf{other} joint policy of agents $i_{1:m-1}$, and $\hat{\pi}^{i_{m}}$ be some \textbf{other} policy of agent $i_{m}$. Then
     \begin{align}
         L^{i_{1:m}}_{\boldsymbol{\pi}}\left( \boldsymbol{\bar{\pi}}^{i_{1:m-1}}, \hat{\pi}^{i_{m}} \right) 
         \triangleq
         \E_{\rs \sim \rho_{\boldsymbol{\pi}}, \rva^{i_{1:m-1}}\sim\boldsymbol{\bar{\pi}}^{i_{1:m-1}}, \ra^{i_m}\sim\hat{\pi}^{i_{m}}}
         \left[  A_{\boldsymbol{\pi}}^{i_{m}}\left(\rs, \rva^{i_{1:m-1}}, \ra^{i_{m}}\right) \right].
         \nonumber
     \end{align}
\end{restatable}
\vspace{-5pt}
Note that, for any $\boldsymbol{\bar{\pi}}^{i_{1:m-1}}$, we have
 \begin{align}
    \label{eq:nice-maad-property}
     L^{i_{1:m}}_{\boldsymbol{\pi}}\left( \boldsymbol{\bar{\pi}}^{i_{1:m-1}}, \pi^{i_{m}} \right) 
     &=
     \E_{\rs \sim \rho_{\boldsymbol{\pi}}, \rva^{i_{1:m-1}}\sim\boldsymbol{\bar{\pi}}^{i_{1:m-1}}, \ra^{i_m}\sim\pi^{i_{m}}}
     \left[  A_{\boldsymbol{\pi}}^{i_{m}}\left(\rs, \rva^{i_{1:m-1}}, \ra^{i_{m}}\right) \right]
     \nonumber\\
     &= \E_{\rs \sim \rho_{\boldsymbol{\pi}}, \rva^{i_{1:m-1}}\sim\boldsymbol{\bar{\pi}}^{i_{1:m-1}} }
     \left[  \E_{\ra^{i_m}\sim\pi^{i_m}}
     \left[A_{\boldsymbol{\pi}}^{i_{m}}\left(\rs, \rva^{i_{1:m-1}}, \ra^{i_{m}}\right) \right] \right] = 0.
 \end{align}
 
Building on Lemma \ref{lemma:maadlemma} and Definition \ref{definition:localsurrogate}, we can finally generalise Theorem \ref{theorem:trpo-ineq} of TRPO to MARL. 
\begin{restatable}{lemma}{trpotosadtrpo}
    Let $\boldsymbol{\pi}$ be a joint policy. Then, for any joint policy $\boldsymbol{\bar{\pi}}$, we have
    \vspace{-0pt}
    \begin{align}
        J(\boldsymbol{\bar{\pi}}) \geq
        J(\boldsymbol{\pi}) +
        \sum_{m=1}^{n}\left[L^{i_{1:m}}_{\boldsymbol{\pi}}\left(
        \boldsymbol{\bar{\pi}}^{i_{1:m-1}}, \bar{\pi}^{i_{m}}\right)
        - C\text{{\normalfont D}}_{\text{KL}}^{\text{max}}(\pi^{i_{m}}, \bar{\pi}^{i_{m}})\right].\nonumber
    \end{align}
        \vspace{-10pt}
\end{restatable}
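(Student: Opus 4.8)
The plan is to apply Theorem \ref{theorem:trpo-ineq} \emph{directly} to the joint policy, viewing $\boldsymbol{\pi}$ and $\boldsymbol{\bar{\pi}}$ as single policies over the joint action space $\mathbfcal{A}$. A cooperative Markov game under a fixed joint policy is just a single-agent MDP whose action is the joint action $\rva$, so Theorem \ref{theorem:trpo-ineq} applies verbatim (with $C$ inherited as $\tfrac{4\gamma\max_{s,\va}|A_{\boldsymbol{\pi}}(s,\va)|}{(1-\gamma)^2}$) and gives
\begin{align}
    \mathcal{J}(\boldsymbol{\bar{\pi}}) \geq \mathcal{J}(\boldsymbol{\pi}) + \E_{\rs\sim\rho_{\boldsymbol{\pi}}, \rva\sim\boldsymbol{\bar{\pi}}}\left[ A_{\boldsymbol{\pi}}(\rs, \rva) \right] - C D_{\text{KL}}^{\text{max}}(\boldsymbol{\pi}, \boldsymbol{\bar{\pi}}).\nonumber
\end{align}
It then remains to rewrite the surrogate expectation as the claimed sum of local surrogates and to bound the joint KL penalty by the sum of per-agent penalties.

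For the surrogate term, I would first note that the joint advantage is the full-set multi-agent advantage, $A_{\boldsymbol{\pi}}(s,\va) = A_{\boldsymbol{\pi}}^{i_{1:n}}(s,\va^{i_{1:n}})$ (using $Q_{\boldsymbol{\pi}}^{\emptyset}=V_{\boldsymbol{\pi}}$ for the empty prefix), and then invoke Lemma \ref{lemma:maadlemma} with the ordered set $i_{1:n}$ to obtain the pointwise identity $A_{\boldsymbol{\pi}}(s,\va) = \sum_{m=1}^{n} A_{\boldsymbol{\pi}}^{i_m}(s, \va^{i_{1:m-1}}, a^{i_m})$. Taking $\E_{\rs\sim\rho_{\boldsymbol{\pi}}, \rva\sim\boldsymbol{\bar{\pi}}}$ and exchanging it with the finite sum, the $m$-th summand depends only on $\rs$ and $\ra^{i_1},\dots,\ra^{i_m}$; since $\boldsymbol{\bar{\pi}}$ factorises as $\prod_i \bar{\pi}^i$, the marginal of $(\ra^{i_1},\dots,\ra^{i_m})$ under $\boldsymbol{\bar{\pi}}$ is exactly $\boldsymbol{\bar{\pi}}^{i_{1:m-1}}\times\bar{\pi}^{i_m}$, so each summand collapses to $L^{i_{1:m}}_{\boldsymbol{\pi}}(\boldsymbol{\bar{\pi}}^{i_{1:m-1}}, \bar{\pi}^{i_m})$ by Definition \ref{definition:localsurrogate}. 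This reproduces the surrogate part of the target bound.

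For the penalty, I would use that the KL divergence between product distributions is additive: for every state $s$, $D_{\text{KL}}(\boldsymbol{\pi}(\cdot|s), \boldsymbol{\bar{\pi}}(\cdot|s)) = \sum_{i=1}^n D_{\text{KL}}(\pi^i(\cdot|s), \bar{\pi}^i(\cdot|s))$, because both joint policies factorise across agents. Maximising over $s$ and using that the maximum of a sum is at most the sum of the maxima yields $D_{\text{KL}}^{\text{max}}(\boldsymbol{\pi}, \boldsymbol{\bar{\pi}}) \leq \sum_{m=1}^n D_{\text{KL}}^{\text{max}}(\pi^{i_m}, \bar{\pi}^{i_m})$. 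Substituting both pieces into the displayed inequality and pairing the $m$-th surrogate with the $m$-th penalty gives the statement. The only genuinely delicate step is the surrogate identification: one must track carefully that the full joint sampling $\rva\sim\boldsymbol{\bar{\pi}}$ reduces, term by term, to precisely the marginal $\boldsymbol{\bar{\pi}}^{i_{1:m-1}}\times\bar{\pi}^{i_m}$ appearing in Definition \ref{definition:localsurrogate}. The KL step is routine given the factorisation, and invoking Theorem \ref{theorem:trpo-ineq} on the joint policy requires no new argument.
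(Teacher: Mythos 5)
Your proposal is correct and takes essentially the same route as the paper's own proof: invoke Theorem~\ref{theorem:trpo-ineq} on the joint policies, decompose the joint advantage pointwise via Lemma~\ref{lemma:maadlemma}, marginalise each summand under the factorised $\boldsymbol{\bar{\pi}}$ to recover the local surrogates of Definition~\ref{definition:localsurrogate}, and bound $D_{\text{KL}}^{\text{max}}(\boldsymbol{\pi}, \boldsymbol{\bar{\pi}})$ by the per-agent sum exactly as in the paper's Lemma~\ref{lemma:kl-inequality}, which you re-derive inline. Your explicit check that $A_{\boldsymbol{\pi}} = A_{\boldsymbol{\pi}}^{i_{1:n}}$ via $Q_{\boldsymbol{\pi}}^{\emptyset}=V_{\boldsymbol{\pi}}$ is a detail the paper leaves implicit, but it introduces no difference in substance.
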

\vspace{-5pt}
For proof see Appendix \ref{appendix:analysis-training-the-matrpo}.
This lemma provides an idea about how a joint policy can be improved. Namely, by Equation (\ref{eq:nice-maad-property}), we know that if any agents were to set the values of the above summands { $L_{\boldsymbol{\pi}}^{i_{1:m}}(\boldsymbol{\bar{\pi}}^{i_{1:m-1}}, \bar{\pi}^{i_m}) - C\text{{\normalfont D}}_{\text{KL}}^{\text{max}}(\pi^{i_m}, \bar{\pi}^{i_m})$} by sequentially updating their policies, each of them can always make its summand be zero by making no policy update (i.e., $\bar{\pi}^{i_m} = \pi^{i_m})$. This implies that any positive update will lead to an increment in summation.  Moreover, as there are $n$ agents making policy updates, the compound increment can be large,  leading to a substantial  improvement. Lastly, note that this property holds with no requirement on the specific order by which agents make their updates; this allows for flexible scheduling on the update order at each  iteration. 
To summarise, we propose the following Algorithm \ref{algorithm:theoretical-matrpo}. 
\begin{algorithm}
\caption{Multi-Agent Policy Iteration with Monotonic Improvement Guarantee}
\label{algorithm:theoretical-matrpo}
\begin{algorithmic}[1]
\STATE Initialise the joint policy $\boldsymbol{\pi}_{0} = (\pi^{1}_{0}, \dots, \pi^{n}_{0})$.
\FOR{$k=0, 1, \dots$}
    \STATE Compute the advantage function $A_{\boldsymbol{\pi}_{k}}(s, \va)$ for all state-(joint)action pairs $(s, \va)$.
    \STATE Compute $\epsilon = \max_{s, \va}|A_{\boldsymbol{\pi}_k}(s, \va)|$
    and $C = \frac{4\gamma\epsilon}{(1-\gamma)^{2}}$.
    \STATE Draw a permutaion $i_{1:n}$ of agents at random. 
    \FOR{$m=1:n$}
        \STATE Make an update $\pi^{i_{m}}_{k+1} = \argmax_{\pi^{i_{m}}}\left[ 
        L_{\boldsymbol{\pi}_k}^{i_{1:m}}\left(\boldsymbol{\pi}^{i_{1:m-1}}_{k+1}, \pi^{i_{m}}\right) - C\text{{\normalfont D}}_{\text{KL}}^{\text{max}}(\pi^{i_{m}}_{k}, \pi^{i_{m}})\right]$.
    \ENDFOR
\ENDFOR
\end{algorithmic}
\end{algorithm}
\vspace{-5pt}
We want to highlight that the algorithm is markedly different from naively  applying the  TRPO update, \emph{i.e.}, Equation (\ref{eq:trpoupdate}), on the joint policy of all agents.  
Firstly, our Algorithm \ref{algorithm:theoretical-matrpo} does not update the entire joint policy at once, but rather update each agent's individual policy sequentially. 
Secondly, during the sequential update, each agent has a unique optimisation objective that takes into account all previous agents' updates, which is also  the key for the  monotonic improvement property to hold. 
\vspace{-5pt}
\subsection{Theoretical Analysis}
\label{subsec:analysis-matrpo}
\vspace{-5pt}
Now we justify by the following theorm that Algorithm \ref{algorithm:theoretical-matrpo} enjoys monotonic improvement property.
\vspace{-10pt}
\begin{restatable}{theorem}{matrpomonotonic} 
\label{theorem:monotonic-matrpo}
A sequence $\left(\boldsymbol{\pi}_{k}\right)_{k=0}^{\infty}$ of joint policies updated by Algorithm \ref{algorithm:theoretical-matrpo} has the monotonic improvement property, i.e., $J(\boldsymbol{\pi}_{k+1})\geq J(\boldsymbol{\pi}_{k})$ for all $k\in\mathbb{N}$.
\end{restatable}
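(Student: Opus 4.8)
The plan is to combine the preceding lemma (the one that generalises Theorem~\ref{theorem:trpo-ineq} to the multi-agent setting) with the fact that each agent's sequential update in Algorithm~\ref{algorithm:theoretical-matrpo} maximises exactly the corresponding summand of that lemma's lower bound. Concretely, I would fix an iteration $k$ and apply that lemma with $\boldsymbol{\pi} = \boldsymbol{\pi}_k$ (the current policy) and $\boldsymbol{\bar{\pi}} = \boldsymbol{\pi}_{k+1}$ (the policy produced after the inner loop over $m=1,\dots,n$), using the permutation $i_{1:n}$ drawn at that iteration. This immediately yields
\begin{align}
    \mathcal{J}(\boldsymbol{\pi}_{k+1}) \geq \mathcal{J}(\boldsymbol{\pi}_k) + \sum_{m=1}^{n}\left[ L^{i_{1:m}}_{\boldsymbol{\pi}_k}\!\left(\boldsymbol{\pi}^{i_{1:m-1}}_{k+1}, \pi^{i_m}_{k+1}\right) - C D^{\max}_{\text{KL}}\!\left(\pi^{i_m}_k, \pi^{i_m}_{k+1}\right) \right], \nonumber
\end{align}
where the crucial bookkeeping point is that $\boldsymbol{\bar{\pi}}^{i_{1:m-1}} = \boldsymbol{\pi}^{i_{1:m-1}}_{k+1}$ coincides with the already-updated policies of the first $m-1$ agents, which is precisely the first argument appearing in the inner-loop objective. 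Note also that $C = \tfrac{4\gamma\epsilon}{(1-\gamma)^2}$ with $\epsilon = \max_{s,\va}|A_{\boldsymbol{\pi}_k}(s,\va)|$ is exactly the constant the lemma requires at step $k$, so no scaling mismatch arises.

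It then remains to show that every summand on the right-hand side is nonnegative. For this I would invoke the defining property of the argmax: by the inner-loop update in Algorithm~\ref{algorithm:theoretical-matrpo}, $\pi^{i_m}_{k+1}$ maximises $L^{i_{1:m}}_{\boldsymbol{\pi}_k}(\boldsymbol{\pi}^{i_{1:m-1}}_{k+1}, \cdot) - C D^{\max}_{\text{KL}}(\pi^{i_m}_k, \cdot)$ over $\Pi^{i_m}$. Evaluating this objective at the feasible candidate $\pi^{i_m} = \pi^{i_m}_k$ (the ``no update'' choice) gives exactly $0$: the surrogate term vanishes by Equation~(\ref{eq:nice-maad-property}), and the penalty term vanishes because $D^{\max}_{\text{KL}}(\pi^{i_m}_k, \pi^{i_m}_k) = 0$. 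Since the maximum is at least the value attained at this particular point, each summand is $\geq 0$; hence the whole sum is $\geq 0$, giving $\mathcal{J}(\boldsymbol{\pi}_{k+1}) \geq \mathcal{J}(\boldsymbol{\pi}_k)$. Carrying this out for every $k\in\mathbb{N}$ establishes the monotonic improvement property.

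The argument is short because the real work is already done by the decomposition machinery (Lemma~\ref{lemma:maadlemma} and the generalised trust-region lemma); the only genuine subtleties to verify are the consistency of arguments — that the policies fed into the $m$-th summand of the lower bound are precisely those the inner loop has already committed to for agents $i_{1:m-1}$ — together with the observation that the ``no update'' policy is always feasible and drives the objective to exactly $0$ via Equation~(\ref{eq:nice-maad-property}). I would finally remark that the conclusion does not depend on the sampled permutation, consistent with the order-invariance noted after Lemma~\ref{lemma:maadlemma}, so random scheduling of the update order does not affect the guarantee.
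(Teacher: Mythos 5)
Your proposal is correct and takes essentially the same route as the paper: the paper's own proof merely inlines the derivation of the generalised trust-region lemma (applying Theorem~\ref{theorem:trpo-ineq}, the KL subadditivity bound of Lemma~\ref{lemma:kl-inequality}, and the decomposition of Lemma~\ref{lemma:maadlemma}) before making the identical argmax-versus-no-update comparison, with each summand floored at $0$ via Equation~(\ref{eq:nice-maad-property}) and the vanishing KL penalty. The only difference is stylistic --- you invoke the lemma as a black box where the paper re-derives it --- so nothing needs correction.
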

\vspace{-5pt}
For proof see Appendix \ref{appendix:analysis-training-the-matrpo}. 
With the above theorem, we finally claim a successful introduction of trust region learning  to MARL, as this generalises the monotonic improvement property of  TRPO.  
Moreover, we take a step further to  study the convergence property of Algorithm \ref{algorithm:theoretical-matrpo}. Before stating the result,  we introduce the following solution concept. 

\begin{restatable}{definition}{NE}
\label{definition:ne}
In a fully-cooperative game, a joint policy $\boldsymbol{\pi}_* = (\pi_*^1, \dots, \pi_*^n)$ is a Nash equilibrium (NE) if for every $i\in\mathcal{N}$, $\pi^i\in\Pi^i$ implies $J\left(\boldsymbol{\pi}_*\right) \geq J\left(\pi^i,  \boldsymbol{\pi}_*^{-i}\right)$.
\end{restatable}

NE \citep{nash1951non} is a well-established game-theoretic solution concept. Definition \ref{definition:ne} characterises the equilibrium point at convergence for cooperative MARL tasks. Based on this, we have the following result that describes Algorithm \ref{algorithm:theoretical-matrpo}'s asymptotic convergence behaviour towards NE.  

\begin{restatable}{theorem}{matrpoconvergence}
\label{proposition:convergence-matrpo}
Supposing in Algorithm \ref{algorithm:theoretical-matrpo} any permutation of agents has a fixed non-zero probability to begin the update, a sequence $\left(\boldsymbol{\pi}_{k}\right)_{k=0}^{\infty}$ of joint policies generated by the algorithm, in a cooperative Markov game, has a non-empty set of limit points, each of which is a Nash equilibrium. 
\end{restatable}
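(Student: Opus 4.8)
The plan is to combine the monotonic improvement guarantee of Theorem~\ref{theorem:monotonic-matrpo} with compactness of the $\eta$-soft policy space to obtain limit points, and then to show that along any convergent subsequence all per-agent surrogate gains vanish, which forces every agent to be best-responding at the limit. Throughout I use that in a \emph{one-shot} game $\lvert\mathcal{S}\rvert=1$, so $\rho_{\boldsymbol{\pi}}$ is degenerate, $\mathcal{J}(\boldsymbol{\pi})$ depends continuously on $\boldsymbol{\pi}$ through $\E_{\boldsymbol{a}\sim\boldsymbol{\pi}}[r(s,\boldsymbol{a})]$, and (unwinding Definition~\ref{definition:ne}) a joint policy $\boldsymbol{\bar{\pi}}$ is an NE exactly when $\max_{\pi^i\in\Pi^i}\E_{\ra\sim\pi^i}[A^i_{\boldsymbol{\bar{\pi}}}(s,\ra)]=0$ for every $i\in\mathcal{N}$. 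By Assumption~\ref{assumption:eta-soft} each $\Pi^i$ is a compact subset of the simplex, so $\boldsymbol{\Pi}=\prod_i\Pi^i$ is compact; since $\mathcal{J}$ is bounded and $(\mathcal{J}(\boldsymbol{\pi}_k))_k$ is non-decreasing (Theorem~\ref{theorem:monotonic-matrpo}), it converges to some $\mathcal{J}^{*}$, and compactness of $\boldsymbol{\Pi}$ guarantees the set of limit points is non-empty.

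Next I would control the per-iteration gains. Writing $s_{m,k}$ for the maximised summand of agent $i_m$ at iteration $k$, Equation~(\ref{eq:nice-maad-property}) shows the no-update choice $\pi^{i_m}=\pi^{i_m}_k$ attains objective value $0$, so $s_{m,k}\ge 0$; the joint improvement inequality proved just above Algorithm~\ref{algorithm:theoretical-matrpo} then gives $\mathcal{J}(\boldsymbol{\pi}_{k+1})-\mathcal{J}(\boldsymbol{\pi}_{k})\ge\sum_{m}s_{m,k}\ge 0$. Telescoping against $\mathcal{J}(\boldsymbol{\pi}_k)\to\mathcal{J}^{*}$ yields $\sum_k\sum_m s_{m,k}<\infty$, hence $s_{m,k}\to0$ for every $m$. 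The quantitative heart of the argument is that each update objective, as a function of the candidate $\hat\pi^{i_m}$, is strongly concave: the advantage term is linear, while $\hat\pi\mapsto D_{\text{KL}}^{\text{max}}(\pi^{i_m}_k,\hat\pi)=\sum_{\ra}\pi^{i_m}_k(\ra)\log\frac{\pi^{i_m}_k(\ra)}{\hat\pi(\ra)}$ has diagonal Hessian entries $\pi^{i_m}_k(\ra)/\hat\pi(\ra)^2\ge\eta$ by $\eta$-softness, so the objective is $C\eta$-strongly concave. Comparing the optimum $\pi^{i_m}_{k+1}$ with the no-update point gives $\lVert\pi^{i_m}_{k+1}-\pi^{i_m}_k\rVert^2\le 2s_{m,k}/(C\eta)\to0$, so every agent's per-iteration policy change vanishes. (Here $C$ is bounded below by a positive constant unless the reward is constant, a degenerate case in which every policy is trivially an NE.)

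Finally I would fix a limit point $\boldsymbol{\bar{\pi}}=\lim_j\boldsymbol{\pi}_{k_j}$. Because there are finitely many permutations, pigeonholing the convergent subsequence produces a single permutation $\sigma=(i_1,\dots,i_n)$ recurring along it, and since $\sigma$ is a bijection of $\mathcal{N}$ it exhausts all agents at once. I then induct on $m$: the vanishing per-step change gives $\pi^{i_l}_{k_j+1}\to\bar\pi^{i_l}$ for $l<m$, so the $m$-th summand objective converges to $L^{i_{1:m}}_{\boldsymbol{\bar{\pi}}}(\boldsymbol{\bar{\pi}}^{i_{1:m-1}},\hat\pi^{i_m})$; averaging the joint advantage over the earlier agents at their \emph{own} marginals $\boldsymbol{\bar{\pi}}^{i_{1:m-1}}$ telescopes it — exactly as in Lemma~\ref{lemma:maadlemma} — down to the single-agent quantity $\E_{\ra\sim\hat\pi^{i_m}}[A^{i_m}_{\boldsymbol{\bar{\pi}}}(s,\ra)]$. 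Maximising over $\hat\pi^{i_m}\in\Pi^{i_m}$ and using $s_{m,k_j}\to0$ forces $\max_{\hat\pi^{i_m}}\E_{\ra\sim\hat\pi^{i_m}}[A^{i_m}_{\boldsymbol{\bar{\pi}}}(s,\ra)]=0$, i.e. agent $i_m$ best-responds at $\boldsymbol{\bar{\pi}}$. As $\sigma$ covers all of $\mathcal{N}$, every agent best-responds, so $\boldsymbol{\bar{\pi}}$ is a Nash equilibrium by Definition~\ref{definition:ne}.

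I expect the main obstacle to be the bridge between the stochastic sequential updates and the limit: first proving that the per-iteration policy change vanishes (the strong-concavity estimate powered by $\eta$-softness), and then showing that the \emph{sequential} surrogate $L^{i_{1:m}}_{\boldsymbol{\pi}}$ degenerates, in the limit, to each agent's ordinary single-agent best-response gap — which works only because the already-updated earlier policies converge to their own limit-point marginals. The non-zero-probability assumption on permutations enters only to ensure a genuine permutation recurs along the subsequence; the pigeonhole step then lets one recurrent full permutation certify the best-response property for all agents simultaneously.
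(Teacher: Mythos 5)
Your skeleton is sound up to the very last step, but that last step has a genuine gap. What your argument actually establishes at a limit point $\bar{\boldsymbol{\pi}}$ is the paper's notion of TR-stationarity (Definition~\ref{def:tr-stationary}): along the sub-subsequence with a recurring permutation, $s_{m,k_j}\to 0$ together with your (correct) collapse identity $L^{i_{1:m}}_{\bar{\boldsymbol{\pi}}}\left(\bar{\boldsymbol{\pi}}^{i_{1:m-1}},\hat{\pi}^{i_m}\right) = \E_{\ra\sim\hat{\pi}^{i_m}}\left[A^{i_m}_{\bar{\boldsymbol{\pi}}}(s,\ra)\right]$ yields $\max_{\hat{\pi}^{i_m}}\bigl[\E_{\ra\sim\hat{\pi}^{i_m}}[A^{i_m}_{\bar{\boldsymbol{\pi}}}(s,\ra)] - C_{\bar{\boldsymbol{\pi}}}D_{\text{KL}}(\bar{\pi}^{i_m},\hat{\pi}^{i_m})\bigr]=0$. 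The KL penalty does not disappear in this limit (it is strictly positive for every $\hat{\pi}^{i_m}\neq\bar{\pi}^{i_m}$), yet your write-up drops it and concludes $\max_{\hat{\pi}^{i_m}}\E_{\ra\sim\hat{\pi}^{i_m}}[A^{i_m}_{\bar{\boldsymbol{\pi}}}(s,\ra)]=0$ in one breath. The inference from a zero \emph{penalized} maximum to a zero \emph{unpenalized} maximum is not automatic, and it is precisely where the paper spends the entire second half of its proof: in a one-shot game $\mathcal{J}(\pi^i,\bar{\boldsymbol{\pi}}^{-i})$ is affine in agent $i$'s policy parameters (\eqref{eq:constant-partial}), the gradient of $\pi^i\mapsto D_{\text{KL}}(\bar{\pi}^i,\pi^i)$ vanishes at $\pi^i=\bar{\pi}^i$ (\eqref{eq:partial-kl}), so TR-stationarity forces the directional derivatives of the affine reward to be non-positive at $\bar{\pi}^i$, and convexity of $\Pi^i$ (Lemma~\ref{lemma:policyspace}) upgrades this local optimality to a global best response. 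A short patch in your own language: if some $\hat{\pi}$ had $\E_{\hat{\pi}}[A^{i_m}_{\bar{\boldsymbol{\pi}}}]=c>0$, set $\hat{\pi}_t=(1-t)\bar{\pi}^{i_m}+t\hat{\pi}$; the advantage term equals $tc$ by linearity, while $D_{\text{KL}}(\bar{\pi}^{i_m},\hat{\pi}_t)=O(t^2)$ by $\eta$-softness, so the penalized objective would be positive for small $t>0$, a contradiction. Without this (or the paper's equivalent), the proof is incomplete.

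Everything before that step is correct and constitutes a genuinely different route from the paper's. The paper argues in expectation over the random permutations and extracts information only from the \emph{first} agent of each permutation: since agent $i$ is first with probability $p_i>0$, and the first summand is a single-agent objective against $\boldsymbol{\pi}_k$ itself, vanishing expected improvement gives TR-stationarity agent by agent --- this is the only place the non-zero-probability assumption enters. You instead argue pathwise: telescoping the improvement inequality (\eqref{eq:sad-trpo-monotonic}) gives $s_{m,k}\to0$ at \emph{every} position $m$; your strong-concavity estimate (the KL penalty is $\eta$-strongly convex in the candidate policy thanks to Assumption~\ref{assumption:eta-soft}) converts vanishing gains into vanishing policy steps, so the already-updated policies $\pi^{i_l}_{k_j+1}$ also converge to $\bar{\pi}^{i_l}$; and pigeonholing the finitely many permutations lets you certify every position of one recurring permutation rather than only the first. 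This buys two things the paper does not have: the positive-probability assumption becomes essentially superfluous (pigeonhole alone suffices on any sample path), and the vanishing-step-size lemma is a new ingredient. Your parenthetical handling of the degenerate case $C_{\bar{\boldsymbol{\pi}}}=0$ (constant reward, every policy an NE) is also adequate, since the strong-concavity bound is only needed along the convergent subsequence. So: keep the skeleton, but supply the TR-stationarity-to-Nash argument at the end.
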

For proof see Appendix \ref{appendix:analysis-convergence-thematrpo}. In deriving this result, the novel details introduced by Algorithm \ref{algorithm:theoretical-matrpo} played an important role. The monotonic improvement property (Theorem \ref{theorem:monotonic-matrpo}), achieved through the multi-agent advantage and the sequential update scheme, provided us with a guarantee on the convergence of the return. Furthermore, randomisation of the update order assured that, at convergence, none of the agents is incentified to make an update. The proof is finalised by excluding a possibility that the algorithm converges at non-equilibrium points.
\vspace{-10pt}
\section{Practical Algorithms}
\vspace{-10pt}
When implementing Algorithm \ref{algorithm:theoretical-matrpo} in practice,  large state and action spaces could  prevent agents from designating policies $\pi^i(\cdot|s)$ for each state $s$ separately. To handle this, we parameterise each agent's policy $\pi^i_{\theta^i}$ by  $\theta^{i}$, which, together with other agents' policies,  forms a joint policy $\boldsymbol{\pi}_{\vtheta}$  parametrised by $\vtheta = (\theta^1, \dots, \theta^n)$. 
In this section, we develop two deep MARL algorithms to optimise the $\vtheta$. 
\vspace{-5pt} 
\subsection{HATRPO}
\vspace{-5pt} 
Computing $\text{{\normalfont D}}_{\text{KL}}^{\text{max}}\big(\pi^{i_m}_{\theta^{i_m}_k}, \pi^{i_m}_{\theta^{i_m}}\big)$  in Algorithm \ref{algorithm:theoretical-matrpo} is challenging; it requires evaluating the KL-divergence for all states at each iteration.  
Similar to TRPO, one can ease this maximal KL-divergence penalty {$\text{{\normalfont D}}_{\text{KL}}^{\text{max}}\big(\pi^{i_m}_{\theta^{i_m}_k}, \pi^{i_m}_{\theta^{i_m}}\big)$} by replacing it with the expected KL-divergence constraint {\small $\E_{\rs\sim\rho_{\boldsymbol{\pi}_{\vtheta_k}}}\Big[ \text{{\normalfont D}}_{\text{KL}}\big(\pi^{i_m}_{\theta^{i_m}_k}(\cdot|\rs), \pi^{i_m}_{\theta^{i_m}}(\cdot|\rs) \big) \Big] \leq \delta$} where $\delta$ is a threshold  hyperparameter, and 
the expectation can  be  easily approximated by stochastic sampling. 
With the above amendment, we propose practical HATRPO algorithm in which,
at every iteration $k+1$, given a permutation of agents   $i_{1:n}$, agent $i_{m \in \{1,...,n\}}$ sequentially optimises its policy parameter $\theta^{i_m}_{k+1}$ by maximising a  constrained objective: 
\begin{smalleralign}
    \label{eq:matrpo-objective}
    \vspace{-5pt}
    &\theta^{i_m}_{k+1}
    = \argmax_{\theta^{i_m}}\E_{\rs\sim\rho_{\boldsymbol{\pi}_{{\vtheta}_{k}}}, \rva^{i_{1:m-1}}\sim\boldsymbol{\pi}^{i_{1:m-1}}_{{\vtheta}^{i_{1:m-1}}_{k+1}}, \ra^{i_m}\sim\pi^{i_m}_{\theta^{i_m}}}\big[ 
    A^{i_m}_{\boldsymbol{\pi}_{\vtheta_k}}(\rs, \rva^{i_{1:m-1}}, \ra^{i_m})
    \big],\nonumber\\
    &\quad \quad \quad \quad \quad \quad \text{subject to   }\E_{\rs\sim\rho_{\boldsymbol{\pi}_{\vtheta_{k}}}}\big[ 
    \text{{\normalfont D}}_{\text{KL}}\big(\pi^{i_m}_{\theta^{i_m}_{k}}(\cdot|\rs), \pi^{i_m}_{\theta^{i_m}}(\cdot|\rs)\big)\big] \leq \delta.
\end{smalleralign}
To compute the above equation, similar to TRPO, one can apply  a linear approximation to the objective function and a quadratic approximation to the KL constraint;  the optimisation problem in Equation (\ref{eq:matrpo-objective}) can be solved by a closed-form  update rule as 
\begin{smalleralign}
\vspace{-10pt}
    \theta^{i_m}_{k+1} = \theta^{i_m}_k + \alpha^j \sqrt{\frac{2\delta}{\vg^{i_m}_k(\mH^{i_m}_k)^{-1}\vg^{i_m}_k}} (\mH^{i_m}_k)^{-1}\vg^{i_m}_k \ , 
        \label{eq:matrpo-update}
        \vspace{-10pt}
\end{smalleralign}
where {\small $\mH^{i_m}_{k} = \nabla^{2}_{\theta^{i_m}}\E_{\rs\sim\rho_{\boldsymbol{\pi}_{\vtheta_k}}}\big[ \text{{\normalfont D}}_{\text{KL}}\big( \pi^{i_m}_{\theta^{i_m}_k}(\cdot|\rs),  \pi^{i_m}_{\theta^{i_m}}(\cdot|\rs) \big)\big] \big|_{\theta^{i_m}=\theta^{i_m}_k}$}  is the Hessian of the expected KL-divergence,  $\vg^{i_m}_k$ is the gradient of the objective in Equation (\ref{eq:matrpo-objective}), 
$\alpha^j<1$ is a positive coefficient that is found via backtracking line search, and the  product of  {\small $(\mH^{i_m}_k)^{-1}\vg^{i_m}_k$} can be efficiently computed  with conjugate gradient algorithm.   

The last missing piece for HATRPO is  to estimate 
{$\E_{\rva^{i_{1:m-1}}\sim\boldsymbol{\pi}^{i_{1:m-1}}_{\vtheta_{k+1}},
\ra^{i_{m}}\sim\pi^{i_{m}}_{\theta^{i_m}}}\Big[A^{i_{m}}_{\boldsymbol{\pi}_{\vtheta_{k}}}\big(\rs, \rva^{i_{1:m-1}}, \ra^{i_{m}}\big)\Big]$}, which poses new challenges because  each agent's objective has to take into account all previous agents' updates, and the size of input vaires. 
Fortunately,  with the following proposition, we can efficiently estimate this objective by  employing a joint advantage estimator. 
\begin{restatable}{proposition}{advantageestimation}
    \label{lemma:advantage-estimation}
    Let $\boldsymbol{\pi}=\prod_{j=1}^{n}\pi^{i_j}$ be a joint policy, and  $A_{\boldsymbol{\pi}}(\rs, \rva)$ be its joint advantage function. Let $\boldsymbol{\bar{\pi}}^{i_{1:m-1}} =\prod_{j=1}^{m-1}\bar{\pi}^{i_j}$ be some \textbf{other} joint policy of agents $i_{1:m-1}$, and $\hat{\pi}^{i_m}$ be some \textbf{other} policy of agent $i_m$. Then, for every state $s$,
    \begin{align}
   & \E_{\rva^{i_{1:m-1}}\sim\boldsymbol{\bar{\pi}}^{i_{1:m-1}}, \ra^{i_m}\sim\hat{\pi}^{i_m}}\big[ A^{i_{m}}_{\boldsymbol{\pi}}\big(s, \rva^{i_{1:m-1}}, \ra^{i_{m}}\big) \big] \nonumber \\
        & \ \ \ \ \ \ \ \ \ \ \ \ \ \ \ \ \ \ \ \ \ \ \ \ \ \ \ \ \ \  \ \ \ \ \ \ \ \ \ \ \ \ \ \  = \E_{\rva\sim\boldsymbol{\pi}}\Big[
        \Big(\frac{\hat{\pi}^{i_m}(\ra^{i_m}|s) }{ \pi^{i_m}(\ra^{i_m}|s)} 
        -1\Big)  \frac{\boldsymbol{\bar{\pi}}^{i_{1:m-1}}(\rva^{i_{1:m-1}}|s)}
        {\boldsymbol{\pi}^{i_{1:m-1}}(\rva^{i_{1:m-1}}|s)}A_{\boldsymbol{\pi}}(s, \rva)  \Big].  
        \label{eq:joint-adv}
    \end{align}
\end{restatable}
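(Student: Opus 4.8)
The plan is to rewrite the left-hand side --- an expectation taken under the \emph{mixed} sampling distribution $\boldsymbol{\bar{\pi}}^{i_{1:m-1}}\times\hat{\pi}^{i_m}$ of the \emph{multi-agent} advantage $A^{i_m}_{\boldsymbol{\pi}}$ --- as a single expectation taken under the \emph{joint} policy $\boldsymbol{\pi}$ of the \emph{joint} advantage $A_{\boldsymbol{\pi}}$, corrected by the two importance-sampling ratios appearing on the right. The whole argument is carried out at a fixed state $s$, so all expectations below are conditional on $s$, and I will repeatedly use that $\boldsymbol{\pi}$ factorises into independent per-agent action factors.

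First I would record the pointwise version of the identity behind Equation~(\ref{eq:nice-maad-property}): for every fixed $\va^{i_{1:m-1}}$,
\begin{align}
\E_{\ra^{i_m}\sim\pi^{i_m}}\left[A^{i_m}_{\boldsymbol{\pi}}\left(s,\va^{i_{1:m-1}},\ra^{i_m}\right)\right]=0,\nonumber
\end{align}
which follows by unrolling $A^{i_m}_{\boldsymbol{\pi}}=Q^{i_{1:m}}_{\boldsymbol{\pi}}-Q^{i_{1:m-1}}_{\boldsymbol{\pi}}$ and noting that averaging $Q^{i_{1:m}}_{\boldsymbol{\pi}}$ over $\ra^{i_m}\sim\pi^{i_m}$ --- together with the $\boldsymbol{\pi}^{-i_{1:m}}$ average already built into its definition --- reproduces $Q^{i_{1:m-1}}_{\boldsymbol{\pi}}$. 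Subtracting this zero inside the outer $\boldsymbol{\bar{\pi}}^{i_{1:m-1}}$-expectation lets me replace $\E_{\ra^{i_m}\sim\hat{\pi}^{i_m}}[\,\cdot\,]$ by $\E_{\ra^{i_m}\sim\hat{\pi}^{i_m}}[\,\cdot\,]-\E_{\ra^{i_m}\sim\pi^{i_m}}[\,\cdot\,]$; rewriting the first term by importance sampling against $\pi^{i_m}$ collapses this difference to $\E_{\ra^{i_m}\sim\pi^{i_m}}[(\hat{\pi}^{i_m}/\pi^{i_m}-1)A^{i_m}_{\boldsymbol{\pi}}]$, which is precisely where the factor $(\hat{\pi}^{i_m}(\ra^{i_m}|s)/\pi^{i_m}(\ra^{i_m}|s)-1)$ originates. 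A second importance-sampling step converts the outer average from $\boldsymbol{\bar{\pi}}^{i_{1:m-1}}$ to $\boldsymbol{\pi}^{i_{1:m-1}}$ and produces the ratio $\boldsymbol{\bar{\pi}}^{i_{1:m-1}}/\boldsymbol{\pi}^{i_{1:m-1}}$.

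The remaining and most delicate step is to turn the multi-agent advantage $A^{i_m}_{\boldsymbol{\pi}}$ into the joint advantage $A_{\boldsymbol{\pi}}$. Using the definition of the multi-agent advantage in Equation~(\ref{eq:single-agent-advantage}) together with $A_{\boldsymbol{\pi}}=Q_{\boldsymbol{\pi}}-V_{\boldsymbol{\pi}}$, I would write
\begin{align}
A^{i_m}_{\boldsymbol{\pi}}\left(s,\va^{i_{1:m-1}},a^{i_m}\right)=\E_{\rva^{-i_{1:m}}\sim\boldsymbol{\pi}^{-i_{1:m}}}\left[A_{\boldsymbol{\pi}}\left(s,\va^{i_{1:m}},\rva^{-i_{1:m}}\right)\right]+V_{\boldsymbol{\pi}}(s)-Q^{i_{1:m-1}}_{\boldsymbol{\pi}}\left(s,\va^{i_{1:m-1}}\right),\nonumber
\end{align}
where the last two terms are constant in $a^{i_m}$. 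The key observation is that any term independent of $a^{i_m}$ is annihilated by the weight $(\hat{\pi}^{i_m}/\pi^{i_m}-1)$ under $\E_{\ra^{i_m}\sim\pi^{i_m}}$, since $\E_{\ra^{i_m}\sim\pi^{i_m}}[\hat{\pi}^{i_m}/\pi^{i_m}-1]=0$; hence the baseline correction drops out and $A^{i_m}_{\boldsymbol{\pi}}$ may be replaced by $\E_{\rva^{-i_{1:m}}\sim\boldsymbol{\pi}^{-i_{1:m}}}[A_{\boldsymbol{\pi}}]$. Finally I would fold the three now-separate expectations --- over $\rva^{i_{1:m-1}}\sim\boldsymbol{\pi}^{i_{1:m-1}}$, $\ra^{i_m}\sim\pi^{i_m}$, and $\rva^{-i_{1:m}}\sim\boldsymbol{\pi}^{-i_{1:m}}$ --- into a single expectation over $\rva\sim\boldsymbol{\pi}$, which is legitimate exactly because $\boldsymbol{\pi}$ factorises across agents; this yields the right-hand side verbatim.

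I expect the main obstacle to be careful bookkeeping rather than any deep idea: keeping the nested, mixed expectations and their sampling policies straight, and justifying that the baseline terms $V_{\boldsymbol{\pi}}(s)$ and $Q^{i_{1:m-1}}_{\boldsymbol{\pi}}(s,\va^{i_{1:m-1}})$ truly vanish. They are constant in $a^{i_m}$ but \emph{not} in $\va^{i_{1:m-1}}$, so the cancellation must be argued at the level of the inner $\ra^{i_m}$-expectation, before the outer average over $\boldsymbol{\pi}^{i_{1:m-1}}$ is applied; and the final reassembly into one $\rva\sim\boldsymbol{\pi}$ expectation relies essentially on the independence of the per-agent factors of $\boldsymbol{\pi}$, which I would state explicitly.
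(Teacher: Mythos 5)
Your proof is correct, but it is organised differently from the paper's. The paper works backwards from the right-hand side: it multiplies out $\left(\hat{\pi}^{i_m}/\pi^{i_m}-1\right)\cdot\boldsymbol{\bar{\pi}}^{i_{1:m-1}}/\boldsymbol{\pi}^{i_{1:m-1}}$ into the difference of the two compound ratios $\boldsymbol{\bar{\pi}}^{i_{1:m}}/\boldsymbol{\pi}^{i_{1:m}}$ and $\boldsymbol{\bar{\pi}}^{i_{1:m-1}}/\boldsymbol{\pi}^{i_{1:m-1}}$, removes each ratio by importance sampling, recognises the resulting inner $\boldsymbol{\pi}^{-i_{1:m}}$- and $\boldsymbol{\pi}^{-i_{1:m-1}}$-averages of $A_{\boldsymbol{\pi}}$ as the multi-agent advantages $A^{i_{1:m}}_{\boldsymbol{\pi}}$ and $A^{i_{1:m-1}}_{\boldsymbol{\pi}}$, and finally invokes Lemma \ref{lemma:maadlemma} to identify $A^{i_{1:m}}_{\boldsymbol{\pi}}-A^{i_{1:m-1}}_{\boldsymbol{\pi}}$ with $A^{i_m}_{\boldsymbol{\pi}}$. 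You instead go forwards from the left-hand side and never use Lemma \ref{lemma:maadlemma}: you manufacture the factor $\left(\hat{\pi}^{i_m}/\pi^{i_m}-1\right)$ by subtracting the zero $\E_{\ra^{i_m}\sim\pi^{i_m}}\left[A^{i_m}_{\boldsymbol{\pi}}\left(s,\va^{i_{1:m-1}},\ra^{i_m}\right)\right]=0$ and importance-sampling, and you dispose of the baseline terms $V_{\boldsymbol{\pi}}(s)-Q^{i_{1:m-1}}_{\boldsymbol{\pi}}\left(s,\va^{i_{1:m-1}}\right)$ with the complementary zero-mean fact $\E_{\ra^{i_m}\sim\pi^{i_m}}\left[\hat{\pi}^{i_m}/\pi^{i_m}-1\right]=0$. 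Both routes rest on the same three ingredients --- the product structure of $\boldsymbol{\pi}$, importance sampling, and the definitional link between conditional averages of $A_{\boldsymbol{\pi}}$ and the multi-agent advantages --- so neither is deeper or more general than the other; the paper's chain of equalities is more compact, while your version makes explicit where each factor of the estimator comes from (the ratio from a change of measure, the $-1$ from a control-variate/baseline subtraction) and correctly flags the one subtle point: the baselines depend on $\va^{i_{1:m-1}}$, so their cancellation must be carried out inside the inner $\ra^{i_m}$-expectation, before the outer average is taken, which your ordering of expectations handles properly.
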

For proof see Appendix \ref{appendix:proof-of-advantage-estimation}. One benefit of applying Equation (\ref{eq:joint-adv}) is that agents only need to maintain a joint advantage estimator $A_{\boldsymbol{\pi}}(\rs, \rva)$ rather than  one centralised critic for each individual agent (e.g., unlike  CTDE methods such as MADDPG). 
Another practical benefit one can draw is that, given an estimator {$\hat{A}(\rs, \ra)$} of the advantage function {$A_{\boldsymbol{\pi}_{\vtheta_{k}}}(\rs, \rva)$}, for example GAE \citep{schulman2015high}, we can estimate {\small $\E_{\rva^{i_{1:m-1}}\sim\boldsymbol{\pi}^{i_{1:m-1}}_{\vtheta^{i_{1:m-1}}_{k+1}}, \ra^{i_{m}}\sim\pi^{i_{m}}_{\theta^{i_m}}}\left[
A^{i_{m}}_{\boldsymbol{\pi}_{\vtheta_{k}}}\left(s, \rva^{i_{1:m-1}}, \ra^{i_{m}} \right)\right]$} with an estimator  of 
\begin{smalleralign}
 	\vspace{-5pt}
    \label{eq:mad-estimator}
        \Big(\frac{ \pi^{i_{m}}_{\vtheta}(\ra^{i_{m}}|s) }{ \pi^{i_{m}}_{\vtheta_{k}}(\ra^{i_{m}}|s) } - 1\Big)M^{i_{1:m}}\big(s, \rva \big),
        \ \ \ \ \ \text{where} \ \ 
        M^{i_{1:m}} = \frac{\boldsymbol{\bar{\pi}}^{i_{1:m-1}}(\rva^{i_{1:m-1}}|s)}
        {\boldsymbol{\pi}^{i_{1:m-1}}(\rva^{i_{1:m-1}}|s)}\hat{A}\big(s, \rva \big).
\end{smalleralign}
Notably, Equation (\ref{eq:mad-estimator}) aligns nicely with    the sequential update scheme  in HATRPO. 
For agent  $i_m$,  since previous agents $i_{1:m-1}$ have already made their updates, the compound policy ratio for $M^{i_{1:m}}$ in Equation (\ref{eq:mad-estimator}) is easy to compute. 
Given a batch $\mathcal{B}$ of trajectories with length $T$, we can estimate the gradient with respect to policy parameters (derived in Appendix \ref{appendix:derive-grad}) as follows, 
\vspace{-5pt}
\begin{smalleralign}
    \hat{\vg}^{i_m}_k = \frac{1}{|\mathcal{B}|}\sum_{\tau\in \mathcal{B}}\sum\limits_{t=0}^{T}M^{i_{1:m}}(\rs_t, \rva_t)\nabla_{\theta^{i_m}}\log\pi^{i_m}_{\theta^{i_m}}(\ra^i_t|\rs_t)\big|_{\theta^{i_m}=\theta^{i_m}_k}.\nonumber
\end{smalleralign}
The term $-1\cdot M^{i_{1:m}}(\rs, \rva)$ of Equation (\ref{eq:mad-estimator}) is not reflected in $\hat{\vg}^{i_m}_k$, as it only introduces a constant with zero gradient. Along with the Hessian  of the expected KL-divergence, \emph{i.e.}, $\mH^{i_m}_k$, 
we can update $\theta_{k+1}^{i_m}$ by following Equation (\ref{eq:matrpo-update}). 
The detailed pseudocode of HATRPO is listed in Appendix \ref{appendix:matrpo}.
\vspace{-5pt}
\subsection{HAPPO}
\vspace{-5pt}
To further alleviate the computation burden from $\mH^{i_m}_k$ in HATRPO, one can follow the idea of  PPO  in Equation (\ref{eq:ppoclip}) by    considering only using first order derivatives. 
This is achieved by making agent $i_m$ choose a policy parameter $\theta^{i_m}_{k+1}$ which maximises the  clipping objective of 
\begin{smalleralign}
 	\vspace{-10pt}
    & \E_{\rs\sim\rho_{\boldsymbol{\pi_{\vtheta_{k}}}}, \rva^\sim\boldsymbol{\pi_{\vtheta_{k}}}}
    \Bigg[  
    \min\Bigg( \frac{ \pi^{i_{m}}_{\theta^{i_{m}}}(\ra^{i}|\rs) }{ \pi^{i_{m}}_{\theta^{i_{m}}_{k}}(\ra^{i}|\rs)} 
    M^{i_{1:m}}\left( \rs, \rva \right) ,
    \text{clip}\bigg( 
    \frac{ \pi^{i_{m}}_{\theta^{i_{m}}}(\ra^{i}|\rs) }{ \pi^{i_{m}}_{\theta^{i_{m}}_{k}}(\ra^{i}|\rs)},
    1\pm \epsilon
    \bigg)
    M^{i_{1:m}}\left( \rs, \rva \right) 
    \Bigg)
    \Bigg].
    \end{smalleralign}
The optimisation process can be performed by stochastic gradient methods such as Adam \citep{kingma2014adam}. 
We refer to the above procedure as HAPPO and  Appendix \ref{appendix:mappo} for its full pseudocode.
\vspace{-15pt}
\subsection{Related Work}
\label{section:related-work}
\vspace{-5pt}

We are fully aware of previous attempts that tried to extend TRPO/PPO into MARL. 
Despite empirical successes,  none of them managed to propose a theoretically-justified  trust region protocol in multi-agent learning, or maintain the monotonic improvement property.  Instead, they tend to impose certain assumptions to enable direct implementations of TRPO/PPO in MARL problems. For example,  IPPO \citep{de2020independent} assume homogeneity of action spaces for all agents and enforce parameter sharing.  
 \cite{mappo} proposed MAPPO which enhances IPPO by considering a joint critic function and  finer implementation techniques for on-policy methods.  Yet, it still suffers similar drawbacks of IPPO due to the lack of monotonic improvement guarantee especially when the parameter-sharing condition is switched off. 
\cite{wen2021game} adjusted PPO for MARL by considering a game-theoretical approach at the meta-game level among agents. Unfortunately, it can only deal with two-agent cases due to the intractability of Nash equilibrium. 
Recently, \cite{li2020multi} tried to implement TRPO for MARL through distributed consensus optimisation;  however, they enforced the  same ratio  ${\bar{\pi}^i(a^i|s)}/{\pi^i(a^i|s)}$ for all agents (see their Equation (7)), which, similar to parameter sharing, largely limits the policy space for optimisation. Moreover, their method comes with a $\delta/n$ KL-constraint threshold that fails to consider scenarios with large agent number. 

One of the key ideas behind our HATRPO/HAPPO is the  sequential update scheme. A similar idea of multi-agent sequential update was also discussed  in the context of dynamic programming  \citep{bertsekas2019multiagent} where   artificial  ``in-between" states have to be considered. On the contrary,  our sequential update sceheme is developed based on Lemma \ref{lemma:maadlemma}, which does not  require  any artificial assumptions and hold  for any cooperative games.   Furthermore, \cite{bertsekas2019multiagent} requires to maintain a fixed order of updates that is  pre-defined for the task, whereas the order  in  HATRPO/MAPPO can be randomised at each iteration, which also offers desirable convergence property, as stated in Proposition \ref{proposition:convergence-matrpo} and also verified through ablation studies in Appendix \ref{appendix:ablations}.  The idea of sequential update also appeared in principal component analysis; in EigenGame \citep{gemp2020eigengame} eigenvectors, represented as players, maximise their own utility functions one-by-one. Although EigenGame provably solves the PCA problem, it is of little use in MARL, where a single iteration of sequential updates is insufficient to learn complex policies. Furthermore, its design and analysis rely on closed-form matrix calculus, which has no extension to MARL.

Lastly, we would like to highlight the importance of the decomposition result in Lemma \ref{lemma:maadlemma}. This result could serve as an effective solution to value-based methods  in MARL where tremendous efforts have been made to decompose the joint Q-function into  individual Q-functions when the joint Q-function are decomposable \citep{rashid2018qmix}.
Lemma \ref{lemma:maadlemma}, in contrast, is a general result that holds for any cooperative  MARL problems regardless of decomposibility. As such, we think of it as an appealing contribution to  future developments on   value-based MARL methods.

\begin{figure*}[t!]
\vspace{-5pt}
 	\begin{subfigure}{0.33\linewidth}
 			\centering
 			\includegraphics[width=1.8in]{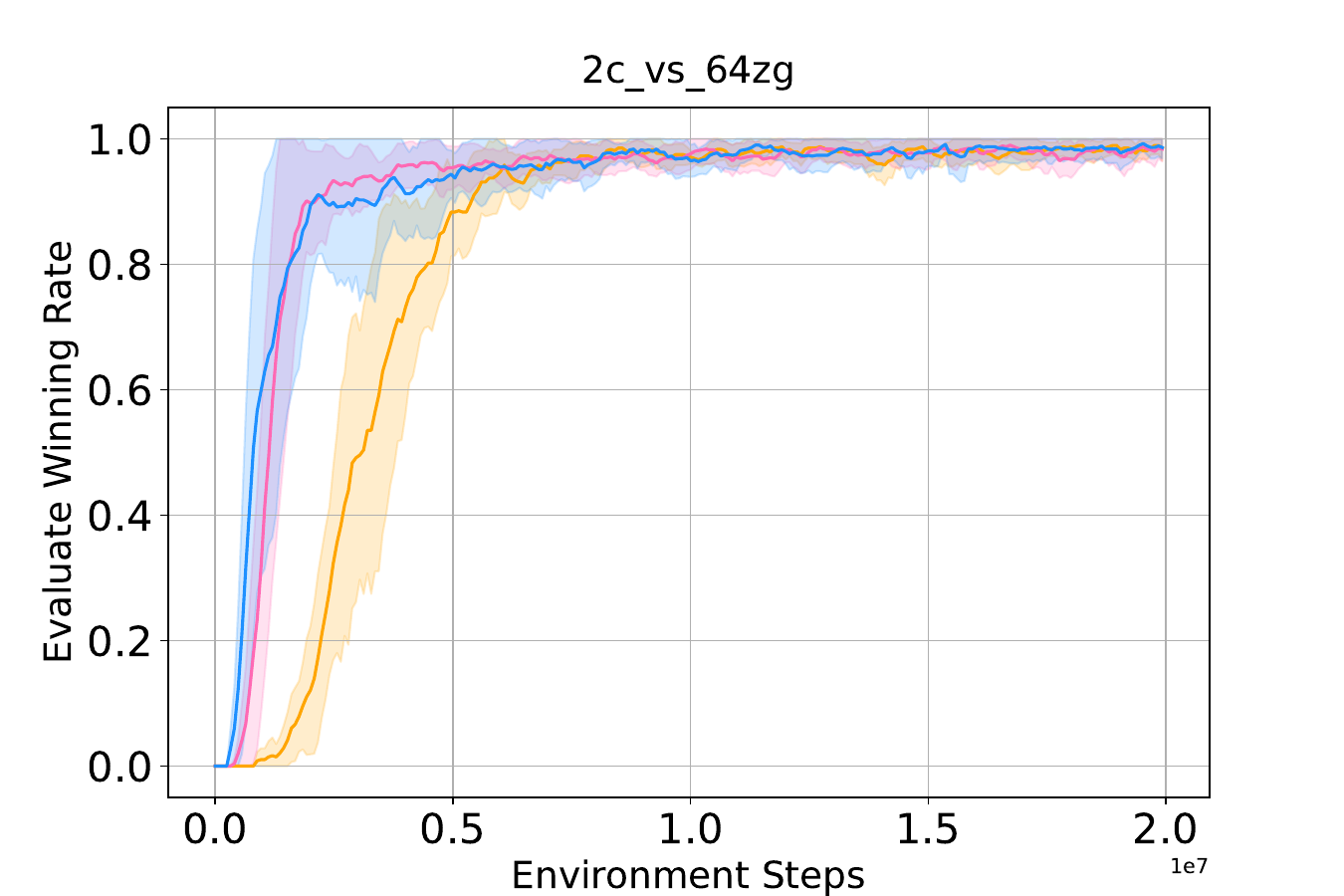}
 			\caption{2c-vs-64zg (hard)}
 			\label{fig:exp_win_rate_2c_vs_64zg}
 		\end{subfigure}%
 	\begin{subfigure}{0.33\linewidth}
 			\centering
 			\includegraphics[width=1.8in]{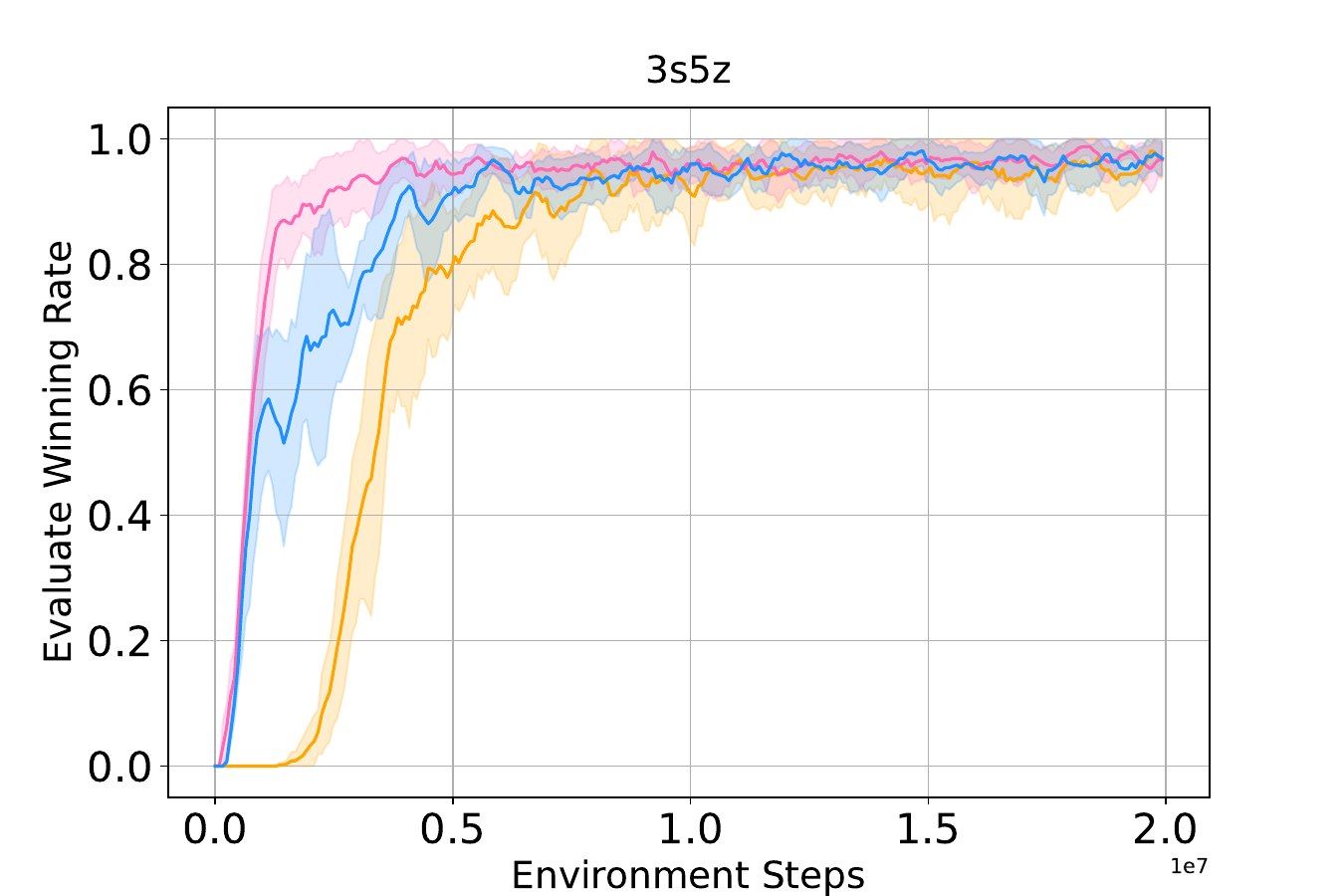}
 			\caption{3s5z (hard)}
 			\label{fig:exp_win_rate_3s5z}
 		\end{subfigure}
 	\begin{subfigure}{0.33\linewidth}
 			\centering
 			\includegraphics[width=1.8in]{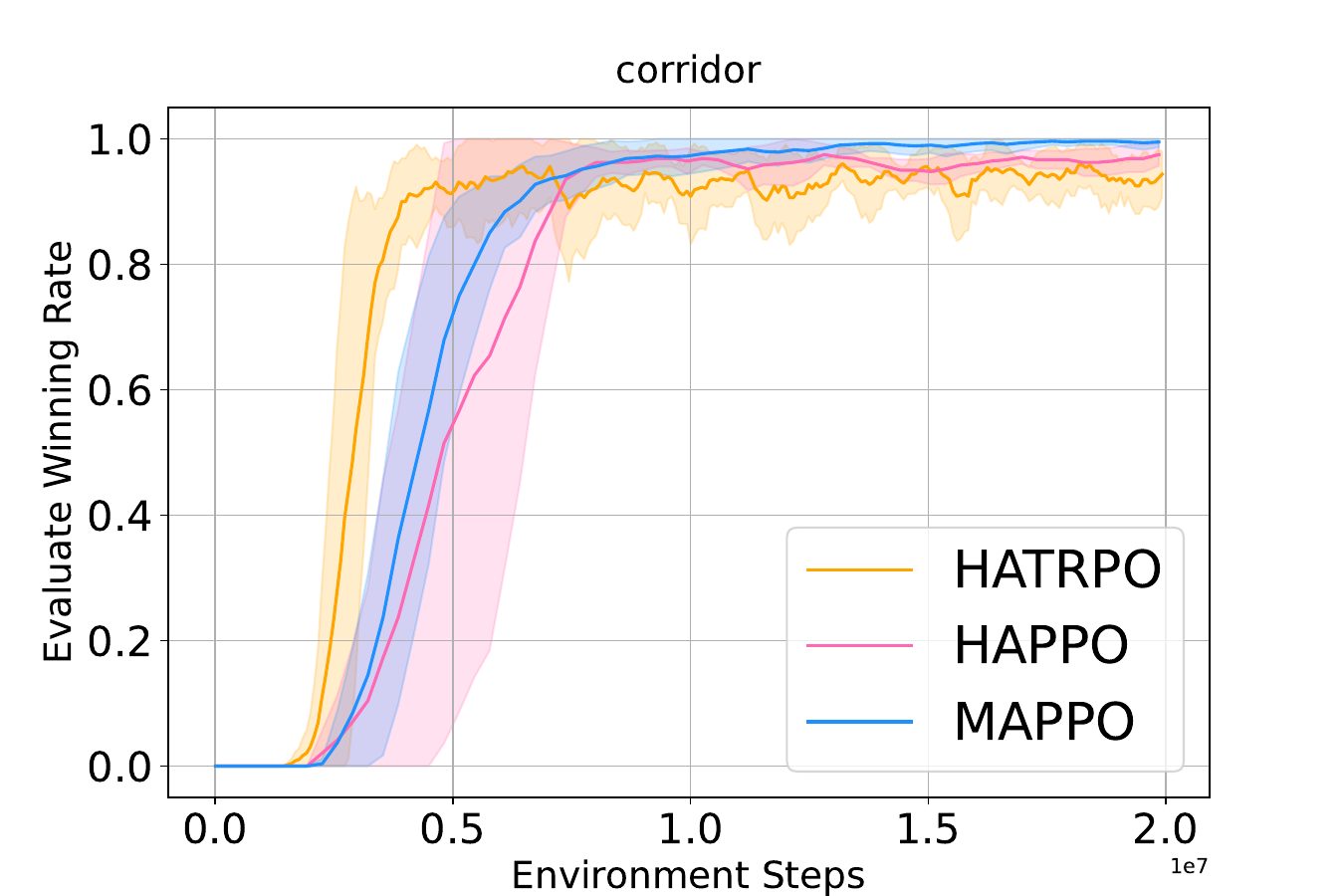}
 			\caption{corridor (super hard)}
 			\label{fig:exp_win_rate_corridor}
 		\end{subfigure}
    \vspace{-5pt}
 	\caption{\normalsize Performance comparisons between HATRPO/HAPPO and MAPPO on three SMAC tasks. Since   all methods achieve 100\% win rate, we believe SMAC is not sufficiently difficult to discriminate the capabilities  of these algorithms, especially when non-parameter sharing is not  required. 
 	}  
 	\vspace{-15pt}
 	\label{fig:smac}
 \end{figure*} 
 
  \vspace{-10pt}
\section{Experiments and Results}
 	\vspace{-10pt}
We consider two most common benchmarks---StarCraftII Multi-Agent Challenge (SMAC) \citep{samvelyanstarcraft} and Multi-Agent MuJoCo \citep{de2020deep}---for evaluating MARL algorithms.
 All hyperparameter settings and implementations details can be found in Appendix \ref{appendix:experiments}. 
 
\textbf{StarCraftII Multi-Agent Challenge (SMAC)}. SMAC   contains a set of StarCraft maps in which a team of ally units aims to defeat the opponent team. IPPO \citep{de2020independent} and MAPPO \citep{mappo} are known to achieve supreme results on this benchmark. By adopting parameter sharing, these methods achieve a winning rate of $100$\% on most maps, even including the maps that have heterogeneous agents.
Therefore, we hypothesise that non-parameter sharing is \textbf{not necessarily} required and the trick of  sharing policies is sufficient to solve SMAC tasks.
We test our methods on two hard maps and one super-hard; results on Figure \ref{fig:smac} confirm that  SMAC is not sufficiently difficult to show off the capabilities of HATRPO/HAPPO when compared against existing methods. 

\textbf{Multi-Agent MuJoCo}.  In comparison to SMAC, we believe Mujoco enviornment provides a more suitable testing case for our methods. 
MuJoCo tasks challenge a robot to learn an optimal way of motion; Multi-Agent MuJoCo models each part of a robot as an independent agent, for example,  a leg for a spider or an arm for a swimmer.  
With the increasing variety of the body parts, modelling heterogeneous policies becomes \textbf{necessary}.  
Figure \ref{fig:mujoco} demonstrate that, in all scenarios, HATRPO and HAPPO enjoy superior performance over those of parameter-sharing methods: IPPO and MAPPO, and  also outperform non-parameter sharing MADDPG \citep{maddpg} both in terms of reward values and variance. 
It is also worth noting that  the performance gap between HATRPO and its rivals enlarges   with the increasing number of agents.
Meanwhile, we can observe that 
HATRPO outperforms HAPPO in almost all tasks; we believe it is because the hard KL constraint in HATRPO, compared to the clipping version in HAPPO, relates more closely to Algorithm \ref{algorithm:theoretical-matrpo} that attains monotonic improvement guarantee. 
\begin{figure*}[t!]
 	\begin{subfigure}{0.33\linewidth}
 			\centering
 			\includegraphics[width=1.8in]{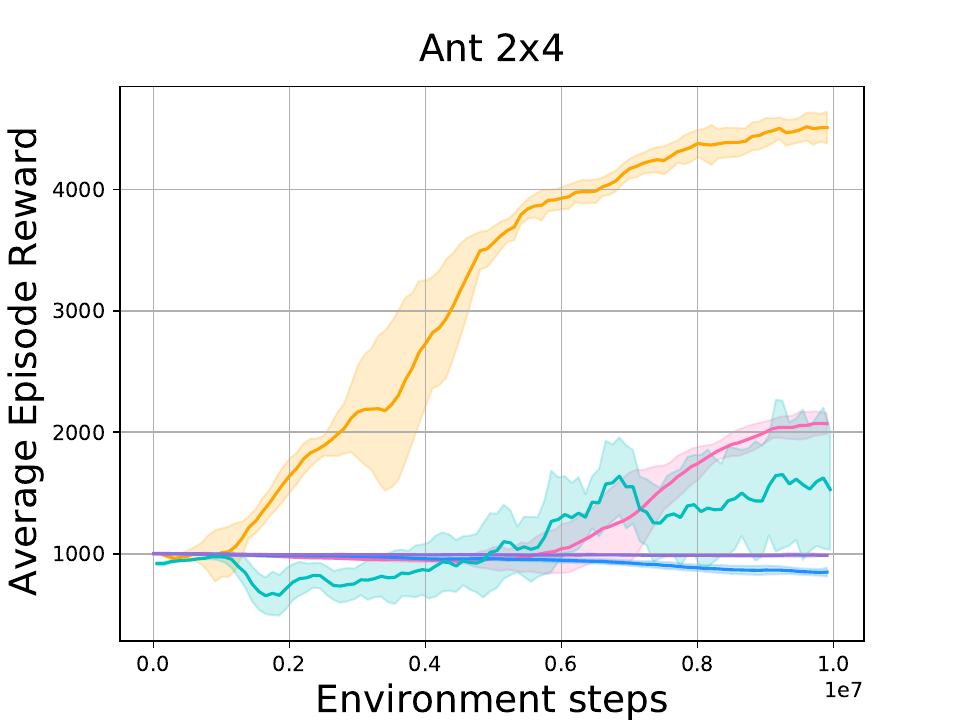}
 			\caption{2x4-Agent Ant }
 			\label{fig:exp_Ant2x4}
 		\end{subfigure}%
 	\begin{subfigure}{0.33\linewidth}
 			\centering
 			\includegraphics[width=1.8in]{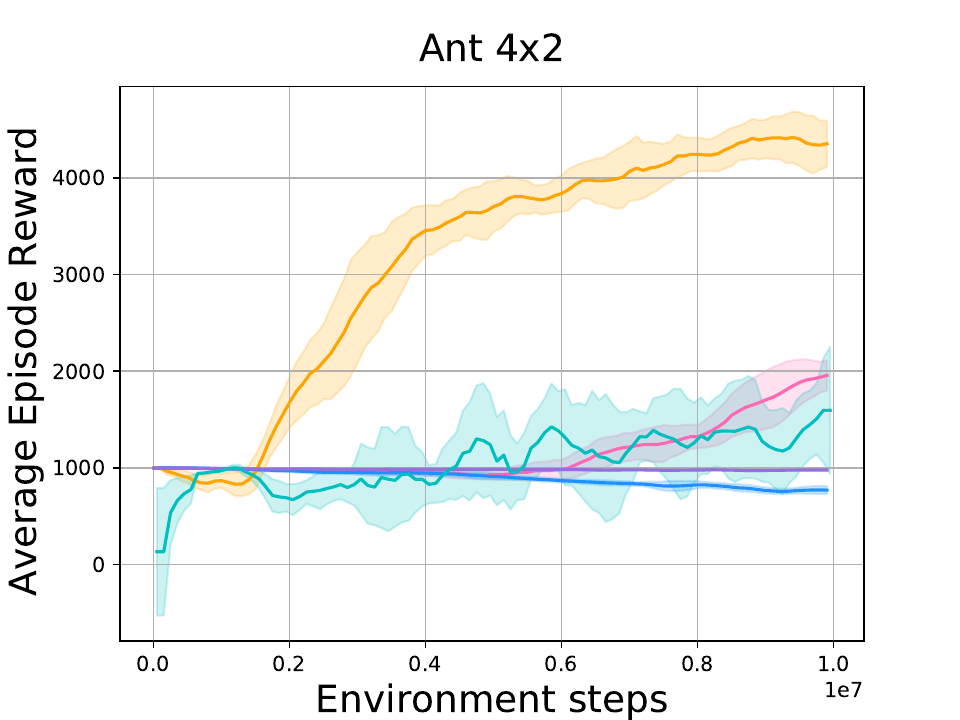}
 			\caption{4x2-Agent Ant}
 			\label{fig:exp_Ant4x2}
 		\end{subfigure}
 	\begin{subfigure}{0.33\linewidth}
 			\centering
 			\includegraphics[width=1.8in]{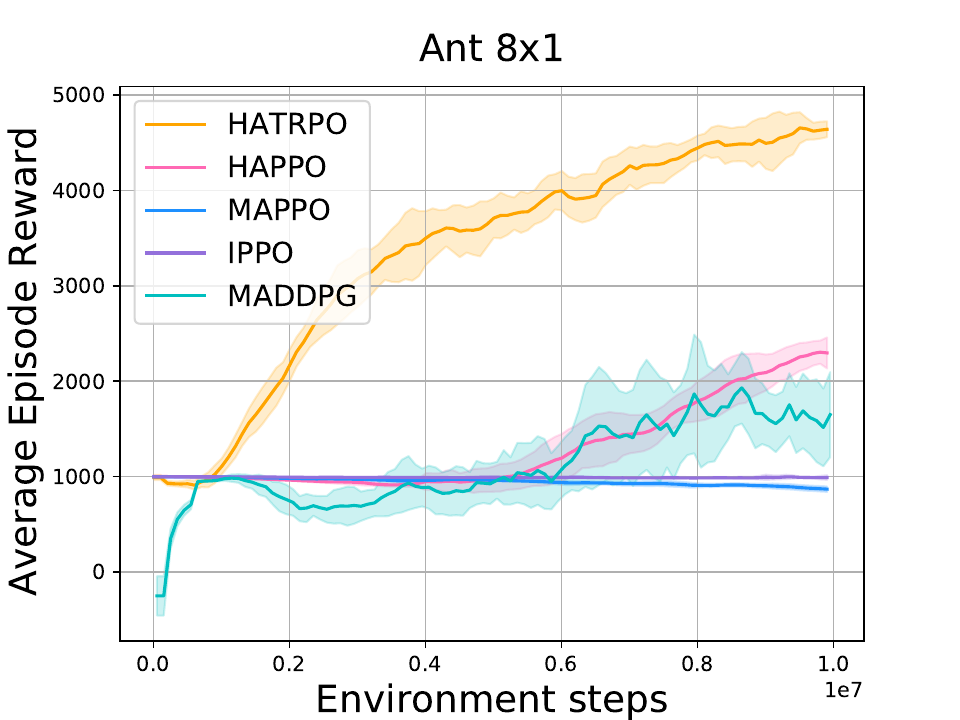}
 			\caption{8x1-Agent Ant}
 			\label{fig:exp_Ant8x1}
 		\end{subfigure}
    \vspace{-0pt}
 	\\
 	\begin{subfigure}{0.33\linewidth}
 			\centering
 			\includegraphics[width=1.8in]{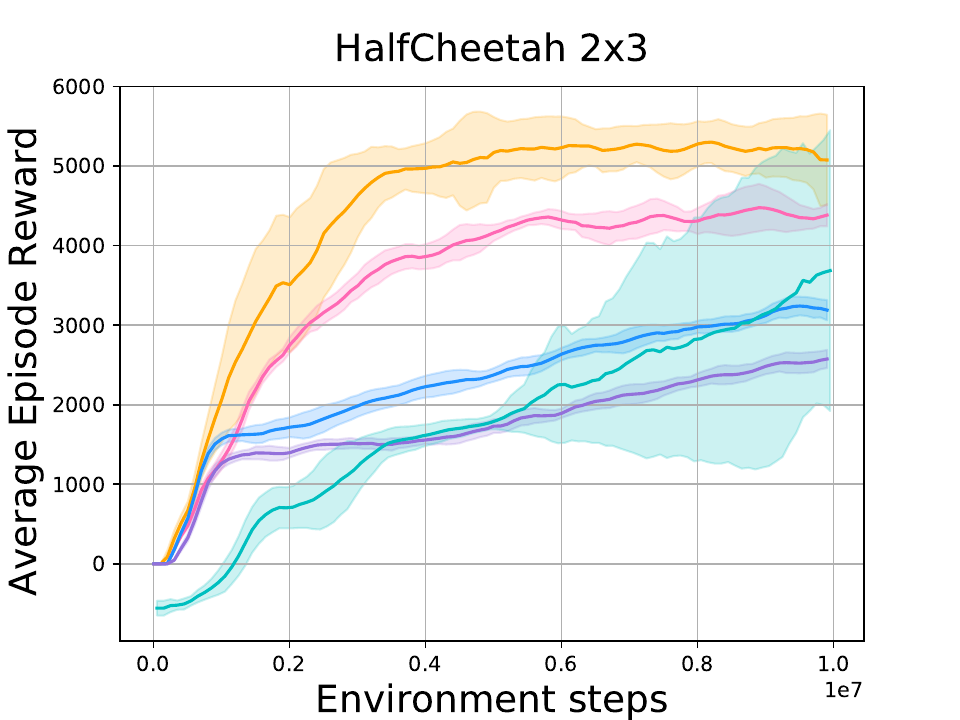}
 			\caption{2x3-Agent HalfCheetah }
 			\label{fig:exp_HalfCheetah2x3}
 		\end{subfigure}%
 	\begin{subfigure}{0.33\linewidth}
 			\centering
 			\includegraphics[width=1.8in]{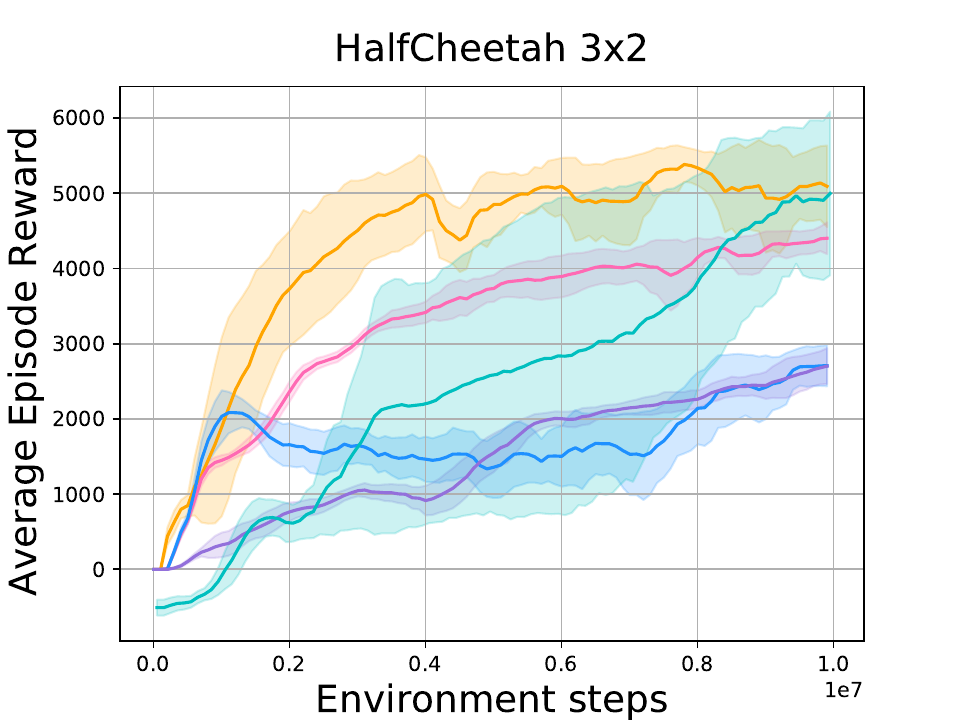}
 			\caption{3x2-Agent HalfCheetah}
 			\label{fig:exp_HalfCheetah3x2}
 		\end{subfigure}
 	\begin{subfigure}{0.33\linewidth}
 			\centering
 			\includegraphics[width=1.8in]{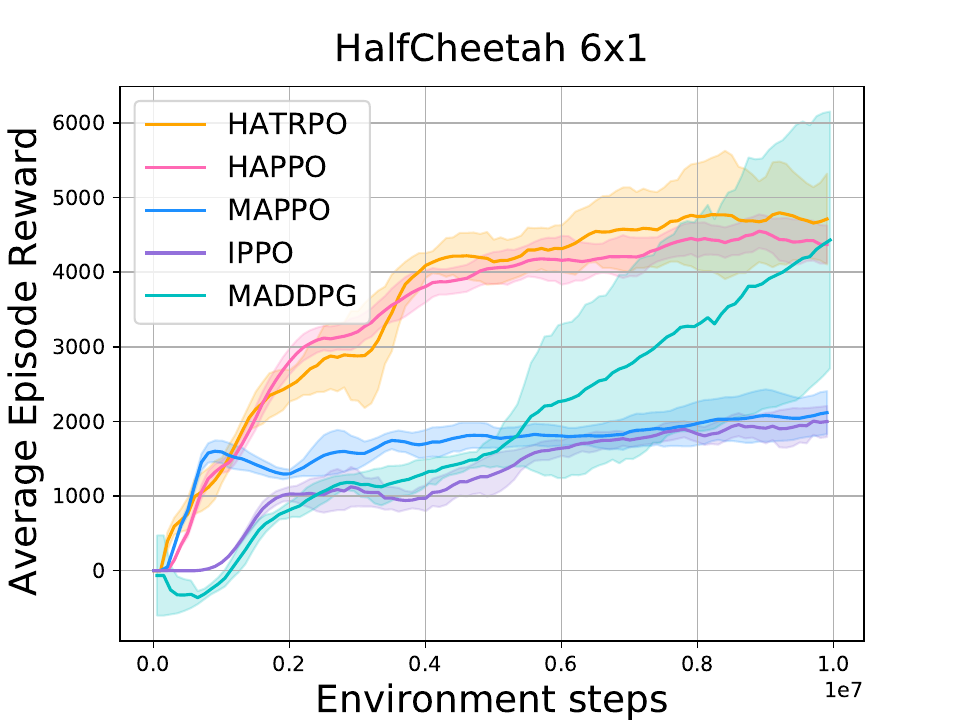}
 			\caption{6x1-Agent HalfCheetah}
 			\label{fig:exp_HalfCheetah6x1}
 		\end{subfigure}
 		\\
    \vspace{-0pt}
 	\begin{subfigure}{0.33\linewidth}
 			\centering
 			\includegraphics[width=1.8in]{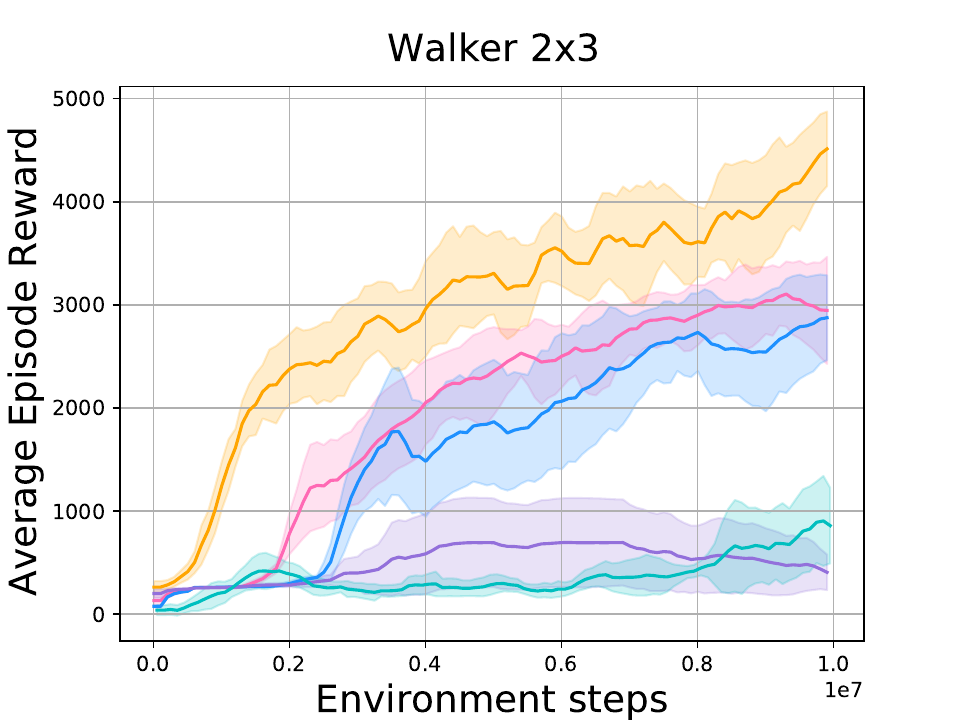}
 			\caption{2x3-Agent Walker }
 			\label{fig:exp_Walker2x3}
 		\end{subfigure}%
 	\begin{subfigure}{0.33\linewidth}
 			\centering
 			\includegraphics[width=1.8in]{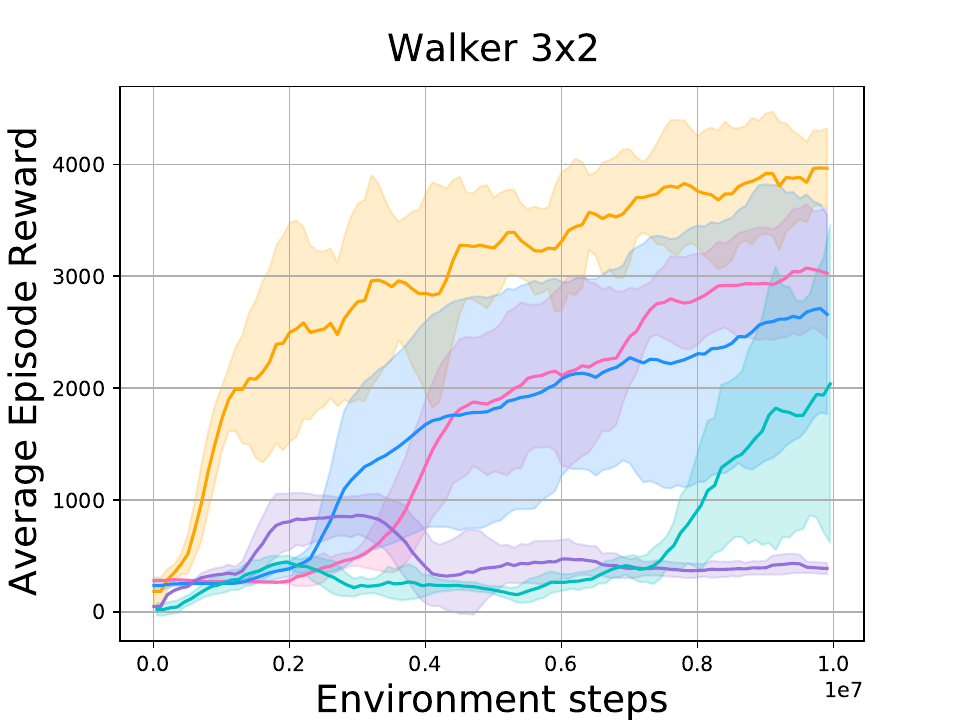}
 			\caption{3x2-Agent Walker}
 			\label{fig:exp_Walker3x2}
 		\end{subfigure}
 	\begin{subfigure}{0.33\linewidth}
 			\centering
 			\includegraphics[width=1.8in]{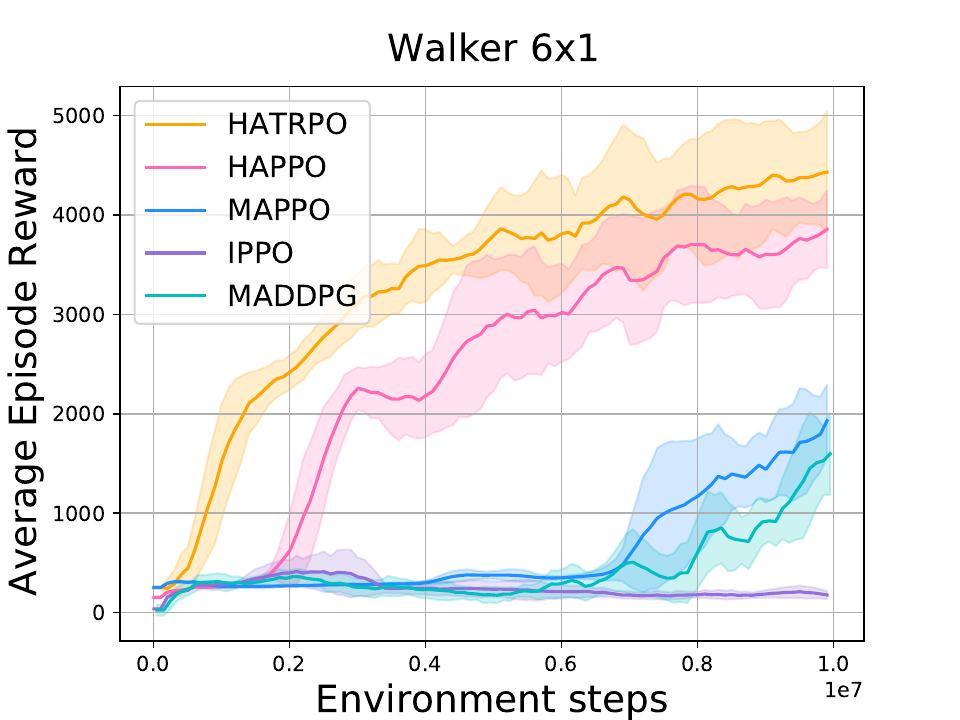}
 			\caption{6x1-Agent Walker}
 			\label{fig:exp_Walker6x1}
 		\end{subfigure}
    \vspace{-0pt}
    \\
 	\begin{subfigure}{0.33\linewidth}
 			\centering
 			\includegraphics[width=1.8in]{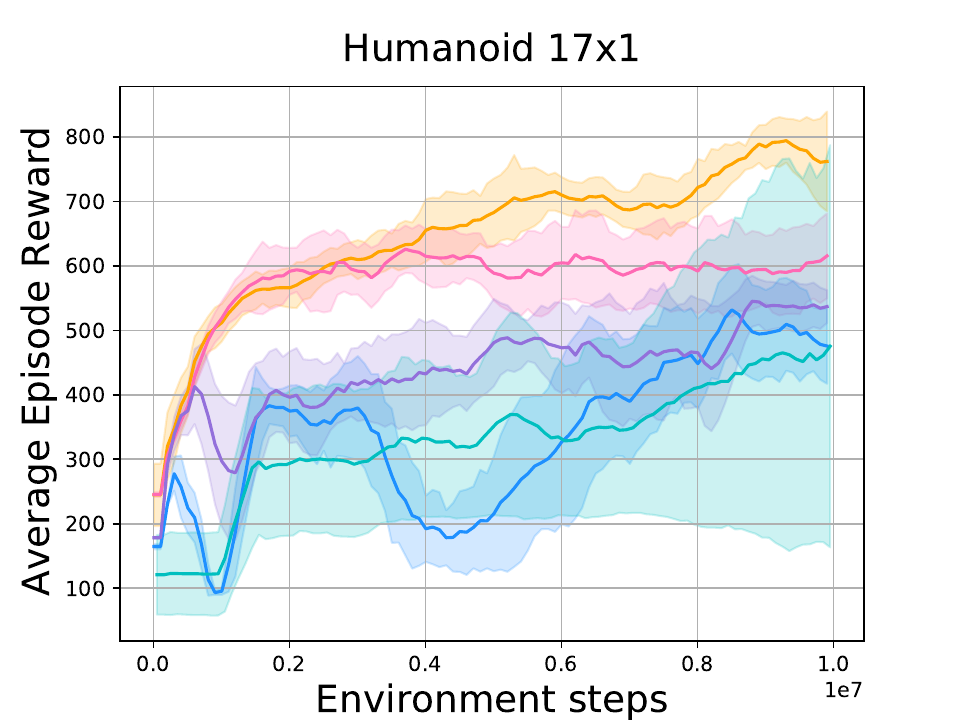}
 			\caption{17x1-Agent Humanoid }
 			\label{fig:exp_Humanoid17x1}
 		\end{subfigure}%
 	\begin{subfigure}{0.33\linewidth}
 			\centering
 			\includegraphics[width=1.8in]{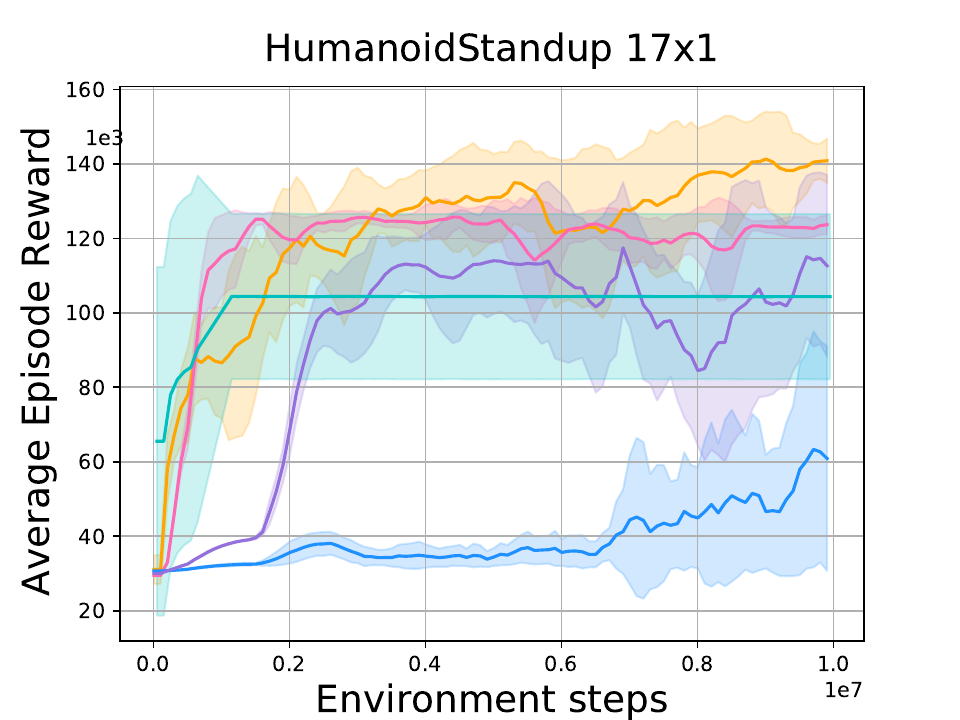}
 			\caption{17x1-Agent HumanoidStandup}
 			\label{fig:exp_HumanoidStandup17x1}
 		\end{subfigure}
 	\begin{subfigure}{0.33\linewidth}
 			\centering
 			\includegraphics[width=1.8in]{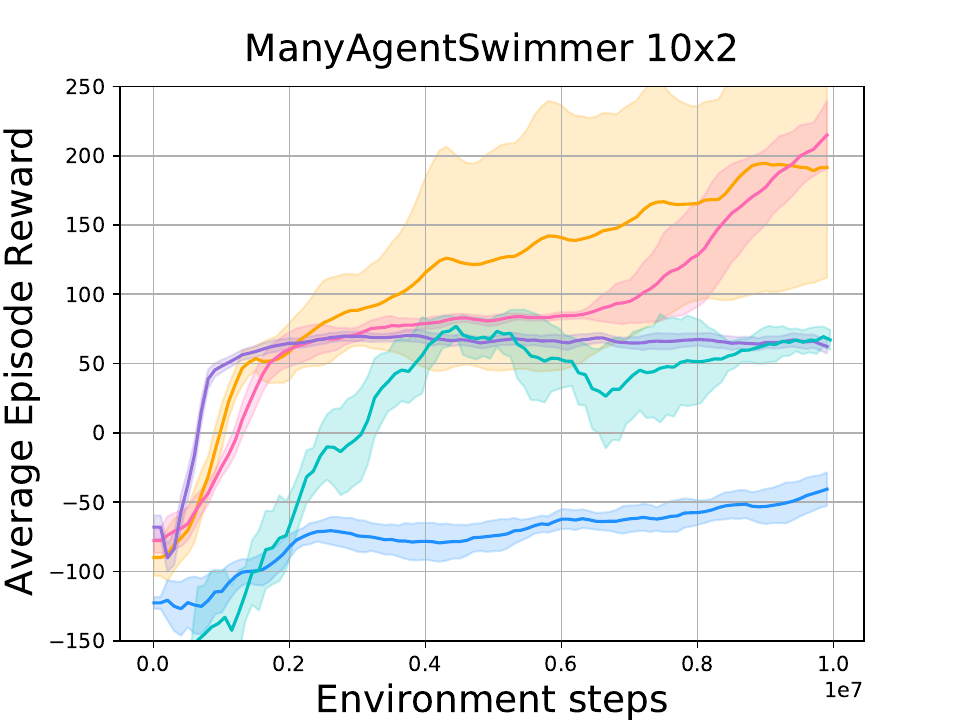}
 			\caption{10x2-Agent Swimmer}
 			\label{fig:exp_Swimmer10x2}
 		\end{subfigure}
    \vspace{-5pt}
 	\caption{\normalsize Performance comparison on multiple Multi-Agent MuJoCo tasks. HAPPO and HATRPO consistently outperform their rivals, thus establishing a new state-of-the-art algorithm for MARL.  
 The performance gap enlarges   with increasing number of agents. 	} 
 	\label{fig:mujoco}
 	\vspace{-16pt}
 \end{figure*}  
\vspace{-7pt}
\section{Conclusion}
\vspace{-7pt}
In this paper, we successfully apply trust region learning to multi-agent settings by  proposing the first MARL algorithm that attains theoretically-justified monotonical improvement property.  
The key to our development is the multi-agent advantage decomposition lemma that holds in general with no need for any assumptions on agents   sharing parameters or  the joint value function being decomposable.    
Based on this, we introduced two practical deep MARL algorithms: HATRPO and HAPPO. 
Experimental results on both discrete and continuous control tasks (\emph{i.e.}, SMAC and Multi-Agent Mujoco) 
confirm their  state-of-the-art performance. 
For future work, we will consider  incorporating the  safety constraint into HATRPO/HAPPO and propose rigorous safety-aware MARL solutions.   

\clearpage


\bibliographystyle{iclr2022_conference}
\bibliography{sample.bib}

\newpage
\appendices
\startcontents
\printcontents{}{1}{}
\clearpage
\section{Preliminaries}

\subsection{Definitions and Assumptions}
\label{appendix:preliminary-definitions}

\etasoft*

\NE*

\begin{restatable}{definition}{distance}
     Let $X$ be a finite set and $p:X\to\mathbb{R}$, $q:X\to\mathbb{R}$ be two maps. Then, the notion of \textbf{distance} between $p$ and $q$ that we adopt is given by $||p-q|| \triangleq \max_{x\in X}|p(x)-q(x)|$.
\end{restatable}

\subsection{Proofs of Preliminary Results}
\label{appendix:preliminary-proofs}

\begin{restatable}{lemma}{policyspace}
    \label{lemma:policyspace}
    Every agent $i$'s policy space $\Pi^{i}$ is convex and compact under the maximum norm.
\end{restatable}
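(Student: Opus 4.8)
The plan is to identify $\Pi^i$ concretely as a subset of the finite-dimensional Euclidean space $\mathbb{R}^{|\mathcal{S}|\times|\mathcal{A}^i|}$ and then read off convexity and compactness directly from the constraints that define it. Each policy $\pi^i$ is completely determined by the numbers $\pi^i(a^i|s)$ for $s\in\mathcal{S}$, $a^i\in\mathcal{A}^i$, so $\Pi^i$ may be viewed as the set of vectors satisfying (i) the $\eta$-softness inequalities $\pi^i(a^i|s)\geq\eta$ coming from Assumption \ref{assumption:eta-soft}, and (ii) the normalisation equalities $\sum_{a^i\in\mathcal{A}^i}\pi^i(a^i|s)=1$ for every $s$. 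Since $\mathcal{S}$ and $\mathcal{A}^i$ are finite, this is a subset of a finite-dimensional space, which is exactly what makes the topological part routine.

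For convexity, I would take arbitrary $\pi^i,\bar\pi^i\in\Pi^i$ and $\lambda\in[0,1]$ and check that $\lambda\pi^i+(1-\lambda)\bar\pi^i$ remains in $\Pi^i$. The softness constraint is preserved because $\lambda\pi^i(a^i|s)+(1-\lambda)\bar\pi^i(a^i|s)\geq\lambda\eta+(1-\lambda)\eta=\eta$, and the normalisation constraint is preserved because summing over $a^i$ gives $\lambda\cdot 1+(1-\lambda)\cdot 1=1$. More structurally, every constraint cutting out $\Pi^i$ is either a linear equality or a closed half-space inequality, and an intersection of such sets is automatically convex.

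For compactness, the key observation is that $\Pi^i$ lives in a finite-dimensional space, where the maximum norm adopted in the distance definition is equivalent to the Euclidean norm and all norms induce the same topology; hence by the Heine–Borel theorem it suffices to show that $\Pi^i$ is closed and bounded. Boundedness holds because each coordinate satisfies $\eta\leq\pi^i(a^i|s)\leq 1$, where the upper bound comes from the normalisation equality together with nonnegativity of the remaining entries. Closedness holds because $\Pi^i$ is the intersection of the closed half-spaces $\{\pi^i(a^i|s)\geq\eta\}$ and the closed hyperplanes $\{\sum_{a^i}\pi^i(a^i|s)=1\}$, each being the preimage of a closed set under a continuous (indeed linear) map.

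I do not expect any genuine obstacle here, since the statement is ultimately a finite-dimensional convex-geometry fact. The only point deserving a moment of care is reconciling the maximum-norm topology of the distance definition with the Euclidean one, which is handled by the equivalence of all norms on $\mathbb{R}^{|\mathcal{S}|\times|\mathcal{A}^i|}$; once that is noted, convexity and compactness both reduce to inspecting the linear equalities and inequalities that define $\Pi^i$.
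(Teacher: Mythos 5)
Your proposal is correct and follows essentially the same route as the paper: convexity by directly checking that convex combinations preserve the $\eta$-softness and normalisation constraints, and compactness by establishing closedness and boundedness in the finite-dimensional setting (Heine--Borel). The only cosmetic difference is that you argue closedness via intersections of closed half-spaces and hyperplanes, whereas the paper verifies it by taking limits of convergent sequences of policies; these are interchangeable.
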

\vspace{-10pt}
\begin{proof}
    We start from proving convexity od the policy space: we prove that, for any two policies $\pi^i, \bar{\pi}^i\in\Pi^i$, for every $\alpha\in [0, 1]$, $\alpha\pi^i + (1-\alpha)\bar{\pi}^i\in\Pi^i$. Clearly, for all $s\in\mathcal{S}$ and $a^i\in\mathcal{A}^i$, we have 
    \begin{smalleralign}
        \alpha\pi^i(a^i|s) + (1-\alpha)\bar{\pi}^i(a^i|s) \geq \alpha\eta + (1-\alpha)\eta = \eta.\nonumber
    \end{smalleralign}
    Also, for every state $s$, 
    \begin{smalleralign}
        \sum_{a^i}\left[\alpha\pi^i(a^i|s) + (1-\alpha)\bar{\pi}^i(a^i|s) \right] =  \alpha\sum_{a^i}\pi^i(a^i|s) + (1-\alpha)\sum_{a^i}\bar{\pi}^i(a^i|s) = \alpha + 1-\alpha = 1,\nonumber
    \end{smalleralign}
    which establishes convexity.
    
    For compactness, we first prove that $\Pi^i$ is closed.
    
    Let $\left( \pi^{i}_{k} \right)_{k=0}^{\infty}$ be a convergent sequence of policies of agent $i$. Let $\pi^{i}$ be the limit. We will prove that $\pi^{i}$ is a policy. First, by Assumption \ref{assumption:eta-soft}, for any $k\in\mathbb{N}$, $s\in\mathcal{S}$, and $a^{i}\in\mathcal{A}^{i}$, we have $\pi^{i}_{k}\left(a^{i}|s\right)\geq \eta$. Hence,  
    \[ \pi^{i}\left(a^{i}|s\right) = \lim_{k\to\infty}\pi^{i}_{k}\left(a^{i}|s\right)  \geq
    \lim_{k\to\infty}\eta \geq \eta.\]
    Furtheremore, for any $k$ and $s$, we have $\sum_{a^{i}}\pi^{i}_{k}\left(a^{i}|s\right) = 1$. Hence \[ \sum_{a^{i}}\pi^{i}\left(a^{i}|s\right)
    = \sum_{a^{i}}\lim_{k\to\infty}\pi^{i}\left(a^{i}|s\right)
    = \lim_{k\to\infty}\sum_{a^{i}}\pi^{i}\left(a^{i}|s\right)= \lim_{k\to\infty}1 = 1.\]
    With these two conditions satisfied, $\pi^{i}$ is a policy, which proves the closure.
    
    Further, for all policies $\pi^i$, states $s$ and actions $a$, $|\pi^i(a^i|s)|\leq 1$. This means that $||\pi^i||_{\text{max}} \leq 1$, which proves boundedness. Hence, $\Pi^i$ is compact.
\end{proof}

\begin{restatable}[Continuity of $\rho_{\pi}$]{lemma}{continuosrho} 
    \label{lemma:continuous-rho}
     The improper state distribution $\rho_{\pi}$ is continuous in $\pi$.
\end{restatable}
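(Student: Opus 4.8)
The plan is to realise $\rho_{\boldsymbol{\pi}}$ as a uniformly convergent series of the discounted marginals $\gamma^{t}\rho^{t}_{\boldsymbol{\pi}}$, to show that each term is continuous in $\boldsymbol{\pi}$, and then to invoke the fact that a uniform limit of continuous maps is continuous. Throughout, continuity is meant with respect to the maximum norm $\|\cdot\|$ introduced above, on both the compact policy space and the finite-dimensional space of state distributions on $\mathcal{S}$.

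First I would introduce the one-step state-transition kernel induced by a joint policy $\boldsymbol{\pi}$, namely
\[
    P_{\boldsymbol{\pi}}(s'|s) = \sum_{\va\in\mathbfcal{A}} \boldsymbol{\pi}(\va|s)\, P(s'|s,\va).
\]
Since $\boldsymbol{\pi}(\va|s) = \prod_{i=1}^{n}\pi^{i}(a^{i}|s)$ is a polynomial, hence continuous, function of the finitely many entries of $\boldsymbol{\pi}$, and since $\mathcal{S}$ and $\mathbfcal{A}$ are finite, every entry $P_{\boldsymbol{\pi}}(s'|s)$ is continuous in $\boldsymbol{\pi}$. Writing the marginals as row vectors indexed by $\mathcal{S}$, the recursion
\[
    \rho^{t}_{\boldsymbol{\pi}}(s') = \sum_{s\in\mathcal{S}} \rho^{t-1}_{\boldsymbol{\pi}}(s)\, P_{\boldsymbol{\pi}}(s'|s),
\]
with $\rho^{0}_{\boldsymbol{\pi}} = \rho^{0}$ fixed, shows by induction on $t$ that $\boldsymbol{\pi}\mapsto\rho^{t}_{\boldsymbol{\pi}}$ is continuous for every $t\in\mathbb{N}$, being a finite composition of sums and products of continuous functions.

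Next I would establish uniform convergence of the defining series $\rho_{\boldsymbol{\pi}} = \sum_{t=0}^{\infty}\gamma^{t}\rho^{t}_{\boldsymbol{\pi}}$. Because each $\rho^{t}_{\boldsymbol{\pi}}$ is a probability distribution on the finite set $\mathcal{S}$, its entries lie in $[0,1]$, so $\|\gamma^{t}\rho^{t}_{\boldsymbol{\pi}}\|\leq\gamma^{t}$ for every $\boldsymbol{\pi}$. The dominating bound $\sum_{t=0}^{\infty}\gamma^{t} = (1-\gamma)^{-1}<\infty$, which uses $\gamma\in[0,1)$, is independent of $\boldsymbol{\pi}$; hence by the Weierstrass $M$-test the partial sums $\sum_{t=0}^{T}\gamma^{t}\rho^{t}_{\boldsymbol{\pi}}$ converge uniformly on the policy space to $\rho_{\boldsymbol{\pi}}$.

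Finally, each partial sum is a finite sum of continuous maps and is therefore continuous, and a uniform limit of continuous maps is continuous; this yields continuity of $\boldsymbol{\pi}\mapsto\rho_{\boldsymbol{\pi}}$, which is the claim. The only genuine subtlety, and where I would concentrate the argument, is the passage from pointwise continuity of the individual $\rho^{t}_{\boldsymbol{\pi}}$ to continuity of the infinite sum; this is precisely what the uniform bound $\gamma^{t}$ secures. As an alternative that sidesteps the series, one may note that $\rho_{\boldsymbol{\pi}} = \rho^{0}\,(I-\gamma P_{\boldsymbol{\pi}})^{-1}$, where $I-\gamma P_{\boldsymbol{\pi}}$ is invertible because $\gamma P_{\boldsymbol{\pi}}$ has spectral radius $\gamma<1$, and then combine continuity of $\boldsymbol{\pi}\mapsto P_{\boldsymbol{\pi}}$ with continuity of matrix inversion on the open set of invertible matrices.
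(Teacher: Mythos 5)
Your proof is correct, and its core skeleton is the same as the paper's: both arguments (i) establish continuity of each finite-horizon marginal $\rho^{t}_{\boldsymbol{\pi}}$ by induction on $t$ via the one-step transition recursion, and (ii) pass to the infinite discounted sum by exploiting the uniform geometric bound $\gamma^{t}$ on the terms. The difference is mainly packaging: where you invoke named theorems (polynomials are continuous, the Weierstrass $M$-test, and the fact that a uniform limit of continuous maps is continuous), the paper inlines the same reasoning as explicit estimates --- deriving the recursive bound $\|\rho^{t+1}_{\pi} - \rho^{t+1}_{\hat{\pi}}\| \leq |\mathcal{A}|\cdot\|\pi-\hat{\pi}\| + |\mathcal{S}|\cdot\|\rho^{t}_{\pi} - \rho^{t}_{\hat{\pi}}\|$ for the inductive step, and splitting the series at a cutoff $T$ chosen so the tail $2\gamma^{T}/(1-\gamma)$ is below $\epsilon/2$ for the limiting step. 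The paper's explicit bound yields slightly more quantitative information (a modulus of continuity at each horizon), but nothing later in the paper uses it, so your cleaner formulation loses nothing. Your closing alternative, $\rho_{\boldsymbol{\pi}} = \rho^{0}\,(I-\gamma P_{\boldsymbol{\pi}})^{-1}$ combined with continuity of matrix inversion on invertible matrices, is a genuinely different route not taken by the paper; it is valid because the stochastic matrix $P_{\boldsymbol{\pi}}$ has spectral radius $1$, so $I-\gamma P_{\boldsymbol{\pi}}$ is invertible for $\gamma<1$, and it sidesteps both the induction and the series-convergence argument entirely --- arguably the slickest of the three proofs in this finite-state setting.
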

\vspace{-10pt}
\begin{proof}
First, let us show that for any $t\in\mathbb{N}$, the distribution $\rho^{t}_{\pi}$ is continous in $\pi$. We will do it by induction. This obviously holds when $t=0$, as $\rho^{0}$ does not depend on policy. Hence, we can assume that for some $t\in\mathbb{N}$, the distribution $\rho^{t}_{\pi}$ is continuous in $\pi$. Let us now consider two policies $\pi$ and $\hat{\pi}$. Let $s'\in\mathcal{S}$. We have
\begin{smalleralign}
    &\left| \rho^{t+1}_{\pi}(s') - \rho^{t+1}_{\hat{\pi}}(s') \right|
    = \left| \sum_{s}\rho^{t}_{\pi}(s)\sum_{a}\pi(a|s)P(s'|s, a)
    -  \sum_{s}\rho^{t}_{\hat{\pi}}(s)\sum_{a}\hat{\pi}(a|s)P(s'|s, a)
    \right|\nonumber\\
    &= \left| \sum_{s}\sum_{a} \left[\rho^{t}_{\pi}(s)\pi(a|s) - \rho^{t}_{\hat{\pi}}(s)\hat{\pi}(a|s) \right]P(s'|s, a)\right| \nonumber\\
    &\leq \sum_{s}\sum_{a} \left|\rho^{t}_{\pi}(s)\pi(a|s) - \rho^{t}_{\hat{\pi}}(s)\hat{\pi}(a|s) \right|P(s'|s, a)\nonumber
\end{smalleralign}

\begin{smalleralign}
    &=  \sum_{s}\sum_{a} 
    \left|\rho^{t}_{\pi}(s)\pi(a|s) - \rho^{t}_{\pi}(s)\hat{\pi}(a|s)  
    + \rho^{t}_{\pi}(s)\hat{\pi}(a|s) -\rho^{t}_{\hat{\pi}}(s)\hat{\pi}(a|s) 
    \right|P(s'|s, a)\nonumber\\
    &\leq \sum_{s}\sum_{a}\left( \rho^{t}_{\pi}(s)\left| \pi(a|s) - \hat{\pi}(a|s) \right| + \hat{\pi}(a|s)\left| \rho^{t}_{\pi}(s) - \rho^{t}_{\hat{\pi}}(s) \right| \right)\nonumber\\
    &\leq \sum_{s}\sum_{a}\left( \rho^{t}_{\pi}(s)||\pi - \hat{\pi}|| + \hat{\pi}(a|s)||\rho^{t}_{\pi} - \rho^{t}_{\hat{\pi}}|| \right)\nonumber\\
    &= \sum_{s}\rho^{t}_{\pi}(s)\sum_{a}||\pi-\hat{\pi}|| + \sum_{s}||\rho^{t}_{\pi} - \rho^{t}_{\hat{\pi}}||\sum_{a}\hat{\pi}(a|s)\nonumber\\
    &= |\mathcal{A}|\cdot ||\pi - \hat{\pi}|| + |\mathcal{S}|\cdot||\rho^{t}_{\pi} - \rho^{t}_{\hat{\pi}}||.\nonumber
\end{smalleralign}
Hence, we obtain
\begin{align}
    \label{eq:dist-rho-at-t}
    ||\rho^{t+1}_{\pi} - \rho^{t+1}_{\hat{\pi}}|| \leq |\mathcal{A}|\cdot ||\pi - \bar{\pi}|| + |\mathcal{S}|\cdot||\rho^{t}_{\pi} - \rho^{t}_{\hat{\pi}}||.
\end{align}
Using the base case, taking the limit as $\bar{\pi}\to\pi$, we get that the right-hand-side of Equation (\ref{eq:dist-rho-at-t}) converges to $0$, which proves that every $\rho^{t+1}_{\pi}$ is continuous in $\pi$, and finishes the inductive step.

We can now prove that the total marginal state distribution $\rho_{\pi}$ is continous in $\pi$.
To do that, let's take an arbitrary $\epsilon > 0$, and a natural $T$ such that $T > \frac{\log\left[ \frac{\epsilon(1-\gamma)}{4} \right]}{\log \gamma}$. Equivalently, we choose $T$ such that $\frac{2\gamma^{T}}{1-\gamma} < \frac{\epsilon}{2}$. Let $\pi$ and $\hat{\pi}$ be two policies. We have
\begin{smalleralign}
    \label{eq:estimate-rho-dif}
    & \left| \rho_{\pi}(s) - \rho_{\hat{\pi}}(s) \right| 
    = \left| \sum_{t=0}^{\infty} \gamma^{t}\left( \rho^{t}_{\pi}(s) - \rho^{t}_{\hat{\pi}}(s) \right) \right| \nonumber\\
    &= \left| \sum_{t=0}^{T-1} \gamma^{t}\left( \rho^{t}_{\pi}(s) - \rho^{t}_{\hat{\pi}}(s) \right) +\gamma^{T}\sum_{t=T}^{\infty} \gamma^{t-T}\left( \rho^{t}_{\pi}(s) - \rho^{t}_{\hat{\pi}}(s) \right) \right|\nonumber\\
    &\leq \sum_{t=0}^{T-1} \gamma^{t}\left| \rho^{t}_{\pi}(s) - \rho^{t}_{\hat{\pi}}(s) \right| 
    + \gamma^{T}\sum_{t=T}^{\infty} \gamma^{t-T}\left| \rho^{t}_{\pi}(s) - \rho^{t}_{\hat{\pi}}(s) \right|\nonumber\\
    &\leq \sum_{t=0}^{T-1} \gamma^{t}\left| \rho^{t}_{\pi}(s) - \rho^{t}_{\hat{\pi}}(s) \right| + \gamma^{T}\sum_{t=T}^{\infty}2\gamma^{t-T}\nonumber\\
    &= \sum_{t=0}^{T-1} \gamma^{t}\left| \rho^{t}_{\pi}(s) - \rho^{t}_{\hat{\pi}}(s) \right| + \frac{2\gamma^{T}}{1-\gamma} < \sum_{t=0}^{T-1} \gamma^{t}\left| \rho^{t}_{\pi}(s) - \rho^{t}_{\hat{\pi}}(s) \right| + \frac{\epsilon}{2} \nonumber\\
    &\leq \sum_{t=0}^{T-1} \left| \rho^{t}_{\pi}(s) - \rho^{t}_{\hat{\pi}}(s) \right| + \frac{\epsilon}{2} 
    \leq \sum_{t=0}^{T-1} ||\rho^{t}_{\pi} - \rho^{t}_{\hat{\pi}}|| + \frac{\epsilon}{2}.
\end{smalleralign}
Now, by continuity of each of $\rho_{\pi}^{t}$, for $t=0, 1, \dots, T-1$, we have that there exists a $\delta>0$ such that $||\pi-\hat{\pi}|| < \delta$ implies $||\rho_{\pi}^{t} - \rho_{\hat{\pi}}^{t}|| < \frac{\epsilon}{2T}$. Taking such $\delta$, by Equation (\ref{eq:estimate-rho-dif}), we get $||\rho_{\pi} - \rho_{\hat{\pi}}|| \leq \epsilon$, which finishes the proof.
\end{proof}

\begin{restatable}[Continuity of $Q_{\pi}$]{lemma}{continuousq}
\label{lemma:continuous-q}
Let $\pi$ be a policy. Then $Q_{\pi}(s, a)$ is Lipschitz-continuous in $\pi$.
\end{restatable}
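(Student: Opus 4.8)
The plan is to exploit the Bellman recursion for $Q_{\pi}$ and convert the difference $Q_{\pi}-Q_{\hat{\pi}}$ into a self-referential inequality that can be solved for a global Lipschitz constant. First I would record the uniform bound that powers the whole argument: since $\mathcal{S}$ and $\mathcal{A}$ are finite, the reward is bounded by $R_{\max}\triangleq\max_{s,a}|r(s,a)|$, so that $|Q_{\pi}(s,a)|\leq R_{\max}/(1-\gamma)$ for \emph{every} policy $\pi$ and every $(s,a)$. This policy-independent bound is precisely what will make the resulting Lipschitz constant uniform rather than merely local.

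Next I would fix two policies $\pi$ and $\hat{\pi}$ and write the Bellman equations $Q_{\pi}(s,a)=r(s,a)+\gamma\sum_{s'}P(s'|s,a)V_{\pi}(s')$ with $V_{\pi}(s')=\sum_{a'}\pi(a'|s')Q_{\pi}(s',a')$, and likewise for $\hat{\pi}$. Subtracting cancels the reward term, leaving $Q_{\pi}(s,a)-Q_{\hat{\pi}}(s,a)=\gamma\sum_{s'}P(s'|s,a)\big[V_{\pi}(s')-V_{\hat{\pi}}(s')\big]$. The key step is then an add-and-subtract decomposition of the value difference, $V_{\pi}(s')-V_{\hat{\pi}}(s')=\sum_{a'}\pi(a'|s')\big[Q_{\pi}(s',a')-Q_{\hat{\pi}}(s',a')\big]+\sum_{a'}\big[\pi(a'|s')-\hat{\pi}(a'|s')\big]Q_{\hat{\pi}}(s',a')$. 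The first sum is bounded by $||Q_{\pi}-Q_{\hat{\pi}}||$ because $\pi(\cdot|s')$ sums to one, while the second is bounded by $|\mathcal{A}|\cdot||\pi-\hat{\pi}||\cdot R_{\max}/(1-\gamma)$, using the uniform bound on $Q_{\hat{\pi}}$ together with the fact that each $|\pi(a'|s')-\hat{\pi}(a'|s')|\leq ||\pi-\hat{\pi}||$.

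Substituting back and taking the maximum over all $(s,a)$ on the left-hand side yields the self-referential inequality
\begin{align}
    ||Q_{\pi}-Q_{\hat{\pi}}|| \leq \gamma\,||Q_{\pi}-Q_{\hat{\pi}}|| + \gamma|\mathcal{A}|\frac{R_{\max}}{1-\gamma}\,||\pi-\hat{\pi}||.\nonumber
\end{align}
Collecting the $||Q_{\pi}-Q_{\hat{\pi}}||$ terms and dividing by $1-\gamma$ gives $||Q_{\pi}-Q_{\hat{\pi}}||\leq \frac{\gamma|\mathcal{A}|R_{\max}}{(1-\gamma)^{2}}\,||\pi-\hat{\pi}||$, which is exactly Lipschitz continuity with constant $L=\gamma|\mathcal{A}|R_{\max}/(1-\gamma)^{2}$.

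I do not anticipate a serious obstacle here; the argument is essentially the contraction property of the Bellman operator made explicit. The one point requiring care is the crux step of absorbing the $\gamma\,||Q_{\pi}-Q_{\hat{\pi}}||$ term \emph{after} maximising over $(s,a)$: this is legitimate only because the bound holds simultaneously for all $(s,a)$, and the extracted constant is uniform over all pairs of policies solely because of the policy-independent estimate $|Q_{\pi}|\leq R_{\max}/(1-\gamma)$ established at the outset.
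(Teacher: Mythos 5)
Your proof is correct and follows essentially the same route as the paper's: both exploit the Bellman recursion, apply an add-and-subtract decomposition to the policy-weighted $Q$ terms, and solve the resulting self-referential inequality $\|Q_{\pi}-Q_{\hat{\pi}}\| \leq \gamma\|Q_{\pi}-Q_{\hat{\pi}}\| + \gamma|\mathcal{A}|\,Q_{\max}\|\pi-\hat{\pi}\|$ for a uniform Lipschitz constant. The only cosmetic differences are that you pair the policy difference with $Q_{\hat{\pi}}$ rather than $Q_{\pi}$ (a symmetric choice) and that you make the policy-independent bound $Q_{\max}\leq R_{\max}/(1-\gamma)$ explicit, whereas the paper leaves $Q_{\text{max}}$ implicit, so your constant $\gamma|\mathcal{A}|R_{\max}/(1-\gamma)^{2}$ matches the paper's $\gamma Q_{\text{max}}|\mathcal{A}|/(1-\gamma)$.
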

\begin{proof}
Let $\pi$ and $\hat{\pi}$ be two policies. Then we have
\begin{align}
    &\left| Q_{\pi}(s, a) - Q_{\hat{\pi}}(s, a) \right| \nonumber\\
    &= \left|  \left( r(s, a) + \gamma\sum_{s'}\sum_{a}P(s'|s, a)\pi(a|s)Q_{\pi}(s, a) \right)  
    - \left(  r(s, a) + \gamma\sum_{s'}\sum_{a}P(s'|s, a)\hat{\pi}(a|s)Q_{\hat{\pi}}(s, a) \right) \right|\nonumber\\
    &= \gamma \left| \sum_{s'}\sum_{a}P(s'|s, a)
    \left[ \pi(a|s)Q_{\pi}(s, a) - \hat{\pi}(a|s)Q_{\hat{\pi}}(s, a) \right]  \right| \nonumber
\end{align}
\begin{align}
    &\leq  \gamma \sum_{s'}\sum_{a}P(s'|s, a)
    \left| \pi(a|s)Q_{\pi}(s, a) - \hat{\pi}(a|s)Q_{\hat{\pi}}(s, a)   \right| \nonumber\\
    &= \gamma \sum_{s'}\sum_{a}P(s'|s, a)
    \left| \pi(a|s)Q_{\pi}(s, a) - \hat{\pi}(a|s)Q_{\pi}(s, a) + \hat{\pi}(a|s)Q_{\pi}(s, a) - \hat{\pi}(a|s)Q_{\hat{\pi}}(s, a)   \right| \nonumber\\
    &\leq \gamma \sum_{s'}\sum_{a}P(s'|s, a)\left(
    \left| \pi(a|s)Q_{\pi}(s, a) - \hat{\pi}(a|s)Q_{\pi}(s, a)\right| + \left|\hat{\pi}(a|s)Q_{\pi}(s, a) - \hat{\pi}(a|s)Q_{\hat{\pi}}(s, a)   \right| \right)\nonumber
\end{align}
\begin{align}
    &= \gamma\sum_{s'}\sum_{a}P(s'|s, a)|\pi(a|s) - \hat{\pi}(a|s)|\cdot|Q_{\pi}(s, a)| \nonumber\\
    &\ \ \ \ \ \ \ \ \ + \gamma\sum_{s'}\sum_{a}P(s'|s, a)\hat{\pi}(a|s)\left| Q_{\pi}(s, a) - Q_{\hat{\pi}}(s, a)\right|\nonumber\\
    &\leq \gamma\sum_{s'}\sum_{a}P(s'|s, a)||\pi - \hat{\pi}||\cdot Q_{\text{max}} \nonumber\\
    &\ \ \ \ \ \ \ \ \ + \gamma\sum_{s'}\sum_{a}P(s'|s, a)\hat{\pi}(a|s)|| Q_{\pi} - Q_{\hat{\pi}}||\nonumber\\
    &= \gamma\  Q_{\text{max}}\cdot|\mathcal{A}|\cdot||\pi-\hat{\pi}|| + \gamma ||Q_{\pi} - Q_{\hat{\pi}}||\nonumber
\end{align}
Hence, we get
\begin{align}
    ||Q_{\pi} - Q_{\hat{\pi}}|| \leq \gamma\  Q_{\text{max}} \cdot |\mathcal{A}|\cdot ||\pi-\hat{\pi}|| + \gamma||Q_{\pi}-Q_{\hat{\pi}}||,\nonumber
\end{align}
which implies
\begin{align}
    \label{eq:q-function-cont}
    ||Q_{\pi} - Q_{\hat{\pi}}|| \leq \frac{ \gamma\  Q_{\text{max}} \cdot |\mathcal{A}|\cdot ||\pi-\hat{\pi}||}{1-\gamma}.
\end{align}
Equation (\ref{eq:q-function-cont}) establishes Lipschitz-continuity with a constant $\frac{\gamma \ Q_{\text{max}}\cdot |\mathcal{A}|}{1-\gamma}$.
\end{proof}

\begin{restatable}{corollary}{continuityofvalues}
    \label{corollary:continuityofvalues}
    From Lemma \ref{lemma:continuous-q} we obtain that the following functions are Lipschitz-continuous in $\pi$:
    \begin{enumerate}[\indent {}]
        \item \begin{center} the state value function $V_{\pi} = \sum_{a}\pi(a|s)Q_{\pi}(s, a)$, \end{center}
        \item  \begin{center} the advantage function $A_{\pi}(s, a) = Q_{\pi}(s, a) - V_{\pi}(s)$, \end{center}
        \item  \begin{center} and the expected total reward $J(\pi) = \E_{\rs\sim\rho^{0}}\left[ V_{\pi}(\rs)\right]$.  \end{center}
    \end{enumerate}
\end{restatable}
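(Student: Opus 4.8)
The plan is to treat each of the three functions as built from $Q_\pi$ by operations that preserve Lipschitz continuity, thereby reducing everything to the bound already established in Lemma \ref{lemma:continuous-q}. The only item requiring genuine work is $V_\pi$, because in $V_\pi(s)=\sum_a\pi(a|s)Q_\pi(s,a)$ both the weights $\pi(a|s)$ and the weighted quantity $Q_\pi(s,a)$ vary with the policy; once $V_\pi$ is handled, the other two items follow mechanically.

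First I would bound $|V_\pi(s)-V_{\hat\pi}(s)|$ using the same add-and-subtract device as in the proof of Lemma \ref{lemma:continuous-q}. Writing
\begin{align}
    V_\pi(s) - V_{\hat\pi}(s) = \sum_a\big[\pi(a|s)-\hat\pi(a|s)\big]Q_\pi(s,a) + \sum_a\hat\pi(a|s)\big[Q_\pi(s,a)-Q_{\hat\pi}(s,a)\big],\nonumber
\end{align}
the triangle inequality together with $|Q_\pi(s,a)|\le Q_{\text{max}}$ and $\sum_a\hat\pi(a|s)=1$ gives
\begin{align}
    \big|V_\pi(s)-V_{\hat\pi}(s)\big| \le |\mathcal{A}|\,Q_{\text{max}}\,\|\pi-\hat\pi\| + \|Q_\pi-Q_{\hat\pi}\|.\nonumber
\end{align}
Substituting the Lipschitz estimate of Equation (\ref{eq:q-function-cont}) for $\|Q_\pi-Q_{\hat\pi}\|$ and taking the maximum over $s$ yields a Lipschitz bound for $V_\pi$ with an explicit constant.

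With $V_\pi$ Lipschitz, the advantage function follows at once: since $A_\pi(s,a)=Q_\pi(s,a)-V_\pi(s)$, the triangle inequality gives $|A_\pi(s,a)-A_{\hat\pi}(s,a)|\le\|Q_\pi-Q_{\hat\pi}\|+\|V_\pi-V_{\hat\pi}\|$, a sum of two Lipschitz terms, hence Lipschitz with the sum of their constants. Finally, for $\mathcal{J}(\pi)=\E_{\rs\sim\rho^0}[V_\pi(\rs)]$ I would use that expectation against the fixed distribution $\rho^0$ is a weighted average and cannot increase the supremum, so $|\mathcal{J}(\pi)-\mathcal{J}(\hat\pi)|\le\E_{\rs\sim\rho^0}[|V_\pi(\rs)-V_{\hat\pi}(\rs)|]\le\|V_\pi-V_{\hat\pi}\|$, which inherits the Lipschitz constant of $V_\pi$. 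There is no real obstacle here; the substantive content lies entirely in the already-established Lipschitz continuity of $Q_\pi$, and the corollary is simply the observation that the three derived quantities are obtained from $Q_\pi$ by sums, differences, and a fixed-measure expectation, all of which preserve Lipschitz continuity.
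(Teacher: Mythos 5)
Your proof is correct and takes the route the paper intends: the paper states this corollary without any written proof, treating it as an immediate consequence of Lemma \ref{lemma:continuous-q}, and your argument is precisely the routine verification it leaves implicit. In particular, your add-and-subtract decomposition for $V_\pi$ is the same device the paper itself uses in the proofs of Lemmas \ref{lemma:continuous-rho} and \ref{lemma:continuous-q}, and the remaining two items follow, as you say, by the triangle inequality and by averaging against the fixed distribution $\rho^0$.
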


\begin{restatable}{lemma}{continuous-expectation}
    \label{lemma:continuous-expectation}
    Let $\pi$ and $\hat{\pi}$ be policies. The quantity $\E_{\rs\sim\rho_{\pi}, \ra\sim\hat{\pi}}\left[ A_{\pi}(\rs, \ra) \right]$ is continuous in $\pi$.
\end{restatable}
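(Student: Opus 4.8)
The plan is to use the finiteness of $\mathcal{S}$ and $\mathcal{A}$ to turn the expectation into a finite sum and then reduce continuity of the whole expression to the continuity results already established for $\rho_\pi$ and $A_\pi$. Since $\hat{\pi}$ is held fixed, define $F(\pi)\triangleq\E_{\rs\sim\rho_{\pi}, \ra\sim\hat{\pi}}\left[A_{\pi}(\rs, \ra)\right] = \sum_{s\in\mathcal{S}}\sum_{a\in\mathcal{A}}\rho_{\pi}(s)\,\hat{\pi}(a|s)\,A_{\pi}(s, a)$. In each summand the only $\pi$-dependent factors are $\rho_{\pi}(s)$ and $A_{\pi}(s, a)$; the factor $\hat{\pi}(a|s)$ is constant in $\pi$. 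By Lemma~\ref{lemma:continuous-rho}, $\rho_{\pi}$ is continuous in $\pi$, and by Corollary~\ref{corollary:continuityofvalues}, $A_{\pi}$ is (Lipschitz-)continuous in $\pi$. Because products and finite sums of continuous functions are continuous, $F$ is continuous in $\pi$, which is exactly the claim.

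If one prefers an explicit estimate rather than invoking the algebra of continuous functions, I would compare two policies $\pi$ and $\tilde{\pi}$ and bound $|F(\pi)-F(\tilde{\pi})|$ via the add-and-subtract trick, writing $\rho_{\pi}(s)A_{\pi}(s,a) - \rho_{\tilde{\pi}}(s)A_{\tilde{\pi}}(s,a) = \rho_{\pi}(s)\left[A_{\pi}(s,a)-A_{\tilde{\pi}}(s,a)\right] + \left[\rho_{\pi}(s)-\rho_{\tilde{\pi}}(s)\right]A_{\tilde{\pi}}(s,a)$. Applying the triangle inequality, using $\sum_{a}\hat{\pi}(a|s)=1$, the identity $\sum_{s}\rho_{\pi}(s)=\tfrac{1}{1-\gamma}$, and the uniform bound $|A_{\tilde{\pi}}(s,a)|\leq A_{\text{max}}$ (advantages are uniformly bounded since $r$ is bounded on the finite domain and $\gamma<1$), this yields
\[
|F(\pi)-F(\tilde{\pi})| \;\leq\; \frac{1}{1-\gamma}\,||A_{\pi}-A_{\tilde{\pi}}|| \;+\; |\mathcal{S}|\,A_{\text{max}}\,||\rho_{\pi}-\rho_{\tilde{\pi}}||.
\]
As $\tilde{\pi}\to\pi$, both norms on the right tend to $0$ by Lemma~\ref{lemma:continuous-rho} and Corollary~\ref{corollary:continuityofvalues}, so $F(\tilde{\pi})\to F(\pi)$, establishing continuity.

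The argument is essentially routine once the earlier lemmas are available; the one point requiring a little care is that \emph{both} factors $\rho_{\pi}$ and $A_{\pi}$ vary with $\pi$ at the same time, so a naive single continuity invocation does not suffice and one needs the product decomposition above. The remaining ingredients—finiteness of $\mathcal{S}$ and $\mathcal{A}$, the bound $\sum_{s}\rho_{\pi}(s)=\tfrac{1}{1-\gamma}$, and uniform boundedness of $A_{\pi}$—are all immediate consequences of $\gamma<1$ and the finite, real-valued reward, and are what make each product term well-behaved.
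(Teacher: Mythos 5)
Your proof is correct and follows essentially the same route as the paper: both expand the expectation as a finite sum $\sum_{s}\sum_{a}\rho_{\pi}(s)\hat{\pi}(a|s)A_{\pi}(s,a)$ and conclude from the continuity of $\rho_{\pi}$ (Lemma~\ref{lemma:continuous-rho}) and of $A_{\pi}$ (Corollary~\ref{corollary:continuityofvalues}), using that finite sums and products of continuous functions are continuous. Your second paragraph with the explicit add-and-subtract bound is a quantitative elaboration the paper omits, but it is the same underlying argument, not a different approach.
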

\begin{proof}
    We have
    \begin{align}
        \E_{\rs\sim\rho_{\pi}, \ra\sim\hat{\pi}}\left[ A_{\pi}(\rs, \ra) \right] = \sum_{s}\sum_{a}\rho_{\pi}(s)\hat{\pi}(a|s)A_{\pi}(s, a).\nonumber
    \end{align}
    Continuity follows from continuity of each $\rho_\pi(s)$ (Lemma \ref{lemma:continuous-rho}) and $A_{\pi}(s, a)$ (Corollary \ref{corollary:continuityofvalues}).
\end{proof}

\begin{restatable}[Continuity in MARL]{corollary}{continuity}
    All the results about continuity in $\pi$ extend to MARL. Policy $\pi$ can be replaced with joint policy $\boldsymbol{\pi}$; as $\boldsymbol{\pi}$ is Lipschitz-continuous in agent $i$'s policy $\pi^{i}$, the above continuity results extend to conitnuity in $\pi^{i}$. Thus, we will quote them in our proofs for MARL.
\end{restatable}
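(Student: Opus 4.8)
The plan is to decompose the claim into two independent observations and then combine them: first, that every continuity statement already proven for a single policy $\pi$ holds verbatim when $\pi$ is reread as the joint policy $\boldsymbol{\pi}$; and second, that $\boldsymbol{\pi}$ is Lipschitz-continuous in each individual policy $\pi^i$, so that continuity in $\boldsymbol{\pi}$ transfers to continuity in $\pi^i$ by composition.

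For the first observation, I would note that the Markov game $\langle \mathcal{N}, \mathcal{S}, \mathbfcal{A}, P, r, \gamma\rangle$ equipped with a joint policy $\boldsymbol{\pi}$ is formally indistinguishable from a single-agent MDP whose action space is the joint space $\mathbfcal{A}$ and whose policy is $\boldsymbol{\pi}$. The proofs of Lemma~\ref{lemma:continuous-rho}, Lemma~\ref{lemma:continuous-q}, Corollary~\ref{corollary:continuityofvalues}, and Lemma~\ref{lemma:continuous-expectation} used only the abstract structure of a probability kernel over a finite action set together with the recursive definitions of $\rho$, $Q$, $V$, $A$, and $\mathcal{J}$; none of them exploited the action space being that of a single agent. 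Hence each result holds identically under the substitutions $\pi\to\boldsymbol{\pi}$ and $\mathcal{A}\to\mathbfcal{A}$, giving (Lipschitz-)continuity of $\rho_{\boldsymbol{\pi}}$, $Q_{\boldsymbol{\pi}}$, $V_{\boldsymbol{\pi}}$, $A_{\boldsymbol{\pi}}$, $\mathcal{J}(\boldsymbol{\pi})$, and the expectation $\E_{\rs\sim\rho_{\boldsymbol{\pi}},\ra\sim\hat{\boldsymbol{\pi}}}[A_{\boldsymbol{\pi}}]$ in the joint policy $\boldsymbol{\pi}$.

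For the second observation, the key estimate comes directly from the product form $\boldsymbol{\pi}(\va|s)=\prod_{j=1}^n \pi^j(a^j|s)$. Fixing every agent's policy except agent $i$'s, and letting $\pi^i,\hat{\pi}^i\in\Pi^i$ with induced joint policies $\boldsymbol{\pi},\hat{\boldsymbol{\pi}}$, I would compute, for every $s$ and $\va$,
\[
\left|\boldsymbol{\pi}(\va|s)-\hat{\boldsymbol{\pi}}(\va|s)\right|
= \left|\pi^i(a^i|s)-\hat{\pi}^i(a^i|s)\right|\prod_{j\neq i}\pi^j(a^j|s)
\leq \left|\pi^i(a^i|s)-\hat{\pi}^i(a^i|s)\right|,
\]
since each factor $\pi^j(a^j|s)\le 1$. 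Taking the maximum over $(s,\va)$ yields $\|\boldsymbol{\pi}-\hat{\boldsymbol{\pi}}\|\le\|\pi^i-\hat{\pi}^i\|$, so the map $\pi^i\mapsto\boldsymbol{\pi}$ is $1$-Lipschitz in the maximum norm.

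To conclude, I would combine the two: each quantity of interest is continuous in $\boldsymbol{\pi}$ by the first observation, $\boldsymbol{\pi}$ is Lipschitz in $\pi^i$ by the second, and a composition of a continuous map with a Lipschitz map is continuous; hence every such quantity is continuous in $\pi^i$, as claimed. I do not anticipate a genuine obstacle here. The only point requiring care is the conceptual one in the first observation — making explicit that the earlier lemmas are really statements about an arbitrary policy on a finite action set, so that they apply to $\boldsymbol{\pi}$ on $\mathbfcal{A}$ without rewriting any argument — while the Lipschitz bound underlying the transfer to $\pi^i$ is immediate from the product structure.
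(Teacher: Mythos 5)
Your proposal is correct and follows exactly the route the paper intends: the paper states this corollary without a separate proof, its justification being precisely your two observations --- that the single-agent continuity lemmas apply verbatim when the Markov game with joint policy $\boldsymbol{\pi}$ is viewed as an MDP over the joint action space $\mathbfcal{A}$, and that $\pi^i \mapsto \boldsymbol{\pi}$ is Lipschitz (your explicit $1$-Lipschitz bound from the product form $\boldsymbol{\pi}(\va|s)=\prod_j \pi^j(a^j|s)$ is a correct and welcome detail the paper leaves implicit). Nothing is missing; your write-up simply makes the paper's sketch rigorous.
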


\section{Proof of Proposition \ref{prop:suboptimal}}
\label{appendix:suboptimal}
\suboptimal*

\begin{proof}
   Clearly $J^* = 1$. An optimal joint policy in this case is, for example, the deterministic policy with joint action $(\boldsymbol{0}^{n/2}, \boldsymbol{1}^{n/2})$. 
   
   Now, let the shared policy be $(\theta, 1-\theta)$, where $\theta$ determines the probability that an agent takes action $0$. Then, the expected reward is 
   \begin{align}
       &J(\theta) = \text{Pr}\left( \va^{1:n} = (\boldsymbol{0}^{n/2}, \boldsymbol{1}^{n/2})\right)\cdot 1 +
       \text{Pr}\left( \va^{1:n} = (\boldsymbol{1}^{n/2}, \boldsymbol{0}^{n/2})\right)\cdot 1
       = 2\cdot \theta^{n/2}(1-\theta)^{n/2}.\nonumber
   \end{align}
   In order to maximise $J(\theta)$, we must maximise $\theta (1-\theta)$, or equivalently, $\sqrt{\theta(1-\theta)}$. By the artithmetic-geometric means inequality, we have
   \begin{align}
       \sqrt{\theta(1-\theta)} \leq \frac{\theta + (1-\theta)}{2} = \frac{1}{2},\nonumber
   \end{align}
   where the equality holds if and only if $\theta = 1-\theta$, that is $\theta = \frac{1}{2}$. In such case we have
   \begin{align}
       J^*_{\text{share}} = J\left(\frac{1}{2}\right) = 2\cdot 2^{-n/2}\cdot 2^{-n/2} = \frac{2}{2^n},\nonumber
   \end{align}
   which finishes the proof.
\end{proof}

\section{Derivation and Analysis of Algorithm \ref{algorithm:theoretical-matrpo}}
\label{appendix:theoretical-matrpo}

\subsection{Recap of Existing Results}

\begin{restatable}[Performance Difference]{lemma}{performancedifference}
\label{lemma:performance-difference}
Let $\bar{\pi}$ and $\pi$ be two policies. Then, the following identity holds,
\begin{align}
    J(\bar{\pi}) - J(\pi) = \E_{\rs\sim\rho_{\bar{\pi}}, \ra\sim\bar{\pi}}\left[ A_{\pi}(\rs, \ra) \right].\nonumber
\end{align}
\end{restatable}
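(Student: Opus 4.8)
The plan is to establish the identity by expanding the right-hand side along trajectories sampled from $\bar{\pi}$ and exploiting a telescoping cancellation. First I would rewrite the advantage in its one-step Bellman form: since $Q_{\pi}(s, a) = \E_{\rs'\sim P(\cdot|s,a)}\left[ r(s,a) + \gamma V_{\pi}(\rs') \right]$ and $A_{\pi}(s,a) = Q_{\pi}(s,a) - V_{\pi}(s)$, we have
\begin{align}
    A_{\pi}(s, a) = \E_{\rs'\sim P(\cdot|s, a)}\left[ r(s, a) + \gamma V_{\pi}(\rs') - V_{\pi}(s) \right].\nonumber
\end{align}
This reformulation is the key move, because it inserts the term $\gamma V_{\pi}(\rs')$ that will cancel against $V_{\pi}$ evaluated at the next time step.

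Next I would unfold the right-hand side of the lemma using the definition of the improper state distribution $\rho_{\bar{\pi}} = \sum_{t=0}^{\infty}\gamma^{t}\rho^{t}_{\bar{\pi}}$, so that the single expectation over $\rho_{\bar{\pi}}$ becomes a discounted sum of per-timestep expectations along trajectories $\rs_{0:\infty}, \rva_{0:\infty}$ generated by $\bar{\pi}$:
\begin{align}
    \E_{\rs\sim\rho_{\bar{\pi}}, \ra\sim\bar{\pi}}\left[ A_{\pi}(\rs, \ra) \right]
    = \E_{\rs_{0:\infty}\sim\rho^{0:\infty}_{\bar{\pi}}, \rva_{0:\infty}\sim\bar{\pi}}\left[ \sum_{t=0}^{\infty}\gamma^{t}\left( \rr_{t} + \gamma V_{\pi}(\rs_{t+1}) - V_{\pi}(\rs_{t}) \right) \right].\nonumber
\end{align}
Here I substitute the Bellman form of $A_{\pi}$, absorbing the inner $P(\cdot|s,a)$ expectation into the trajectory expectation since $\rs_{t+1}$ is exactly distributed according to $P(\cdot|\rs_{t}, \rva_{t})$.

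I would then carry out the telescoping: the term $\gamma^{t}\cdot\gamma V_{\pi}(\rs_{t+1}) = \gamma^{t+1}V_{\pi}(\rs_{t+1})$ cancels the $-\gamma^{t+1}V_{\pi}(\rs_{t+1})$ contribution from the $(t+1)$-th summand, leaving only the reward sum and the single boundary term $-V_{\pi}(\rs_{0})$. Thus the expression collapses to $\E_{\bar{\pi}}\left[ \sum_{t=0}^{\infty}\gamma^{t}\rr_{t} \right] - \E_{\rs_{0}\sim\rho^{0}}\left[ V_{\pi}(\rs_{0}) \right] = \mathcal{J}(\bar{\pi}) - \mathcal{J}(\pi)$, where the last equality uses $\mathcal{J}(\pi) = \E_{\rs\sim\rho^{0}}\left[ V_{\pi}(\rs) \right]$ from Corollary \ref{corollary:continuityofvalues}. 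The main obstacle — more a point requiring care than a genuine difficulty — is justifying the rearrangement of the infinite sum and the interchange of summation with expectation; this is legitimate because $\gamma\in[0,1)$ and rewards are bounded on the finite state-action space, so the series converges absolutely and dominated convergence applies, making the telescoping rigorous.
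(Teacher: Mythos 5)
Your proof is correct: rewriting the advantage in its one-step Bellman form, expanding $\E_{\rs\sim\rho_{\bar{\pi}}}$ into the discounted sum of per-timestep expectations along trajectories from $\bar{\pi}$, and telescoping down to $\E_{\bar{\pi}}\left[\sum_{t}\gamma^{t}\rr_{t}\right] - \E_{\rs_{0}\sim\rho^{0}}\left[V_{\pi}(\rs_{0})\right]$ is exactly the standard argument, and your remark on absolute convergence justifies the rearrangement. The paper does not prove this lemma inline but defers to \cite{kakade2002approximately} (Lemma 6.1) and \cite{trpo} (Appendix A), and your argument is essentially the proof given in those references.
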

\begin{proof}
    See \cite{kakade2002approximately} (Lemma 6.1) or \cite{trpo} (Appendix A).
\end{proof}

\trpoinequality*
\begin{proof}
    See \cite{trpo} (Appendix A and Equation (9) of the paper).
\end{proof}

\subsection{Analysis of Training of Algorithm \ref{algorithm:theoretical-matrpo}}
\label{appendix:analysis-training-the-matrpo}
\maadlemma*
\begin{proof}
    By the definition of multi-agent advantage function,
    \begin{align}
        &A^{i_{1:m}}_{\boldsymbol{\pi}_{\vtheta}}(s, \va^{i_{1:m}}) = 
        Q^{i_{1:m}}_{\boldsymbol{\pi}_{\vtheta}}(s, \va^{i_{1:m}})
        -  V_{\boldsymbol{\pi}_{\vtheta}}(s)\nonumber\\
        &= \sum_{k=1}^{m}\left[Q^{i_{1:k}}_{\boldsymbol{\pi}_{\vtheta}}(s, \va^{i_{1:k}}) - Q^{i_{1:k-1}}_{\boldsymbol{\pi}_{\vtheta}}(s, \va^{i_{1:k-1}})  \right]\nonumber\\
        &= \sum_{k=1}^{m}A^{i_k}_{\boldsymbol{\pi}_{\vtheta}}(s, \va^{i_{1:k-1}}, a^{i_k}),\nonumber
    \end{align}
    which finishes the proof.
    \newline
    Note that a similar finding has been shown in \cite{kuba2021settling}.
\end{proof}

\begin{restatable}{lemma}{localKLglobalKL}
    \label{lemma:kl-inequality}
     Let $\boldsymbol{\pi} = \prod_{i=1}^{n}\pi^{i}$ and $\boldsymbol{\bar{\pi}} = \prod_{i=1}^{n}\bar{\pi}^{i}$ be joint policies. Then
     \begin{align}
        \text{{\normalfont D}}_{\text{KL}}^{\text{max}}\left(\boldsymbol{\pi}, \boldsymbol{\bar{\pi}}\right) \leq
        \sum_{i=1}^{n}\text{{\normalfont D}}_{\text{KL}}^{\text{max}}\left( \pi^{i}, \bar{\pi}^{i} \right) \nonumber
     \end{align}
\end{restatable}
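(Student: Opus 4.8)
The plan is to reduce the maximum-KL between the two joint policies to a per-state computation, exploit the product structure $\boldsymbol{\pi}(\cdot|s)=\prod_{i=1}^{n}\pi^{i}(\cdot|s)$ and $\boldsymbol{\bar{\pi}}(\cdot|s)=\prod_{i=1}^{n}\bar{\pi}^{i}(\cdot|s)$ to split the pointwise KL into a sum over agents (an \emph{equality}), and then pass from this per-state identity to the stated bound via an elementary max-of-a-sum inequality. First I would fix a state $s\in\mathcal{S}$, write the KL out as a sum over the joint action space, and substitute the two factorisations; the logarithm of the ratio of products then becomes a sum of logarithms, $\log\frac{\prod_{i}\pi^{i}(a^{i}|s)}{\prod_{i}\bar{\pi}^{i}(a^{i}|s)}=\sum_{j=1}^{n}\log\frac{\pi^{j}(a^{j}|s)}{\bar{\pi}^{j}(a^{j}|s)}$.

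Next I would interchange the two finite sums and, for each fixed $j$, marginalise out every coordinate $a^{i}$ with $i\neq j$: since the summand $\log\frac{\pi^{j}(a^{j}|s)}{\bar{\pi}^{j}(a^{j}|s)}$ depends only on $a^{j}$ while each remaining factor $\pi^{i}(\cdot|s)$ sums to one, all those marginal sums collapse, leaving the clean per-state identity
\begin{align}
D_{\text{KL}}\big(\boldsymbol{\pi}(\cdot|s),\boldsymbol{\bar{\pi}}(\cdot|s)\big)=\sum_{i=1}^{n} D_{\text{KL}}\big(\pi^{i}(\cdot|s),\bar{\pi}^{i}(\cdot|s)\big),\nonumber
\end{align}
which is just the familiar additivity of KL divergence across independent components.

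Finally I would take the maximum over $s$ on both sides. Because the maximum of a sum never exceeds the sum of the individual maxima (each agent's term being free to attain its maximum at its own state), this gives
\begin{align}
D_{\text{KL}}^{\text{max}}\big(\boldsymbol{\pi},\boldsymbol{\bar{\pi}}\big)=\max_{s}\sum_{i=1}^{n} D_{\text{KL}}\big(\pi^{i}(\cdot|s),\bar{\pi}^{i}(\cdot|s)\big)\leq\sum_{i=1}^{n}\max_{s} D_{\text{KL}}\big(\pi^{i}(\cdot|s),\bar{\pi}^{i}(\cdot|s)\big)=\sum_{i=1}^{n} D_{\text{KL}}^{\text{max}}\big(\pi^{i},\bar{\pi}^{i}\big),\nonumber
\end{align}
which is exactly the claim. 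There is no deep obstacle here: the only place deserving care is the marginalisation bookkeeping in the second step, where one must check that the non-$j$ coordinates genuinely decouple and sum to one. It is worth emphasising that the per-state relation is an \emph{equality}; the slack in the lemma enters solely through moving the maximum inside the sum.
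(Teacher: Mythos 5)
Your proposal is correct and follows essentially the same route as the paper's own proof: establish the per-state \emph{equality} $D_{\text{KL}}\big(\boldsymbol{\pi}(\cdot|s),\boldsymbol{\bar{\pi}}(\cdot|s)\big)=\sum_{i=1}^{n}D_{\text{KL}}\big(\pi^{i}(\cdot|s),\bar{\pi}^{i}(\cdot|s)\big)$ by splitting the log of the product into a sum and marginalising out the other agents' actions, then take the maximum over $s$ and bound the max of the sum by the sum of the maxima. The only difference is cosmetic---the paper phrases the marginalisation in expectation notation $\E_{\ra^{i}\sim\pi^{i},\,\rva^{-i}\sim\boldsymbol{\pi}^{-i}}[\cdot]$ rather than explicit sums---so there is nothing further to add.
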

\begin{proof}
    For any state $s$, we have
    \begin{align}
        &\text{{\normalfont D}}_{\text{KL}}\left( \boldsymbol{\pi}(\cdot|s), \boldsymbol{\bar{\pi}}(\cdot|s) \right) = \E_{\rva\sim\boldsymbol{\pi}}\left[ \log \boldsymbol{\pi}(\rva|s) - \log \boldsymbol{\bar{\pi}}(\rva|s) \right]\nonumber\\
        &= \E_{\rva\sim\boldsymbol{\pi}}\left[ \log \left( \prod_{i=1}^{n}{\pi^{i}}(\ra^{i}|s) \right)- 
        \log \left(\prod_{i=1}^{n}\bar{\pi}^{i}(\ra^{i}|s) \right) \right]\nonumber\\
        &= \E_{\rva\sim\boldsymbol{\pi}}\left[ 
        \sum_{i=1}^{n}\log  \pi^{i}(\ra^{i}|s) - 
        \sum_{i=1}^{n}\log \bar{\pi}^{i}(\ra^{i}|s)  
        \right]\nonumber\\
        &= \sum_{i=1}^{n}\E_{\ra^{i}\sim\pi^{i}, \rva^{-i}\sim\boldsymbol{\pi}^{-i}}
        \left[  \log \pi^{i}(\ra^{i}|s) - \log \bar{\pi}^{i}(\ra^{i}|s)\right] = \sum_{i=1}^{n}\text{{\normalfont D}}_{\text{KL}}\left(\pi^{i}(\cdot|s), \bar{\pi}^{i}(\cdot|s)\right)\nonumber.
    \end{align}
    Now, taking maximum over $s$ on both sides yields
    \begin{align}
         \text{{\normalfont D}}_{\text{KL}}^{\text{max}}\left(\boldsymbol{\pi}, \boldsymbol{\bar{\pi}}\right) \leq
        \sum_{i=1}^{n}\text{{\normalfont D}}_{\text{KL}}^{\text{max}}\left( \pi^{i}, \bar{\pi}^{i} \right), \nonumber
    \end{align}
    as required.
\end{proof}

\trpotosadtrpo*
\begin{proof}
    By Theorem \ref{theorem:trpo-ineq}
    \begin{align}
        &J(\boldsymbol{\bar{\pi}}) \geq L_{\boldsymbol{\pi}}(\boldsymbol{\bar{\pi}}) - C\text{{\normalfont D}}_{\text{KL}}^{\text{max}}(\boldsymbol{\pi}, \boldsymbol{\bar{\pi}}) \nonumber\\
        &= J(\boldsymbol{\pi}) + \E_{\rs\sim\rho_{\boldsymbol{\pi}}, \rva\sim\boldsymbol{\bar{\pi}}}
        \left[ A_{\boldsymbol{\pi}}(\rs, \rva) \right]
         - C\text{{\normalfont D}}_{\text{KL}}^{\text{max}}(\boldsymbol{\pi}, \boldsymbol{\bar{\pi}}) \nonumber\\
        &\text{which by Lemma \ref{lemma:maadlemma} equals}\nonumber\\
        &= J(\boldsymbol{\pi}) + \E_{\rs\sim\rho_{\boldsymbol{\pi}}, \rva\sim\boldsymbol{\bar{\pi}}}
        \left[ \sum_{m=1}^{n}A^{i_{m}}_{\boldsymbol{\pi}}\left(\rs, \rva^{i_{1:m-1}}, \ra^{i_{m}} \right)\right]
         - C\text{{\normalfont D}}_{\text{KL}}^{\text{max}}(\boldsymbol{\pi}, \boldsymbol{\bar{\pi}}) \nonumber
    \end{align}
    \begin{align}
        &\text{and by Lemma \ref{lemma:kl-inequality} this is at least}\nonumber\\
        &\geq J(\boldsymbol{\pi}) + \E_{\rs\sim\rho_{\boldsymbol{\pi}}, \rva\sim\boldsymbol{\bar{\pi}}}
        \left[ \sum_{m=1}^{n}A^{i_{m}}_{\boldsymbol{\pi}}\left(\rs, \rva^{i_{1:m-1}}, \ra^{i_{m}} \right)\right]
        - \sum_{m=1}^{n}C\text{{\normalfont D}}_{\text{KL}}^{\text{max}}(\pi^{i_{m}}, \bar{\pi}^{i_{m}}) \nonumber\\
        &= J(\boldsymbol{\pi}) +
        \sum_{m=1}^{n}
        \E_{\rs\sim\rho_{\boldsymbol{\pi}}, \rva^{i_{1:m-1}}\sim\boldsymbol{\bar{\pi}}^{i_{1:m-1}},
        \ra^{i_{m}}\sim\bar{\pi}^{i_{m}}}
        \left[A^{i_{m}}_{\boldsymbol{\pi}}\left(\rs, \rva^{i_{1:m-1}}, \ra^{i_{m}} \right)\right]
        - \sum_{m=1}^{n}C\text{{\normalfont D}}_{\text{KL}}^{\text{max}}(\pi^{i_{m}}, \bar{\pi}^{i_{m}}) \nonumber\\
        &= J(\boldsymbol{\pi}) +
        \sum_{m=1}^{n}\left(L^{i_{1:m}}_{\boldsymbol{\pi}}\left( 
        \boldsymbol{\bar{\pi}}^{i_{1:m-1}}, \bar{\pi}^{i_{m}}\right)
        - C\text{{\normalfont D}}_{\text{KL}}^{\text{max}}(\pi^{i_{m}}, \bar{\pi}^{i_{m}})\right).\nonumber
    \end{align}
\end{proof}

\matrpomonotonic*
\begin{proof}
    Let $\boldsymbol{\pi}_{0}$ be any joint policy. For every $k\geq 0$, the joint policy $\boldsymbol{\pi}_{k+1}$ is obtained from $\boldsymbol{\pi}_{k}$ by Algorithm \ref{algorithm:theoretical-matrpo} update; for $m=1, \dots, n$,
    \begin{align}
        \pi^{i_{m}}_{k+1} = \argmax_{\pi^{i_{m}}}\left[ L^{i_{1:m}}_{\boldsymbol{\pi}_{k}}\left( \boldsymbol{\pi}^{i_{1:m-1}}_{k+1}, \pi^{i_{m}} \right) 
        - C\text{{\normalfont D}}_{\text{KL}}^{\text{max}}\left(\pi^{i_{m}}_{k}, \pi^{i_{m}} \right)\right].\nonumber
    \end{align}
    By Theorem \ref{theorem:trpo-ineq},  we have
    \begin{align}
        \label{eq:sad-trpo-monotonic}
        &J(\boldsymbol{\pi}_{k+1}) \geq L_{\boldsymbol{\pi}_{k}}(\boldsymbol{\pi}_{k+1}) - C \text{{\normalfont D}}_{\text{KL}}^{\text{max}}(\boldsymbol{\pi}_{k}, \boldsymbol{\pi}_{k+1}),\nonumber\\
        &\text{which by Lemma \ref{lemma:kl-inequality} is lower-bounded by}\nonumber\\
        &\geq L_{\boldsymbol{\pi}_{k}}(\boldsymbol{\pi}_{k+1}) -  \sum_{m=1}^{n}C  \text{{\normalfont D}}_{\text{KL}}^{\text{max}}(\pi^{i_{m}}_{k}, \pi^{i_{m}}_{k+1})\nonumber\\
        &= J(\boldsymbol{\pi}_{k}) + 
        \sum_{m=1}^{n}\left( L^{i_{1:m}}_{\boldsymbol{\pi}_{k}}(\boldsymbol{\pi}^{i_{1:m-1}}_{k+1},\pi^{i_{m}}_{k+1}) 
        - C \text{{\normalfont D}}_{\text{KL}}^{\text{max}}(\pi^{i_{m}}_{k}, \pi^{i_{m}}_{k+1})\right),\\
        &\text{and as for every $m$, $\pi^{i_{m}}_{k+1}$ is the argmax, this is lower-bounded by}\nonumber\\
        &\geq J(\boldsymbol{\pi}_{k}) + 
        \sum_{m=1}^{n}\left( L^{i_{1:m}}_{\boldsymbol{\pi}_{k}}(\boldsymbol{\pi}^{i_{1:m-1}}_{k+1},\pi^{i_{m}}_{k}) 
        - C \text{{\normalfont D}}_{\text{KL}}^{\text{max}}(\pi^{i_{m}}_{k}, \pi^{i_{m}}_{k})\right),\nonumber\\
        &\text{which, as mentioned in Definition \ref{definition:localsurrogate}, equals}\nonumber\\
        &= J(\boldsymbol{\pi_{k}}) + \sum_{m=1}^{n}0 =
        J(\boldsymbol{\pi}_{k}),\nonumber
    \end{align}
    where the last inequality follows from Equation (\ref{eq:nice-maad-property}).
    This proves that Algorithm \ref{algorithm:theoretical-matrpo} achieves monotonic improvement.
\end{proof}

\subsection{Analysis of Convergence of Algorithm \ref{algorithm:theoretical-matrpo}}
\label{appendix:analysis-convergence-thematrpo}
\matrpoconvergence*
\begin{proof}
\textbf{Step 1 (convergence). } Firstly, it is clear that the sequence $\left( J(\boldsymbol{\pi}_{k})\right)_{k=0}^{\infty}$ converges as, by Theorem \ref{theorem:monotonic-matrpo}, it is non-decreasing and bounded above by $\frac{R_{\text{max}}}{1-\gamma}$. Let us denote the limit by $\bar{J}$. For every $k$, we denote the tuple of agents, according to whose order the agents perform the sequential updates, by $i_{1:n}^k$, and we note that $\big( i_{1:n}^k\big)_{k\in\mathbb{N}}$ is a random process.
 Furthermore, we know that the sequence of policies $\left(\boldsymbol{\pi}_{k}\right)$ is bounded, so by Bolzano-Weierstrass Theorem, it has at least one convergent subsequence. Let $\boldsymbol{\bar{\pi}}$ be any limit point of the sequence (note that the set of limit points is a random set), and $\big(\boldsymbol{\pi}_{k_{j}}\big)_{j=0}^{\infty}$ be a subsequence converging to $\boldsymbol{\bar{\pi}}$ (which is a random subsequence as well). By continuity of $J$ in $\boldsymbol{\pi}$ (Corollary \ref{corollary:continuityofvalues}), we have
    \begin{align}
        J(\bar{\boldsymbol{\pi}}) = J\left( \lim_{j\to\infty}\boldsymbol{\pi}_{k_{j}}\right)
        =\lim_{j\to\infty}J\left( \boldsymbol{\pi}_{k_{j}}\right) = \bar{J}.
    \end{align}
    
    For now, we introduce an auxiliary definition.
    \begin{restatable}[TR-Stationarity]{definition}{tr-stationary}
    \label{def:tr-stationary}
     A joint policy $\boldsymbol{\bar{\pi}}$ is trust-region-stationary (TR-stationary) if, for every agent $i$,
     \vspace{-5pt}
     \begin{align}
         \bar{\pi}^{i} = \argmax_{\pi^{i}}\left[
         \E_{\rs\sim\rho_{\boldsymbol{\bar{\pi}}}, \ra^{i}\sim\pi^{i}}
         \left[ A^{i}_{\boldsymbol{\bar{\pi}}}(\rs, \ra^{i}) \right]
         -C_{\boldsymbol{\bar{\pi}}}\text{{\normalfont D}}_{\text{KL}}^{\text{max}}\left( \bar{\pi}^{i}, \pi^{i} \right)\right],\nonumber
     \end{align}
     where $C_{\boldsymbol{\bar{\pi}}} = \frac{4\gamma \epsilon}{(1-\gamma)^{2}}$, and $\epsilon = \max_{s, \va}|A_{\boldsymbol{\bar{\pi}}}(s, \va)|$.
\end{restatable}

We will now establish the TR-stationarity of any limit point joint policy $\bar{\boldsymbol{\pi}}$ (which, as stated above, is a random variable). Let $\E_{i_{1:n}^{0:\infty}}[\cdot]$ denote the expected value operator under the random process $(i_{1:n}^{0:\infty})$. Let also $\epsilon_k = \max_{s,\va}|A_{\boldsymbol{\pi}_k}(s, \va)|$, and $C_k = \frac{4\gamma\epsilon_k}{(1-\gamma)^2}$. We have
\begin{align}
    &0 = \lim_{k\to\infty}\E_{i_{1:n}^{0:\infty}}\left[ J(\boldsymbol{\pi}_{k+1}) -
    J(\boldsymbol{\pi}_k)\right]\nonumber\\
    &\geq \lim_{k\to\infty}\E_{i_{1:n}^{0:\infty}}\left[ 
    L_{\boldsymbol{\pi}_k}(\boldsymbol{\pi}_{k+1}) - C_k \text{{\normalfont D}}_{\text{KL}}^{\text{max}}(\boldsymbol{\pi}_k, \boldsymbol{\pi}_{k+1}) \right]
    \ \ \text{by Theorem \ref{theorem:trpo-ineq}}\nonumber\\
    &\geq \lim_{k\to\infty}\E_{i_{1:n}^{0:\infty}}\left[ 
    L^{i_1^k}_{\boldsymbol{\pi}_k}\left(\pi^{i_1^k}_{k+1}\right) - C_k \text{{\normalfont D}}_{\text{KL}}^{\text{max}}\left(\pi_k^{i_1^k}, \pi_{k+1}^{i_1^k}\right) \right]\nonumber\\
    &\text{by Equation (\ref{eq:sad-trpo-monotonic}) and the fact that each of its summands is non-negative}\nonumber
\end{align}
Now, we consider an arbitrary limit point $\boldsymbol{\bar{\pi}}$ from the (random) limit set, and a (random) subsequence $\big(\boldsymbol{\pi}_{k_j}\big)_{j=0}^{\infty}$ that converges to $\boldsymbol{\bar{\pi}}$. We get
\begin{align}
    &0\geq \lim_{j\to\infty}\E_{i_{1:n}^{0:\infty}}\left[ 
    L^{i_1^{k_j}}_{\boldsymbol{\pi}_{k_j}}\left(\pi^{i_1^{k_j}}_{k_j+1}\right) - C_{k_j} \text{{\normalfont D}}_{\text{KL}}^{\text{max}}\left(\pi_{k_j}^{i_1^{k_j}}, \pi_{k_j+1}^{i_1^{k_j}}\right) \right].\nonumber
\end{align}
As the expectation is taken of non-negative random variables, and for every $i\in\mathcal{N}$ and $k\in\mathbb{N}$, with some positive probability $p_i$, we have $i_1^{k_j}=i$ (because every permutation has non-zero probability), the above is bounded from below by
\begin{align}
    &p_i\lim_{j\to\infty}\max_{\pi^i}\left[ 
    L^{i}_{\boldsymbol{\pi}_{k_j}}(\pi^i) - C_{k_j} \text{{\normalfont D}}_{\text{KL}}^{\text{max}}\left(\pi_{k_j}^i, \pi^i\right)
    \right],\nonumber\\
    &\text{which, as $\boldsymbol{\pi}_{k_j}$ converges to $\boldsymbol{\bar{\pi}}$, equals to}\nonumber\\
    &p_i\max_{\pi^i} \left[L^{i}_{\boldsymbol{\bar{\pi}}}(\pi^i) - C_{\boldsymbol{\bar{\pi}}} \text{{\normalfont D}}_{\text{KL}}^{\text{max}}\left(\bar{\pi}^i, \pi^i\right)\right] \geq 0, \ \  \text{by Equation (\ref{eq:nice-maad-property}).} \nonumber
\end{align}
For convergence of $L^i_{\boldsymbol{\pi}_{k_j}}(\pi^i)$ we used Definition \ref{definition:localsurrogate} combined with Lemma \ref{lemma:continuous-expectation}, for convergence of $C_{k_j}$ we used Corollary \ref{corollary:continuityofvalues}, and the convergence of $\text{{\normalfont D}}_{\text{KL}}^{\text{max}}$ follows from continuity of $\text{{\normalfont D}}_{\text{KL}}$ and $\max$.
This proves that, for any limit point $\boldsymbol{\bar{\pi}}$ of the random process $(\boldsymbol{\pi}_k)$ induced by Algorithm \ref{algorithm:theoretical-matrpo}, $\max_{\pi^i} \left[L^{i}_{\boldsymbol{\bar{\pi}}}(\pi^i) - C_{\boldsymbol{\bar{\pi}}} \text{{\normalfont D}}_{\text{KL}}^{\text{max}}\left(\bar{\pi}^i, \pi^i\right)\right] = 0$, which is equivalent with Definition \ref{def:tr-stationary}.

\textbf{Step 2 (dropping the penalty term). }
Now, we have to prove that TR-stationary points are NEs of cooperative Markov games. The main step is to prove the following statement: \textsl{a TR-stationary joint policy $\boldsymbol{\bar{\pi}}$, for every state $s\in\mathcal{S}$, satisfies}
\begin{align}
    \label{eq:bold-claim}
    \bar{\pi}^{i} = \argmax_{\pi^i}\E_{\ra^i\sim\pi^i}\big[ A_{\boldsymbol{\bar{\pi}}}^i(s, \ra^i) \big].
\end{align}
We will use the technique of the proof by contradiction. Suppose that there is a state $s_0$ such that there exists a policy $\hat{\pi}^i$ with
\begin{align}
    \label{ineq:assume-to-contradict}
    \E_{\ra^i \sim \hat{\pi}^i}\big[ A_{\boldsymbol{\bar{\pi}}}^i(s_0, \ra^i) \big] > \E_{\ra^i \sim \bar{\pi}^i}\big[ A_{\boldsymbol{\bar{\pi}}}^i(s_0, \ra^i) \big].
\end{align}
Let us parametrise the policies $\pi^i$ according to the template
\begin{align}
\vspace{-5pt}
  \pi^i(\cdot|s_0) = \big(x^i_1, \dots, x^i_{d^i-1}, 1-\sum\limits_{j=1}^{d^i-1}x^i_j\big)\nonumber
\end{align}
where the values of $x^i_j \ (j=1, \dots, d^i-1)$ are such that $\pi^i(\cdot|s_0)$ is a valid probability distribution.
Then we can rewrite our quantity of interest (the objective of Equation (\ref{eq:bold-claim}) as
\begin{align}
    \E_{\ra^i \sim \pi^i}\big[ A_{\boldsymbol{\bar{\pi}}}^i(s_0, \ra^i) \big] &= \sum_{j=1}^{d^i-1}x^i_j\cdot A_{\bar{\pi}}^i\big(s_0, a^i_j\big) + (1-\sum_{h=1}^{d^i-1}x^i_h)A^i_{\boldsymbol{\bar{\pi}}}\big(s_0, a^i_{d^i}\big)\nonumber\\
    &= \sum_{j=1}^{d^i-1}x^i_j \big[A_{\bar{\pi}}^i\big(s_0, a^i_j\big) - A^i_{\boldsymbol{\bar{\pi}}}\big(s_0, a^i_{d^i}\big)\big] +  A^i_{\boldsymbol{\bar{\pi}}}\big(s_0, a^i_{d^i}\big), \nonumber
\end{align}
which is an affine function of the policy parameterisation. It follows that its gradient (with respect to $x^i$) and directional derivatives are constant in the space of policies at state $s_0$. The existance of policy $\hat{\pi}^i(\cdot|s_0)$, for which Inequality (\ref{ineq:assume-to-contradict}) holds, implies that the directional derivative in the direction from $\bar{\pi}^i(\cdot|s_0)$ to $\hat{\pi}^i(\cdot|s_0)$ is strictly positive. 
We also have
\begin{align}
    \label{eq:partial-kl}
    \frac{\partial \text{{\normalfont D}}_{\text{KL}}(\bar{\pi}^i(\cdot|s_0), \pi^i(\cdot|s_0))}{\partial x^i_j} &= \frac{\partial}{\partial x^i_j}\left[ (\bar{\pi}^{i}(\cdot|s_0))^{T}(\log\bar{\pi}^i(\cdot|s_0) - \log\pi^i(\cdot|s_0)) \right]\nonumber\\
    &= \frac{\partial}{\partial x^i_j}\left[ -(\bar{\pi}^i)^{T}\log\pi^i \right] \ \text{(omitting state }s_0\text{ for brevity)}\nonumber\\
    &= -\frac{\partial}{\partial x^i_j}\sum_{k=1}^{d_i-1}\bar{\pi}^i_k\log x^i_k -
    \frac{\partial}{\partial x^i_j}\bar{\pi}^i_{d_i}\log\left( 1-\sum_{k=1}^{d_i-1}x^i_k\right)\nonumber\\
    &= -\frac{\bar{\pi}^i_j}{x^i_j} + \frac{\bar{\pi}^i_{d_i}}{1-\sum_{k=1}^{d_i-1}x^i_k}\nonumber\\
    &= -\frac{\bar{\pi}^i_j}{\pi^i_j} + \frac{\bar{\pi}^i_{d_i}}{\pi^i_{d_i}} = 0, \ \ \text{when evaluated at }\pi^i=\bar{\pi}^i,
\end{align}
which means that the KL-penalty has zero gradient at $\bar{\pi}^i(\cdot|s_0)$.
Hence, when evaluated at $\pi^i(\cdot|s_0) = \bar{\pi}^i(\cdot|s_0)$, the objective
\begin{align}
    \rho_{\boldsymbol{\bar{\pi}}}(s_0)\E_{\ra^i\sim\pi^i}\big[ A^i_{\boldsymbol{\bar{\pi}}}(s_0, \ra^i) \big] - C_{\boldsymbol{\bar{\pi}}}\text{{\normalfont D}}_{\text{KL}}\big( \bar{\pi}^i(\cdot|s_0), \pi^i(\cdot|s_0) \big) \nonumber
\end{align}
has a strictly positive directional derivative in the direction of $\hat{\pi}^i(\cdot|s_0)$. Thus, there exists a policy $\widetilde{\pi}^i(\cdot|s_0)$, sufficiently close to $\bar{\pi}^i(\cdot|s_0)$ on the path joining it with $\hat{\pi}^i(\cdot|s_0)$, for which
\begin{align}
    \rho_{\boldsymbol{\bar{\pi}}}(s_0)\E_{\ra^i\sim\widetilde{\pi}^i}\big[ A^i_{\boldsymbol{\bar{\pi}}}(s_0, \ra^i) \big] - C_{\boldsymbol{\bar{\pi}}}\text{{\normalfont D}}_{\text{KL}}\big( \bar{\pi}^i(\cdot|s_0), \widetilde{\pi}^i(\cdot|s_0) \big) > 0.\nonumber
\end{align}
Let ${\pi}^i_*$ be a policy such that $\pi^i_*(\cdot|s_0) = \widetilde{\pi}^i(\cdot|s_0)$, and $\pi^i_*(\cdot|s) = \bar{\pi}^i(\cdot|s)$ for states $s\neq s_0$. As for these states we have
\begin{align}
    \rho_{\boldsymbol{\bar{\pi}}}(s)\E_{\ra^i\sim\pi^i_*}\big[ A^i_{\boldsymbol{\bar{\pi}}}(s, \ra^i) \big] = 
    \rho_{\boldsymbol{\bar{\pi}}}(s)\E_{\ra^i\sim\bar{\pi}^i}\big[ A^i_{\boldsymbol{\bar{\pi}}}(s, \ra^i) \big] = 0, \ \
    \text{and} \ \ \text{{\normalfont D}}_{\text{KL}}(\bar{\pi}^i(\cdot|s), \pi^i_*(\cdot|s)) = 0, \nonumber
\end{align}
it follows that
\begin{align}
    L_{\boldsymbol{\bar{\pi}}}^i(\pi^i_*) - C_{\boldsymbol{\bar{\pi}}}\text{{\normalfont D}}_{\text{KL}}^{\text{max}}(\bar{\pi}^i, \pi^i_*) 
    &= \rho_{\boldsymbol{\bar{\pi}}}(s_0)\E_{\ra^i\sim\widetilde{\pi}^i}\big[ A^i_{\boldsymbol{\bar{\pi}}}(s_0, \ra^i) \big] - C_{\boldsymbol{\bar{\pi}}}\text{{\normalfont D}}_{\text{KL}}\big( \bar{\pi}^i(\cdot|s_0), \widetilde{\pi}^i(\cdot|s_0) \big) \nonumber\\
    &> 0 = L_{\boldsymbol{\bar{\pi}}}^i(\bar{\pi}^i) - C_{\boldsymbol{\bar{\pi}}}\text{{\normalfont D}}_{\text{KL}}^{\text{max}}(\bar{\pi}^i, \bar{\pi}^i), \nonumber
\end{align}
which is a contradiction with TR-stationarity of $\boldsymbol{\bar{\pi}}$. Hence, the claim of Equation (\ref{eq:bold-claim}) is proved. 
\textbf{Step 3 (optimality). }
Now, for a fixed joint policy $\boldsymbol{\bar{\pi}}^{-i}$ of other agents, $\bar{\pi}^i$ satisfies
\begin{align}
    \bar{\pi}^{i} = \argmax_{\pi^i}\E_{\ra^i\sim\pi^i}\big[ A^i_{\boldsymbol{\bar{\pi}}}(s, \ra^i) \big] = \argmax_{\pi^i}\E_{\ra^i\sim\pi^i}\big[ Q^i_{\boldsymbol{\bar{\pi}}}(s, \ra^i) \big], \ \forall s\in\mathcal{S}, \nonumber
\end{align}
which is the Bellman optimality equation \citep{sutton2018reinforcement}. Hence, for a fixed joint policy $\boldsymbol{\bar{\pi}}^{-i}$, the policy $\bar{\pi}^i$ is optimal:
\begin{align}
    \bar{\pi}^i = \argmax_{\pi^i}J(\pi^i, \boldsymbol{\bar{\pi}}^{-i}). \nonumber
\end{align}
As agent $i$ was chosen arbitrarily, $\boldsymbol{\bar{\pi}}$ is a Nash equilibrium.
\end{proof}

\clearpage
\section{HATRPO and HAPPO}
\label{appendix:matrpo-mappo}

\subsection{Proof of Proposition \ref{lemma:advantage-estimation}}
\label{appendix:proof-of-advantage-estimation}
\advantageestimation*
\begin{proof}
    We have 
    \begin{align}
        &= 
        \E_{\rva\sim\boldsymbol{\pi}}\left[  
         \frac{\boldsymbol{\bar{\pi}}^{i_{1:m}}(\rva^{i_{1:m}}|s)}
        {\boldsymbol{\pi}^{i_{1:m}}(\rva^{i_{1:m}}|s)}A_{\boldsymbol{\pi}}(s, \rva) 
        - \frac{\boldsymbol{\bar{\pi}}^{i_{1:m-1}}(\rva^{i_{1:m-1}}|s)}
        {\boldsymbol{\pi}^{i_{1:m-1}}(\rva^{i_{1:m-1}}|s)}A_{\boldsymbol{\pi}}(s, \rva) 
        \right]\nonumber\\
        &=  \E_{\rva^{i_{1:m}}\sim\boldsymbol{\pi}^{i_{1:m}},
        \rva^{-i_{1:m}}\sim\boldsymbol{\pi}^{-i_{1:m}}
        }\left[  
         \frac{\boldsymbol{\bar{\pi}}^{i_{1:m}}(\rva^{i_{1:m}}|s)}
        {\boldsymbol{\pi}^{i_{1:m}}(\rva^{i_{1:m}}|s)}A_{\boldsymbol{\pi}}(s, \rva^{i_{1:m}}, \rva^{-i_{1:m}}) 
        \right] \nonumber\\
        & \ \ \ -\E_{\rva^{i_{1:m-1}}\sim\boldsymbol{\pi}^{i_{1:m-1}},
        \rva^{-i_{1:m-1}}\sim\boldsymbol{\pi}^{-i_{1:m-1}}
        }\left[  \frac{\boldsymbol{\bar{\pi}}^{i_{1:m-1}}(\rva^{i_{1:m-1}}|s)}
        {\boldsymbol{\pi}^{i_{1:m-1}}(\rva^{i_{1:m-1}}|s)}A_{\boldsymbol{\pi}}(s, 
        \rva^{i_{1:m-1}}, \rva^{-i_{1:m-1}}) 
        \right]\nonumber\\
        &=  \E_{\rva^{i_{1:m}}\sim\boldsymbol{\bar{\pi}}^{i_{1:m}},
        \rva^{-i_{1:m}}\sim\boldsymbol{\pi}^{-i_{1:m}}
        }\left[  
        A_{\boldsymbol{\pi}}(s, \rva^{i_{1:m}}, \rva^{-i_{1:m}}) 
        \right] \nonumber\\
        & \ \ \ -\E_{\rva^{i_{1:m-1}}\sim\boldsymbol{\bar{\pi}}^{i_{1:m-1}},
        \rva^{-i_{1:m-1}}\sim\boldsymbol{\pi}^{-i_{1:m-1}}
        }\left[ A_{\boldsymbol{\pi}}(s, 
        \rva^{i_{1:m-1}}, \rva^{-i_{1:m-1}}) 
        \right]\nonumber\\
        &=  \E_{\rva^{i_{1:m}}\sim\boldsymbol{\bar{\pi}}^{i_{1:m}}
        }\left[  
        \E_{\rva^{-i_{1:m}}\sim\boldsymbol{\pi}^{-i_{1:m}}}\left[
        A_{\boldsymbol{\pi}}(s, \rva^{i_{1:m}}, \rva^{-i_{1:m}}) 
        \right] \right]\nonumber\\
        & \ \ \ -\E_{\rva^{i_{1:m-1}}\sim\boldsymbol{\bar{\pi}}^{i_{1:m-1}} }
        \left[ \E_{\rva^{-i_{1:m-1}}\sim\boldsymbol{\pi}^{-i_{1:m-1}}}
        \left[A_{\boldsymbol{\pi}}(s, 
        \rva^{i_{1:m-1}}, \rva^{-i_{1:m-1}}) 
        \right] \right]\nonumber
    \end{align}
    \begin{align}
        &=  \E_{\rva^{i_{1:m}}\sim\boldsymbol{\bar{\pi}}^{i_{1:m}}
        }\left[  
        A_{\boldsymbol{\pi}}^{i_{1:m}}(s, \rva^{i_{1:m}}) 
        \right]\nonumber\\
        & \ \ \ -\E_{\rva^{i_{1:m-1}}\sim\boldsymbol{\bar{\pi}}^{i_{1:m-1}} }
        \left[ A_{\boldsymbol{\pi}}^{i_{1:m-1}}(s, 
        \rva^{i_{1:m-1}}) 
        \right], \nonumber\\
        &\text{which, by Lemma \ref{lemma:maadlemma}, equals}\nonumber\\
        &= \E_{\rva^{i_{1:m}}\sim\boldsymbol{\bar{\pi}}^{i_{1:m}}
        }\left[ A^{i_{1:m}}_{\boldsymbol{\pi}}(s, \rva^{i_{1:m}}) 
        - A^{i_{1:m-1}}_{\boldsymbol{\pi}}(s, \rva^{i_{1:m-1}}) \right]\nonumber\\
        &= \E_{\rva^{i_{1:m}}\sim\boldsymbol{\bar{\pi}}^{i_{1:m}}
        }\left[ A^{i_{m}}_{\boldsymbol{\pi}}(s, \rva^{i_{1:m-1}}, \ra^{i_{m}}) \right].\nonumber
    \end{align}
\end{proof}

\subsection{Derivation of the gradient estimator for HATRPO}
\label{appendix:derive-grad}
\begin{align}
    &\nabla_{\theta^{i_m}}\E_{\rs\sim\rho_{\boldsymbol{\pi}_{\vtheta_k}}, \rva\sim\boldsymbol{\pi}_{\vtheta_k}}\Bigg[ \Bigg(\frac{\pi^{i_m}_{\theta^{i_m}}(\ra^{i_m}|\rs)}{\pi^{i_m}_{\theta^{i_m}_k}(\ra^{i_m}|\rs)}-1\Bigg)M^{i_{1:m}}(\rs, \rva)\Bigg]\nonumber\\
    &= \nabla_{\theta^{i_m}}\E_{\rs\sim\rho_{\boldsymbol{\pi}_{\vtheta_k}}, \rva\sim\boldsymbol{\pi}_{\vtheta_k}}\Bigg[ \frac{\pi^{i_m}_{\theta^{i_m}}(\ra^{i_m}|\rs)}{\pi^{i_m}_{\theta^{i_m}_k}(\ra^{i_m}|\rs)}M^{i_{1:m}}(\rs, \rva)\Bigg]
    - 
    \nabla_{\theta^{i_m}}\E_{\rs\sim\rho_{\boldsymbol{\pi}_{\vtheta_k}}, \rva\sim\boldsymbol{\pi}_{\vtheta_k}}\Bigg[ M^{i_{1:m}}(\rs, \rva)\Bigg]\nonumber\\
    &= \E_{\rs\sim\rho_{\boldsymbol{\pi}_{\vtheta_k}}, \rva\sim\boldsymbol{\pi}_{\vtheta_k}}\Bigg[ \frac{\nabla_{\theta^{i_m}}\pi^{i_m}_{\theta^{i_m}}(\ra^{i_m}|\rs)}{\pi^{i_m}_{\theta^{i_m}_k}(\ra^{i_m}|\rs)}M^{i_{1:m}}(\rs, \rva)\Bigg]\nonumber\\
    &= \E_{\rs\sim\rho_{\boldsymbol{\pi}_{\vtheta_k}}, \rva\sim\boldsymbol{\pi}_{\vtheta_k}}\Bigg[ \frac{\pi^{i_m}_{\theta^{i_m}}(\ra^{i_m}|\rs)}{\pi^{i_m}_{\theta^{i_m}_k}(\ra^{i_m}|\rs)}\nabla_{\theta^{i_m}}\log\pi^{i_m}_{\theta^{i_m}}(\ra^{i_m}|\rs)M^{i_{1:m}}(\rs, \rva)\Bigg]. \nonumber
\end{align}
Evaluated at $\theta^{i_m}= \theta^{i_m}_k$, the above expression equals
\begin{align}
    \E_{\rs\sim\rho_{\boldsymbol{\pi}_{\vtheta_k}}, \rva\sim\boldsymbol{\pi}_{\vtheta_k}}\Big[ M^{i_{1:m}}(\rs, \rva)\nabla_{\theta^{i_m}}\log\pi^{i_m}_{\theta^{i_m}}(\ra^{i_m}|\rs)\big|_{\theta^{i_m}=\theta^{i_m}_k}\Big], \nonumber
\end{align}
which finishes the derivation.
\clearpage
\subsection{Pseudocode of HATRPO}
\label{appendix:matrpo}
\begin{algorithm}[!htbp]
    \caption{HATRPO}
    \label{MATRPO}
    \begin{algorithmic}[1]
    
    \STATE \textbf{Input:} Stepsize $\alpha$, batch size $B$, number of: agents $n$, episodes $K$, steps per episode $T$, possible steps in line search $L$, line search acceptance threshold $\kappa$.\\
    \STATE \textbf{Initialize:} Actor networks $\{\theta^{i}_{0}, \ \forall i\in \mathcal{N}\}$, Global V-value network $\{\phi_{0}\}$, Replay buffer $\mathcal{B}$\\
    
    \FOR{$k = 0,1,\dots,K-1$}
        \STATE Collect a set of trajectories by running the joint policy $\boldsymbol{\pi}_{\vtheta_{k}} =(\pi^{1}_{\theta^{1}_{k}}, \dots,  \pi^{n}_{\theta^{n}_{k}})$.
        \STATE Push transitions $\{(o^i_{t},a^i_{t},o^i_{t+1},r_t), \forall i\in \mathcal{N},t\in T\}$ into $\mathcal{B}$.
        \STATE Sample a random minibatch of $B$ transitions from $\mathcal{B}$.
        \STATE Compute advantage function $\hat{A}(\rs, \rva)$ based on global V-value network with GAE.
        \STATE Draw a random permutation of agents $i_{1:n}$.
        \STATE Set $M^{i_{1}}(\rs, \rva) = \hat{A}(\rs, \rva)$.
        \FOR{agent $i_{m} = i_{1}, \dots, i_{n}$}
            \STATE Estimate the gradient of the agent's maximisation objective\\
            \begin{center}
                $\hat{\vg}^{i_{m}}_{k} =  \frac{1}{B}\sum\limits^{B}_{b=1}\sum\limits^{T}_{t=1}\nabla_{\theta^{i_{m}}_{k} }\log\pi^{i_{m}}_{\theta^{i_{m}}_k}\left( a^{i_{m}}_{t}\mid o_{t}^{i_{m}} \right)M^{i_{1:m}}(s_{t}, \va_{t})$.
            \end{center}
            Use the conjugate gradient algorithm to compute the update direction
            \begin{center}
                $\vx^{i_m}_k \approx (\hat{\mH}^{i_m }_k)^{-1} \vg^{i_m}_k$,
            \end{center}
             where $\hat{\mH}^{i_m}_k$ is the Hessian of the average KL-divergence\\
            \begin{center}
                $\frac{1}{BT}\sum\limits_{b=1}^{B}\sum\limits_{t=1}^{T}\text{{\normalfont D}}_{\text{KL}}\left(\pi^{i_m}_{\theta^{i_m}_k}(\cdot|o^{i_m}_t), \pi^{i_m}_{\theta^{i_m}}(\cdot|o^{i_m}_t)\right)$.
            \end{center}
            \STATE Estimate the maximal step size allowing for meeting the KL-constraint\\
            \begin{center}
                $\hat{\beta}^{i_m}_k \approx \sqrt{\dfrac{2\delta}{(\hat{\vx}^{i_m}_k)^{T}\hat{\mH}^{i_m }_k \hat{\vx}^{i_m}_k}}$.
            \end{center}
            \STATE Update agent $i_m$'s policy by\\
            \begin{center}
                $ \theta^{i_m}_{k+1} = \theta^{i_m}_k  + \alpha^j \hat{\beta}^{i_m}_k \hat{\vx}^{i_m}_k$,
            \end{center}
            where $j\in\{0, 1, \dots, L\}$ is the smallest such $j$ which improves the sample loss by at least $\kappa \alpha^j \hat{\beta}^{i_m}_k \hat{\vx}^{i_m}_k \cdot \hat{\vg}^{i_m}_k$, found by the backtracking line search. 
            \STATE Compute $M^{i_{1:m+1}}(\rs, \rva) = \frac{\pi^{i_{m}}_{\theta^{i_{m}}_{k+1}}\left(\ra^{i_{m}} \mid \ro^{i_{m}}\right)}{\pi^{i_{m}}_{\theta^{i_{m}}_{k}}\left(\ra^{i_{m}} \mid \ro^{i_{m}}\right)} M^{i_{1:m}}(\rs_{t}, \rva_{t})$.  //\textcolor{teal}{Unless $m=n$.}
        \ENDFOR
        
        \STATE Update V-value network by following formula:
        \begin{center}$\phi_{k+1}=\arg \min _{\phi} \frac{1}{BT} \sum\limits^{B}_{b =1 } \sum\limits_{t=0}^{T}\left(V_{\phi}(s_{t}) - \hat{R_{t}}\right)^{2}$
        \end{center}
    \ENDFOR
    \end{algorithmic}
\end{algorithm}

\clearpage
\subsection{Pseudocode of HAPPO}
\label{appendix:mappo}

\begin{algorithm}[!htbp]
    \caption{HAPPO}
    \label{MAPPO}
    \begin{algorithmic}[1]
    
    \STATE \textbf{Input:} Stepsize $\alpha$, batch size $B$, number of: agents $n$, episodes $K$, steps per episode $T$.\\
    \STATE \textbf{Initialize:} Actor networks $\{\theta^{i}_{0}, \ \forall i\in \mathcal{N}\}$, Global V-value network $\{\phi_{0}\}$, Replay buffer $\mathcal{B}$\\
    
    \FOR{$k = 0,1,\dots,K-1$}
        \STATE Collect a set of trajectories by running the joint policy $\boldsymbol{\pi}_{\vtheta_{k}} =(\pi^{1}_{\theta^{1}_{k}}, \dots,  \pi^{n}_{\theta^{n}_{k}})$.
        \STATE Push transitions $\{(o^i_{t},a^i_{t},o^i_{t+1},r_t), \forall i\in \mathcal{N},t\in T\}$ into $\mathcal{B}$.
        \STATE Sample a random minibatch of $B$ transitions from $\mathcal{B}$.
        \STATE Compute advantage function $\hat{A}(\rs, \rva)$ based on global V-value network with GAE.
        \STATE Draw a random permutation of agents $i_{1:n}$.
        \STATE Set $M^{i_{1}}(\rs, \rva) = \hat{A}(\rs, \rva)$.
        \FOR{agent $i_{m} = i_{1}, \dots, i_{n}$}
            \STATE Update actor $i^{m}$ with $\theta^{i_{m}}_{k+1}$, the argmax of the PPO-Clip objective\\
                $ \frac{1}{BT} \sum\limits^{B}_{b =1} \sum\limits_{t=0}^{T} \min \left( \frac{\pi^{i_{m}}_{\theta^{i_{m}}}\left(a^{i_{m}}_{t} \mid o^{i_{m}}_{t}\right)}{\pi^{i_{m}}_{\theta^{i_{m}}_{k}}\left(a^{i_{m}}_{t} \mid o^{i_{m}}_{t}\right)}  M^{i_{1:m}}(s_{t}, \va_{t}), \ 
                \text{clip}\bigg( \frac{\pi^{i_{m}}_{\theta^{i_{m}}}\left(a^{i_{m}}_{t} \mid o^{i_{m}}_{t}\right)}{\pi^{i_{m}}_{\theta^{i_{m}}_{k}}\left(a^{i_{m}}_{t} \mid o^{i_{m}}_{t}\right)}, 1\pm\epsilon\bigg) M^{i_{1:m}}(s_{t}, \va_{t})\right)$.
            \STATE Compute $M^{i_{1:m+1}}(\rs, \rva) = \frac{\pi^{i_{m}}_{\theta^{i_{m}}_{k+1}}\left(\ra^{i_{m}} \mid \ro^{i_{m}}\right)}{\pi^{i_{m}}_{\theta^{i_{m}}_{k}}\left(\ra^{i_{m}} \mid \ro^{i_{m}}\right)} M^{i_{1:m}}(\rs, \rva)$.  //\textcolor{teal}{Unless $m=n$.}
        \ENDFOR
        
        \STATE Update V-value network by following formula:
        \begin{center}
            $\phi_{k+1}=\arg \min _{\phi} \frac{1}{BT} \sum\limits^{B}_{b =1 } \sum\limits_{t=0}^{T}\left(V_{\phi}(s_{t}) - \hat{R_{t}}\right)^{2}$
        \end{center}
    \ENDFOR
    \end{algorithmic}
\end{algorithm}

\section{Hyper-parameter Settings for Experiments}
\label{appendix:experiments}

\begin{table}[!htbp]
 \renewcommand{\arraystretch}{1.2}
  \centering
  \begin{threeparttable}
  
  \label{table1}
    \begin{tabular}{cc|cc|cc}
    \toprule
    hyperparameters & value & hyperparameters & value & hyperparameters & value\\
    \midrule
     critic lr &  5e-4    &    optimizer & Adam       &    stacked-frames & 1\\
      gamma & 0.99       &       optim eps & 1e-5    &     batch size & 3200\\
      gain & 0.01        &        hidden layer & 1    &     training threads & 32 \\
    actor network & mlp    &     num mini-batch & 1  &     rollout threads & 8\\
       hypernet embed & 64   &   max grad norm & 10      &    episode length & 400\\
    activation & ReLU       &   hidden layer dim & 64   &    use huber loss & True\\
    \bottomrule
    \end{tabular}
    
    \end{threeparttable}
\caption{Common hyperparameters used in the SMAC domain.}
\end{table}
\begin{table}[!htbp]
 \renewcommand{\arraystretch}{1.2}
  \centering
  \begin{threeparttable}
  
  \label{table2}
    \begin{tabular}{c|ccc}
    \toprule
    Algorithms & MAPPO & HAPPO & HATRPO\\
    \midrule
    actor lr &  5e-4 & 5e-4 & /\\
    ppo epoch & 5 & 5 & / \\
    kl-threshold & / & / & 0.06 \\
    ppo-clip & 0.2 & 0.2 & / \\
    accept ratio & / & / & 0.5\\
    \bottomrule
    \end{tabular}
    
    \end{threeparttable}
\caption{Different hyperparameters used for MAPPO, HAPPO and HATRPO in the SMAC }
\end{table}

\begin{table}[!htbp]
 \renewcommand{\arraystretch}{1.2}
  \centering
  \begin{threeparttable}
  
  \label{table3}
    \begin{tabular}{cc|cc|cc}
    \toprule
    hyperparameters & value & hyperparameters & value & hyperparameters & value\\
    \midrule
     critic lr &  5e-3    &    optimizer & Adam       &  num mini-batch & 1 \\
      gamma & 0.99       &       optim eps & 1e-5    &     batch size & 4000\\
      gain & 0.01        &        hidden layer & 1    &     training threads & 8 \\
    std y coef & 0.5    &         actor network & mlp    &     rollout threads & 4\\
   std x coef & 1        &       max grad norm & 10    &   episode length & 1000\\
    activation & ReLU      &     hidden layer dim & 64  & eval episode & 32  \\ 

    \bottomrule
    \end{tabular}
    
    \end{threeparttable}
\caption{Common hyperparameters used  for IPPO, MAPPO, HAPPO, HATRPO in the Multi-Agent MuJoCo domain}
\end{table}

\begin{table}[!htbp]
 \renewcommand{\arraystretch}{1.2}
  \centering
  \begin{threeparttable}
  
  \label{table4}
    \begin{tabular}{c|cccc}
    \toprule
    Algorithms & IPPO & MAPPO & HAPPO & HATRPO\\
    \midrule
    actor lr&  5e-6 &  5e-6 & 5e-6 & /\\
    ppo epoch & 5& 5 & 5 & / \\
    kl-threshold & /& / & / & [1e-4,1.5e-4,7e-4,1e-3] \\
    ppo-clip & 0.2 & 0.2 & 0.2 & / \\
    accept ratio & / & / & / & 0.5\\
    \bottomrule
    \end{tabular}
    
    \end{threeparttable}
\caption{Different hyperparameters used for IPPO, MAPPO, HAPPO and HATRPO in the Multi-Agent MuJoCo domain.}
\end{table}

\begin{table}[!htbp]
 \renewcommand{\arraystretch}{1.2}
  \centering
  \begin{threeparttable}
  
  \label{table5}
    \begin{tabular}{cc|cc|cc}
    \toprule
    hyperparameters & value & hyperparameters & value & hyperparameters & value\\
    \midrule
    actor lr &  1e-3    &    optimizer & Adam       &  buffer size & 1e6 \\
    critic lr &  1e-3       &       exploration noise & 0.1    &     batch size & 200\\
    gamma & 0.99        &        step-per-epoch & 50000    &     training num & 16 \\
    tau & 5e-2    &         step-per-collector & 2000    &     test num & 10\\
    start-timesteps& 25000        &       update-per-step & 0.05    &   n-step & 1   \\
    epoch & 200              &          hidden-sizes & [256,256]        & episode length & 1000 \\
    \bottomrule
    \end{tabular}
    
    \end{threeparttable}
\caption{Hyper-parameter used for MADDPG in the Multi-Agent MuJoCo domain}
\end{table}

The implementation of MADDPG is adopted from the Tianshou framework \citep{weng2021tianshou}, all hyperparameters  left unchanged at the  origin best-performing status. 

\begin{table}[!htbp]
 \renewcommand{\arraystretch}{1.2}
  \centering
  \begin{threeparttable}
  
  \label{table6}
    \begin{tabular}{cc|cc|cc}
    \toprule
    task & value & task & value & task & value\\
    \midrule
    Ant(2x4) & 1e-4             &       Ant(4x2) & 1e-4                 &    Ant(8x1) & 1e-4  \\
    HalfCheetah(2x3) &  1e-4    &       HalfCheetah(3x2) & 1e-4         &  HalfCheetah(6x1)& 1e-4 \\
    Walker(2x3) & 1e-3          &       Walker(3x2) & 1e-4              &     Walker(6x1) & 1e-4 \\
    Humanoid(17x1) & 7e-4       &       Humanoid-Standup(17x1) & 1e-4   &     Swimmer(10x2) & 1.5e-4 \\

    \bottomrule
    \end{tabular}
    
    \end{threeparttable}
\caption{Parameter kl-threshold used for HATRPO in the Multi-Agent MuJoCo domain}
\end{table}

\begin{table}[!htbp]
 \renewcommand{\arraystretch}{1.2}
  \centering
  \begin{threeparttable}
  
  \label{table7}
    \begin{tabular}{cc|cc|cc}
    \toprule
    hyperparameters & value & hyperparameters & value & hyperparameters & value\\
    \midrule
       lr &  7e-4    &    optimizer & Adam       &  num mini-batch & 1 \\
      gamma & 0.99       &       optim eps & 1e-5    &     eval episode & 32\\
      gain & 0.01        &        hidden layer & 1    &     training threads & 1 \\
      max grad norm & 10  &        actor network & rnn    &     rollout threads & 128\\
      hidden layer dim & 64   &   activation & ReLU     &   episode length & 25\\

    \bottomrule
    \end{tabular}
    
    \end{threeparttable}
\caption{Common hyperparameters used  for  MAPPO, HAPPO in the Multi-Agent Particle Environment}
\end{table}

\section{Ablation Experiments}
\label{appendix:ablations}
In this section, we conduct ablation study to investigate the importance of two key novelties that our HATRPO introduced; they are heterogeneity of agents' parameters and the randomisation of order of agents in the sequential update scheme. 
We compare the performance of original HATRPO with a version that shares parameters, and with a version where the order in sequential update scheme is fixed throughout training. We run the experiments on two MAMuJoCo tasks (2-agent \& 6-agent).
\begin{figure*}[!htbp]
 	\begin{subfigure}{0.5\linewidth}
 			\centering
 			\includegraphics[width=2.3in]{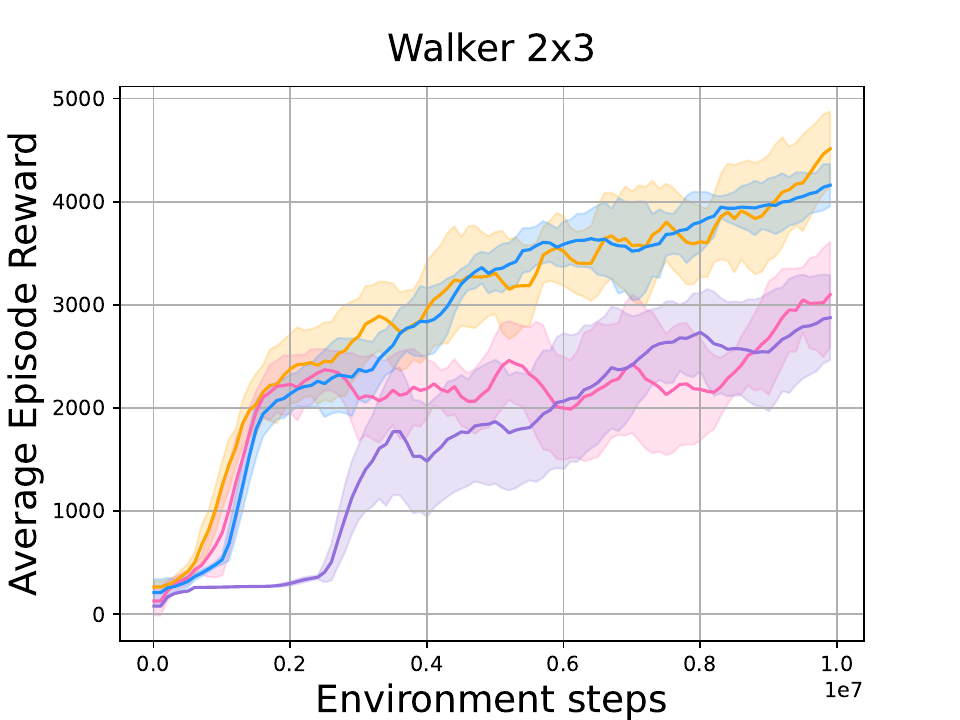}
 			\caption{2-Agent Walker}
 			\label{fig:exp_walker_v2}
 		\end{subfigure}
 	\begin{subfigure}{0.5\linewidth}
 			\centering
 			\includegraphics[width=2.3in]{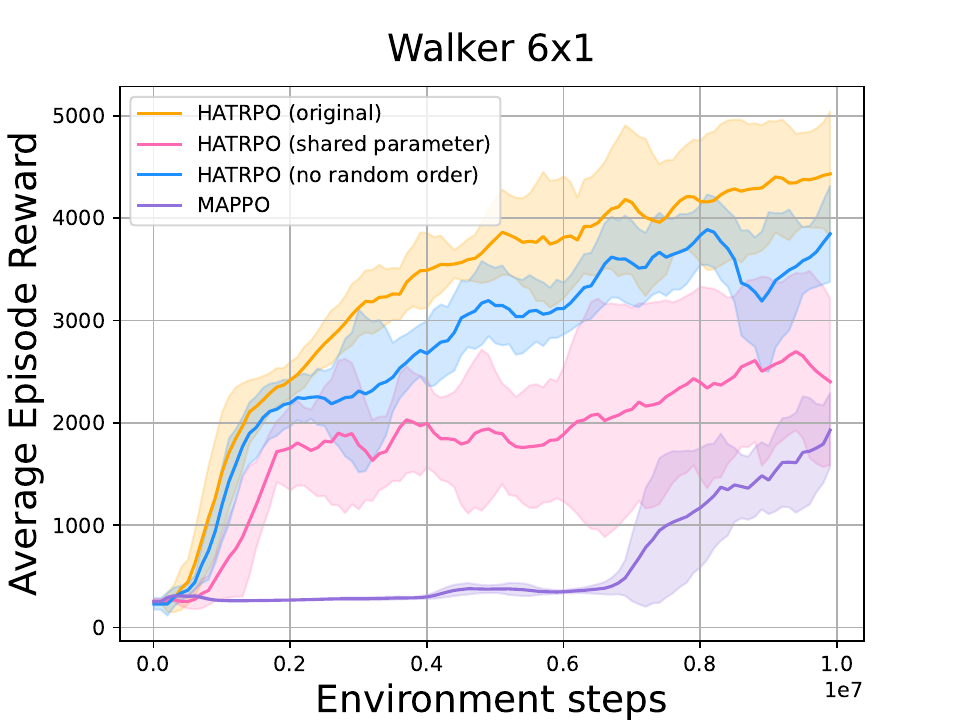}
 			\caption{6-Agent Walker}
 			\label{fig:exp_walker_v2}
 		\end{subfigure}
    \vspace{-0pt}
 	\caption{\normalsize Performance comparison between original HATRPO, and its modified versions: HATRPO with parameter sharing, and HATRPO without randomisation of the sequential update scheme.} 
 	\label{fig:mujoco2}
 \end{figure*}  
 
The experiments reveal that, although the modified versions of HATRPO still outperform baselines (represented by MAPPO), their deviation from the theory harms performance. In particular, \textsl{parameter sharing} introduces extra variance to training, harms the monotonic improvement property (Theorem \ref{theorem:monotonic-matrpo} assumes heterogeneity), and causes HATRPO to converge to suboptimal policies. The suboptimality is more severe in the task with more agents, as suggested by Proposition \ref{prop:suboptimal}. Similarly, \textsl{fixed order in the sequential update scheme} negatively affected the performance at convergence (especially in the task with $6$ agents), as suggested by Proposition \ref{proposition:convergence-matrpo}. We conclude that the fine performance of HATRPO relies strongly on the close connection between theory an implementation. The connection becomes increasingly important with the growing number of agents.

\section{Ablation Study on Non-Parameter Sharing MAPPO/IPPO}
\label{appendix:no_param_share}
We verify that heterogeneous-agent trust region algorithms (represented by HATRPO) achieve superior performance to, originally homogeneous, IPPO/MAPPO algorithms with the parameter-sharing function switched off.
\begin{figure*}[!htbp]
 	\begin{subfigure}{0.5\linewidth}
 			\centering
 			\includegraphics[width=2.3in]{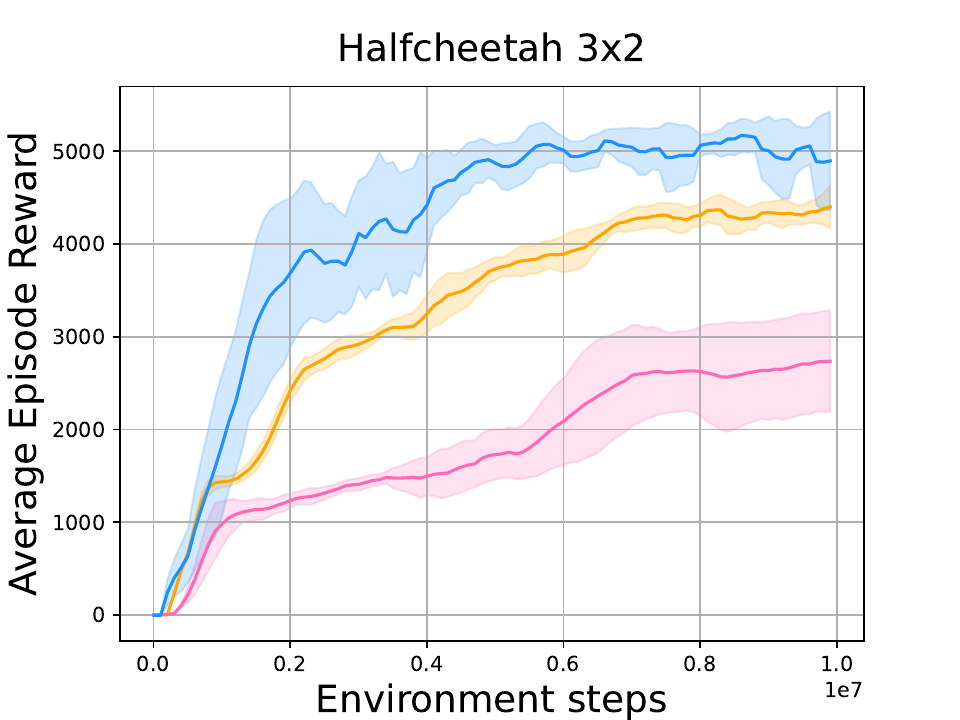}
 			\caption{3-Agent HalfCheetah}
 			\label{fig:exp_walker_v2}
 		\end{subfigure}
 	\begin{subfigure}{0.5\linewidth}
 			\centering
 			\includegraphics[width=2.3in]{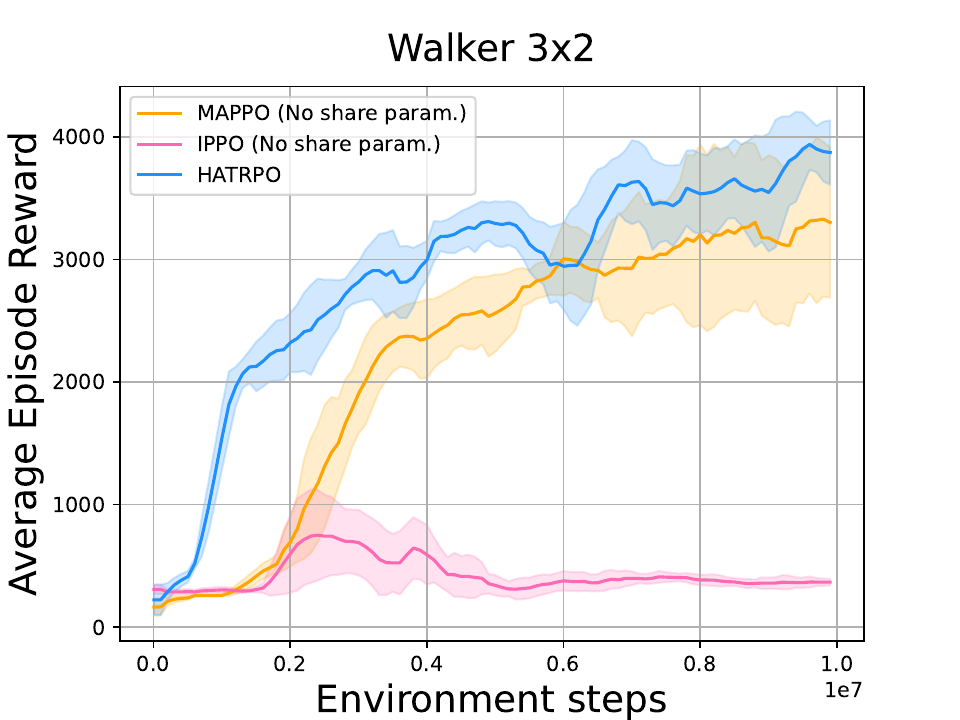}
 			\caption{3-Agent Walker}
 			\label{fig:exp_walker_v2}
 		\end{subfigure}
    \vspace{-0pt}
 	\caption{\normalsize Performance comparison between HATRPO and MAPPO/IPPO without parameter sharing. HATRPO significantly outperforms its counterparts.} 
 	\label{fig:mujoco2}
 \end{figure*}  
 
 \clearpage
\section{Multi-Agent Particle Environment Experiments}
\label{appendix:no_param_share}
We verify that heterogeneous-agent trust region algorithms (represented here by HAPPO) quickly solve cooperative MPE tasks.
\begin{figure*}[!htbp]
 	\begin{subfigure}{0.5\linewidth}
 			\centering
 			\includegraphics[width=2.3in]{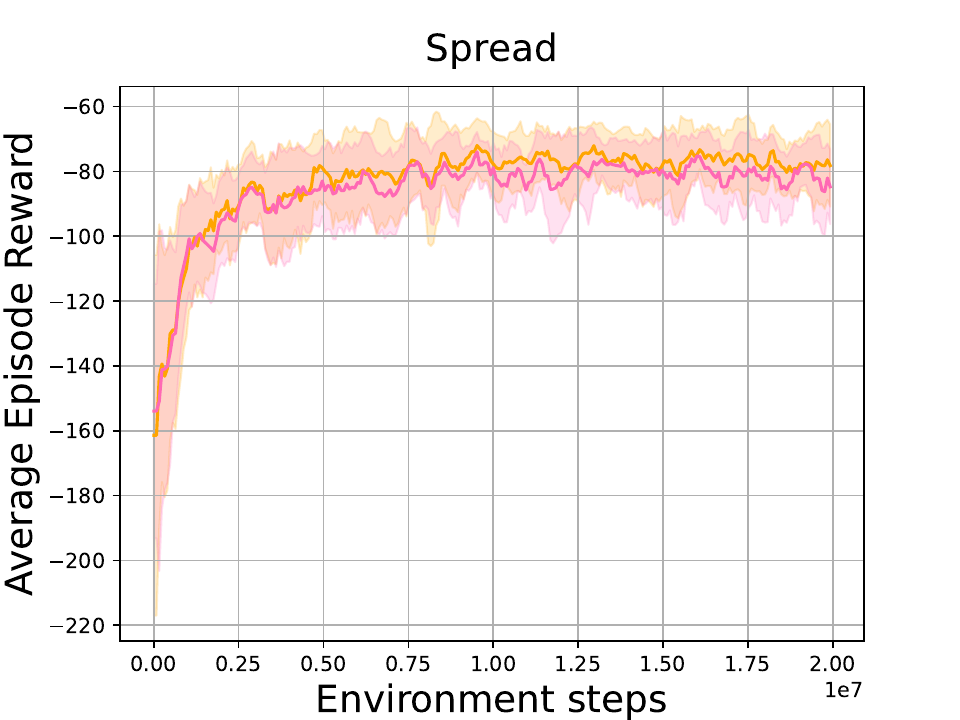}
 			\caption{Simple Spread}
 			\label{fig:exp_spread}
 		\end{subfigure}
 	\begin{subfigure}{0.5\linewidth}
 			\centering
 			\includegraphics[width=2.3in]{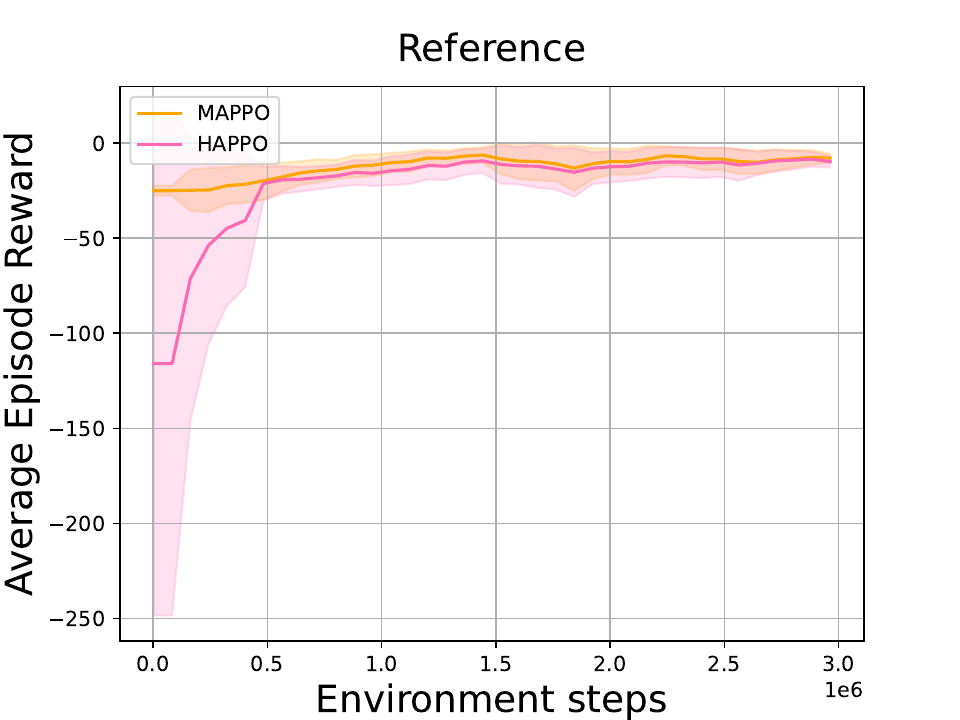}
 			\caption{Simple Reference}
 			\label{fig:exp_reference}
 		\end{subfigure}
    \vspace{-0pt}
 	\caption{\normalsize Performance comparison between MAPPO and HAPPO on MPE.} 
 	\label{fig:mpe}
 \end{figure*}  
  

    



\end{document}